\definecolor{codegreen}{rgb}{0,0.5,0}
\definecolor{codered}{rgb}{0.7,0.1,0.1}
\definecolor{codegray}{rgb}{0.5,0.5,0.5}
\definecolor{codepurple}{rgb}{0.58,0,0.82}
\definecolor{backcolour}{rgb}{0.95,0.95,0.95}
\lstdefinestyle{python}{
    language=Python,
    basicstyle=\ttfamily\scriptsize,
    backgroundcolor=\color{backcolour},   
    commentstyle=\color{codegreen}\ttfamily\slshape,
    keywordstyle=\color{codered}\bfseries,
    numberstyle=\tiny\color{codegray},
    stringstyle=\color{codepurple},    
    xleftmargin=1em,
    framexleftmargin=0.3em,
    breakatwhitespace=false,         
    breaklines=true,                 
    captionpos=b,  %
    keepspaces=true,                 
    numbers=left,                    
    numbersep=4pt,                  
    showspaces=false,                
    showstringspaces=false,
    showtabs=true,                  
    tabsize=1,
    fancyvrb=true
}
\newtheorem{theorem}{Theorem}[section]
\newtheorem{proposition}[theorem]{Proposition}
\newcommand{\minor}[1]{{\color{black} #1}}
\newcommand{\edit}[1]{{\color{black} #1}}
\newcommand{\editfinal}[1]{{\color{black} #1}}
\newcommand{\add}[1]{{\color{black} #1}}
\title{Integrating Symmetry into Differentiable \\ Planning with Steerable Convolutions}
\author{Linfeng Zhao\thanks{Corresponding Author: Linfeng Zhao \url{zhao.linf@northeastern.edu}.}~ , Xupeng Zhu\thanks{Second and third authors contributed equally.}~ , Lingzhi Kong$^\dagger$, Robin Walters, Lawson L.S. Wong \\
Khoury College of Computer Sciences, Northeastern University \\
}
\begin{document}

\maketitle

\begin{abstract}

In this paper, we study a principled approach on incorporating group symmetry into end-to-end differentiable planning algorithms and explore the benefits of symmetry in planning.
To achieve this, we draw inspiration from equivariant convolution networks and model the path planning problem as a set of signals over grids.
We demonstrate that value iteration can be treated as a linear equivariant operator, which is effectively a steerable convolution. 
Building upon Value Iteration Networks (VIN), we propose a new Symmetric Planning (SymPlan) framework that incorporates rotation and reflection symmetry using steerable convolution networks.
We evaluate our approach on four tasks: 2D navigation, visual navigation, 2 degrees of freedom (2-DOF) configuration space manipulation, and 2-DOF workspace manipulation.
Our experimental results show that our symmetric planning algorithms significantly improve training efficiency and generalization performance compared to non-equivariant baselines, including VINs and GPPN.

\end{abstract}

\addtocontents{toc}{\protect\setcounter{tocdepth}{-1}}

\section{Introduction}

\begin{wrapfigure}{r}{0.38\textwidth}
\vspace{-50pt}
  \begin{center}
	\includegraphics[width=1.\linewidth]{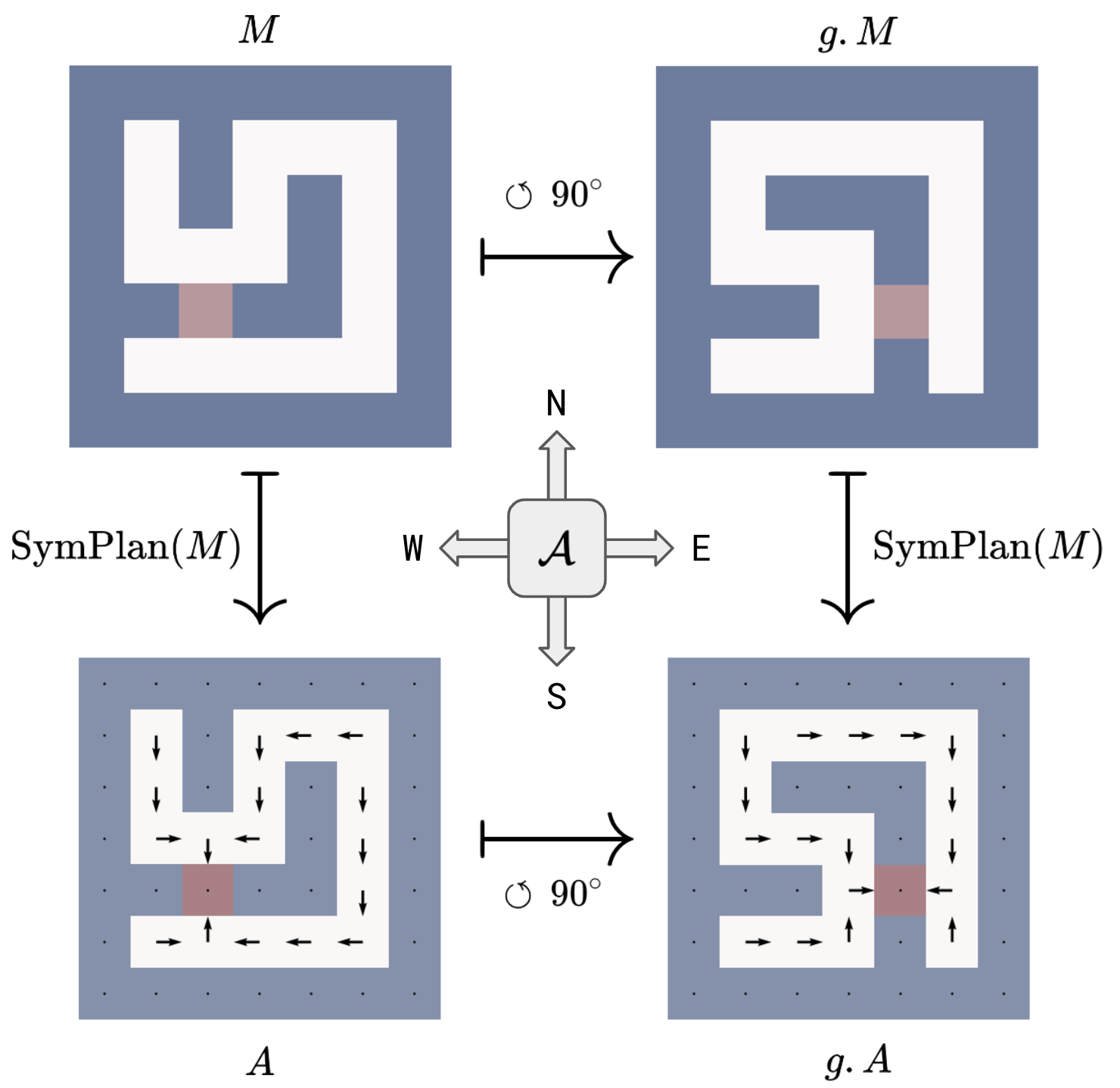}
  \end{center}
  \vspace{-10pt}
  \caption{
    \small
Symmetry in path planning.
Symmetric Planning approach guarantees the solutions are same up to rotations. %
}
\vspace{-10pt}
\label{fig:fig1}
\end{wrapfigure}

Model-based planning algorithms can struggle to find solutions for complex problems, and one solution is to apply planning in a more structured and reduced space~\citep{sutton_reinforcement_2018,li_towards_2006,ravindran2004algebraic,fox_extending_2002}. When a task exhibits symmetry, this structure can be used to effectively reduce the search space for planning. However, existing planning algorithms often assume perfect knowledge of dynamics and require building equivalence classes, which can be inefficient and limit their applicability to specific tasks~\citep{fox_detection_1999,fox_extending_2002,pochter_exploiting_2011,zinkevich_symmetry_2001,narayanamurthy_hardness_2008}.

In this paper, we study the path-planning problem and its symmetry structure, as shown in Figure~\ref{fig:fig1}.
Given a map $M$ (top row), the objective is to find optimal actions $A = \texttt{SymPlan} (M)$ (bottom row) to a given position (red dots).
If we rotated the map $g.M$ (top right), its solution $g.A$ (shortest path) can also be connected by a rotation with the original solution $A$.
Specifically, we say the task has \textit{symmetry} since the solutions $\texttt{SymPlan} (g.M) = g.\texttt{SymPlan} (M)$ are related by a $\circlearrowleft 90^\circ$ rotation.
As a more concrete example, the action in the NW corner of $A$ is the same as the action in the SW corner of $g.A$, after also rotating the arrow $\circlearrowleft 90^\circ$.
This is an example of symmetry appearing in a specific task, which can be observed \textit{before} solving the task or assuming other domain knowledge.
If we can use the rotation (and reflection) symmetry in this task, we effectively reduce the search space by $|C_4|=4$ (or $|D_4|=8$) times.
Instead, classic planning algorithms like A* would require searching symmetric states (NP-hard) with known dynamics \citep{pochter_exploiting_2011}.

Recently, symmetry in model-free deep reinforcement learning (RL) has also been studied \citep{mondal_group_2020,van_der_pol_mdp_2020,wang_mathrmso2-equivariant_2021}.
A core benefit of model-free RL that enables great asymptotic performance is its end-to-end differentiability.
However, they lack long-horizon planning ability and only effectively handle pixel-level symmetry, such as flipping or rotating image observations and action together.
This motivates us to combine the spirit of both: \textit{can we enable end-to-end differentiable planning algorithms to make use of symmetry in environments?}

In this work, we propose a framework called Symmetric Planning (SymPlan) that enables planning with symmetry in an end-to-end differentiable manner while avoiding the explicit construction of equivalence classes for symmetric states. 
Our framework is motivated by the work in the equivariant network and geometric deep learning community \citep{bronstein2021geometric,cohen_general_2020,kondor_generalization_2018,cohen_steerable_2016,cohen_group_2016,weiler_general_2021}, which views geometric data as signals (or ``steerable feature fields'') over a base space.
For instance, an RGB image is represented as a signal that maps $\mathbb{Z}^2$ to $\mathbb{R}^3$. 
The theory of equivariant networks enables the injection of symmetry into operations between signals through equivariant operations, such as convolutions.
Equivariant networks applied to images do not need to explicitly consider ``symmetric pixels'' while still ensuring symmetry properties, thus avoiding the need to search symmetric states.

We apply this intuition to the task of path planning, which is both straightforward and general. Specifically, we focus on the 2D grid and demonstrate that value iteration (VI) for 2D path planning is equivariant under translations, rotations, and reflections (which are isometries of $\mathbb{Z}^2$).
We further show that VI for path planning is a type of steerable convolution network, as developed in \citep{cohen_steerable_2016}. To implement this approach, we use Value Iteration Network (VIN, \citep{tamar_value_2016}) and its variants, since they require only operations between signals.
We equip VIN with steerable convolution to create the equivariant steerable version of VIN, named SymVIN, and we use a variant called GPPN \citep{lee_gated_2018} to build SymGPPN.
Both SymPlan methods significantly improve training efficiency and generalization performance on previously unseen random maps, which highlights the advantage of exploiting symmetry from environments for planning.
Our contributions include:
\begin{itemize}[leftmargin=*,topsep=0pt,itemsep=0pt]
\item We introduce a framework for incorporating symmetry into path-planning problems on 2D grids, which is directly generalizable to other homogeneous spaces.
\item We prove that value iteration for path planning can be treated as a steerable CNN, motivating us to implement SymVIN by replacing the 2D convolution with steerable convolution.
\item We show that both SymVIN and a related method, SymGPPN, offer significant improvements in training efficiency and generalization performance for 2D navigation and manipulation tasks.
\end{itemize}

\section{Related work}

\textbf{Planning with symmetries.}
Symmetries are prevalent in various domains and have been used in classical planning algorithms and model checking \citep{fox_detection_1999, fox_extending_2002, pochter_exploiting_2011, shleyfman_heuristics_2015, holldobler_empirical_2015, sievers_structural_nodate, winterer_structural_nodate, roger_symmetry-based_2018, sievers_theoretical_2019, fiser_operator_2019}. Invariance of the value function for a Markov Decision Process (MDP) with symmetry has been shown by \citet{zinkevich_symmetry_2001}, while \citet{narayanamurthy_hardness_2008} proved that finding exact symmetry in MDPs is graph-isomorphism complete.
However, classical planning algorithms like A* have a fundamental issue with exploiting symmetries. They construct equivalence classes of symmetric states, which explicitly represent states and introduce symmetry breaking. As a result, they are intractable (NP-hard) in maintaining symmetries in trajectory rollout and forward search (for large state spaces and symmetry groups) and are incompatible with differentiable pipelines for representation learning. This limitation hinders their wider applications in reinforcement learning (RL) and robotics.

\textbf{State abstraction for detecting symmetries.}
Coarsest state abstraction aggregates all symmetric states into equivalence classes, studied in MDP homomorphisms and bisimulation \citep{ravindran2004algebraic,ferns_metrics_2004,li_towards_2006}.
However, they require \textit{perfect} MDP dynamics and do not scale up well, typically because of the complexity in maintaining \textit{abstraction mappings} (homomorphisms) and \textit{abstracted MDPs}.
\citet{van2020mdp} integrate symmetry into model-free RL based on MDP homomorphisms \citep{ravindran2004algebraic} and motivate us to consider planning.
\citet{park_learning_2022} learn equivariant transition models, but do not consider planning.
\editfinal{
Additionally, the typical formulation of symmetric MDPs in \citep{ravindran2004algebraic,van_der_pol_mdp_2020,zhao_toward_2022} is slightly different from our formulation here: we consider symmetry between MDPs (rotated maps), instead of within a single MDP.
Thus, the reward or transition function additionally depends on map input, as further discussed in Appendix \ref{subsec:relation-sym-mdp}.
}

\textbf{Symmetries and equivariance in deep learning.}
Equivariant neural networks are used to incorporate symmetry in supervised learning for different domains (e.g., grids and spheres), symmetry groups (e.g., translations and rotations), and group representations \citep{bronstein_geometric_2021}.
\citet{cohen_group_2016} introduce G-CNNs, followed by Steerable CNNs \citep{cohen_steerable_2016} which generalizes from scalar feature fields to vector fields with induced representations.
\citet{kondor_generalization_2018,cohen_general_2020} study the theory of equivariant maps and convolutions.
\citet{weiler_general_2021} propose to solve kernel constraints under arbitrary representations for $E(2)$ and its subgroups by decomposing into irreducible representations, named $E(2)$-CNN.

\textbf{Differentiable planning.}
Our pipeline is based on learning to plan in a neural network in a differentiable manner.  %
Value iteration networks (VIN) \citep{tamar2016value} is a representative approach that performs value iteration using convolution on lattice grids, and has been further extended \citep{niu_generalized_2017,lee_gated_2018,chaplot_differentiable_2021,deac_neural_2021,zhao_scaling_2023}. %
Other than using convolutional networks, works on integrating learning and planning into differentiable networks include \citet{oh_value_2017,karkus_qmdp-net_2017,weber_imagination-augmented_2018,srinivas_universal_2018,schrittwieser_mastering_2019,amos_differentiable_2019,wang_exploring_2019,guez_investigation_2019,hafner_dream_2020,pong_temporal_2018,clavera_model-augmented_2020}.
On the theoretical side, \citet{grimm_value_2020,grimm_proper_2021} propose to understand the differentiable planning algorithms from a value equivalence perspective. %

\section{Background}

\begin{wrapfigure}{r}{0.4\textwidth}
\vspace{-40pt}
  \begin{center}
\subfigure{
\includegraphics[width=0.4\textwidth]{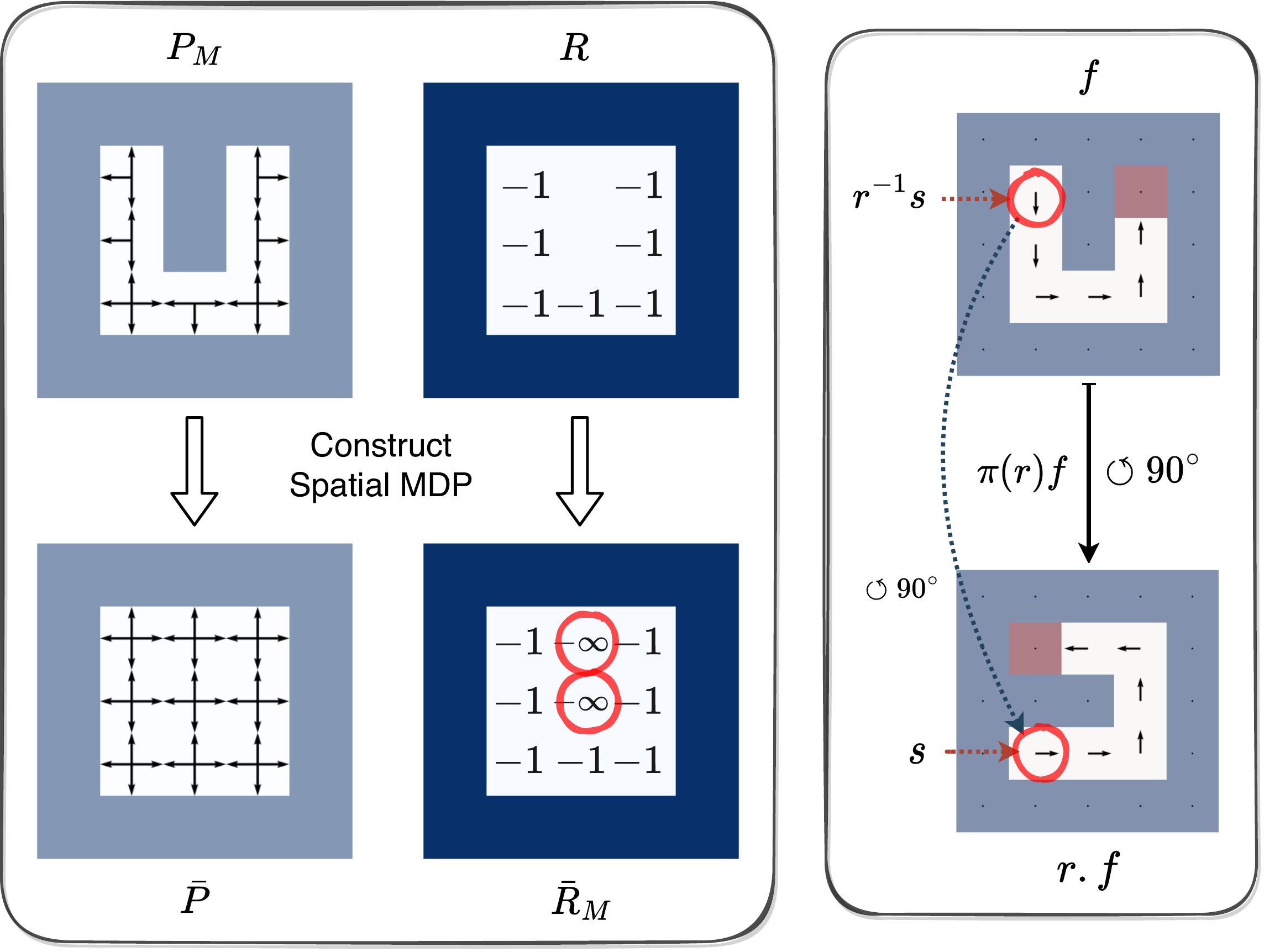}
}
\vspace{-15pt}
\caption{
\small
\textbf{(Left)} 
Construction of spatial MDPs from path-planning problems, enabling the use of $G$-invariant transition models.
\textbf{(Right)}
A demonstration of how an action (arrow in red circle) is rotated when a map is rotated.
}
\label{fig:background}
\end{center}
\vspace{-15pt}
\end{wrapfigure}

\textbf{Markov decision processes.}
We model the path-planning problem as a Markov decision process (MDP)~\citep{sutton_reinforcement_2018}. 
An MDP is a $5$-tuple $\mathcal{M}=\langle \mathcal{S}, \mathcal{A}, P, R, \gamma \rangle$, with state space $\mathcal{S}$, action space $\mathcal{A}$, transition probability function $P: \mathcal{S} \times \mathcal{A} \times \mathcal{S} \to \mathbb{R}_{+}$, reward function $R: \mathcal{S} \times \mathcal{A} \to \mathbb{R}$, and discount factor $\gamma \in [0, 1]$.
Value functions $V: \mathcal{S} \to \mathbb{R}$ and $Q: \mathcal{S} \times \mathcal{A} \to \mathbb{R}$ represent expected future returns.
The core component behind dynamic programming (DP)-based algorithms in reinforcement learning is the \textit{Bellman (optimality) equation} \citep{sutton_reinforcement_2018}: 
$V^*(s) = \max_a R(s,a) + \gamma \sum_{s'} P(s'|s, a) V^*(s')$.
Value iteration is an instance of DP to solve MDPs, which iteratively applies the Bellman (optimality) operator until convergence.

\textbf{Path planning.}
The objective of the path-planning problem is to find optimal actions from  every location to navigate to the target in shortest time.
However, the original path-planning problem is \textit{not equivariant} under \textit{translation} due to obstacles. VINs \citep{tamar_value_2016} implicitly construct an equivalent problem with an \textit{equivariant transition function}, thus CNNs can be used to inject translation equivariance.
We visualize the construction of an equivalent ``spatial MDP'' in Figure~\ref{fig:background} (left), where the key idea is to encode obstacle information in the \textit{transition function} from the map (top left) into the \textit{reward function} in the constructed spatial MDP (bottom right) as ``trap'' states with $-\infty$ reward.
Further details about this construction are in Appendices~\ref{subsec:steerable-planning} and \ref{subsec:path-planning-in-nn}. %
In Figure~\ref{fig:background} (right), we provide a visualization of the \textit{representation} $\pi(r)$ of a rotation $r$ of $\circlearrowleft 90^{\circ}$, and how an action (arrow) is rotated $\circlearrowleft 90^{\circ}$ accordingly.

\textbf{Value Iteration Network.}
\citet{tamar_value_2016} proposed Value Iteration Networks (VINs) that use a convolutional network to parameterize value iteration.
It jointly learns in a latent MDP on a 2D grid, which has the latent reward function $\bar{R}: \mathbb{Z}^2 \to \mathbb{R}^{|\mathcal{A}|}$ and value function $\bar{V}: \mathbb{Z}^2 \to \mathbb{R}$, and applies value iteration on that MDP:
\begin{equation}\label{eq:vin}
\bar{Q}_{\bar{a}, i^{\prime}, j^{\prime}}^{(k)} = \bar{R}_{\bar{a}, i^{\prime}, j^{\prime}} + \sum_{i, j}W_{\bar{a}, i, j}^{V} \bar{V}_{i^{\prime}-i, j^{\prime}-j}^{(k-1)} 
\qquad \qquad
\bar{V}_{i, j}^{(k)}=\max_{\bar{a}} \bar{Q}_{\bar{a}, i, j}^{(k)}
\end{equation}
The first equation can be written as: $ \bar{Q}^{(k)} = \bar{R}^a + \texttt{Conv2D}(\bar{V}^{(k-1)}; W^V_{\bar{a}})$, where the 2D convolution layer \texttt{Conv2D} has parameters $W^V$.

We intentionally omit the details on equivariant networks and instead focus on the core idea of integrating symmetry with equivariant networks. We present the necessary group theory background in Appendix~\ref{sec:background-new-all} and our full framework and theory in Appendices~\ref{sec:symplan-practice-appendix} and \ref{sec:symplan-framework}.

\section{Method: Integrating Symmetry into Planning by Convolution}

\editfinal{
This section presents an algorithmic framework that can provably leverage the inherent symmetry of the path-planning problem in a \textit{differentiable} manner.
}
To make our approach more accessible, we first introduce Value Iteration Networks (VINs)~\citep{tamar_value_2016} as the foundation for our algorithm: Symmetric VIN. 
In the next section, we provide an explanation for why we make this choice and introduce further theoretical guarantees on how to exploit symmetry.

\textbf{How to inject symmetry?}
VIN uses regular 2D convolutions (Eq.~\ref{eq:vin}), which has \textit{translation equivariance}~\citep{cohen_group_2016,kondor_generalization_2018}.
More concretely, a VIN will output the same value function for the same map patches, up to 2D translation.
Characterization of translation equivariance requires a different mechanism and does not \textit{decrease} the search space nor \textit{reduce} a path-planning MDP to an easier problem.
We provide a complete description in Appendix \ref{sec:symplan-framework}.

Beyond translation, we are more interested in \textit{rotation} and \textit{reflection} symmetries.
Intuitively, as shown in Figure~\ref{fig:fig1}, if we find the optimal solution to a map, it automatically \textbf{generalizes} the solution to all 8 transformed maps ($4$ rotations times $2$ reflections, including identity transformation).
This can be characterized by \textit{equivariance} of a planning algorithm $\texttt{Plan}$, such as value iteration $\texttt{VI}$: $g.\texttt{Plan}(M) = \texttt{Plan}(g.M)$, where $M$ is a maze map, and $g$ is the symmetry group $D_4$ under which 2D grids are invariant.

More importantly, symmetry also helps \textbf{training} of differentiable planning.
Intuitively, symmetry in path planning poses additional constraints on its search space: if the goal is in the north, go up; if in the east, go right.
In other words, the knowledge can be shared between symmetric cases; the path-planning problem is effectively reduced by symmetry to a smaller problem.
This property can also be depicted by equivariance of Bellman operators $\mathcal{T}$, or one step of value iteration: $g.\mathcal{T} [V_0] = \mathcal{T} [g.V_0]$.
If we use $\texttt{VI}(M)$ to denote applying Bellman operators on arbitrary initialization until convergence $\mathcal{T}^\infty [V_0]$, value iteration is also equivariant:
\begin{equation}
g.\texttt{VI}(M) \equiv g.\mathcal{T}^\infty [V_0] = \mathcal{T}^\infty [g.V_0] \equiv \texttt{VI}(g.M)
\end{equation}
We formally prove this equivariance in Theorem~\ref{theorem:theorem1-informal} in next section.
In Theorem~\ref{theorem:theorem2-informal}, we theoretically show that value iteration in path planning is a specific type of convolution: \textit{steerable convolution} \citep{cohen_steerable_2016}.
Before that, we first use this finding to present our pipeline on how to use \textit{Steerable CNNs} \citep{cohen_steerable_2016} to integrate symmetry into path planning.

\begin{figure*}[t]
\centering
\subfigure{
\includegraphics[width=.84\linewidth]{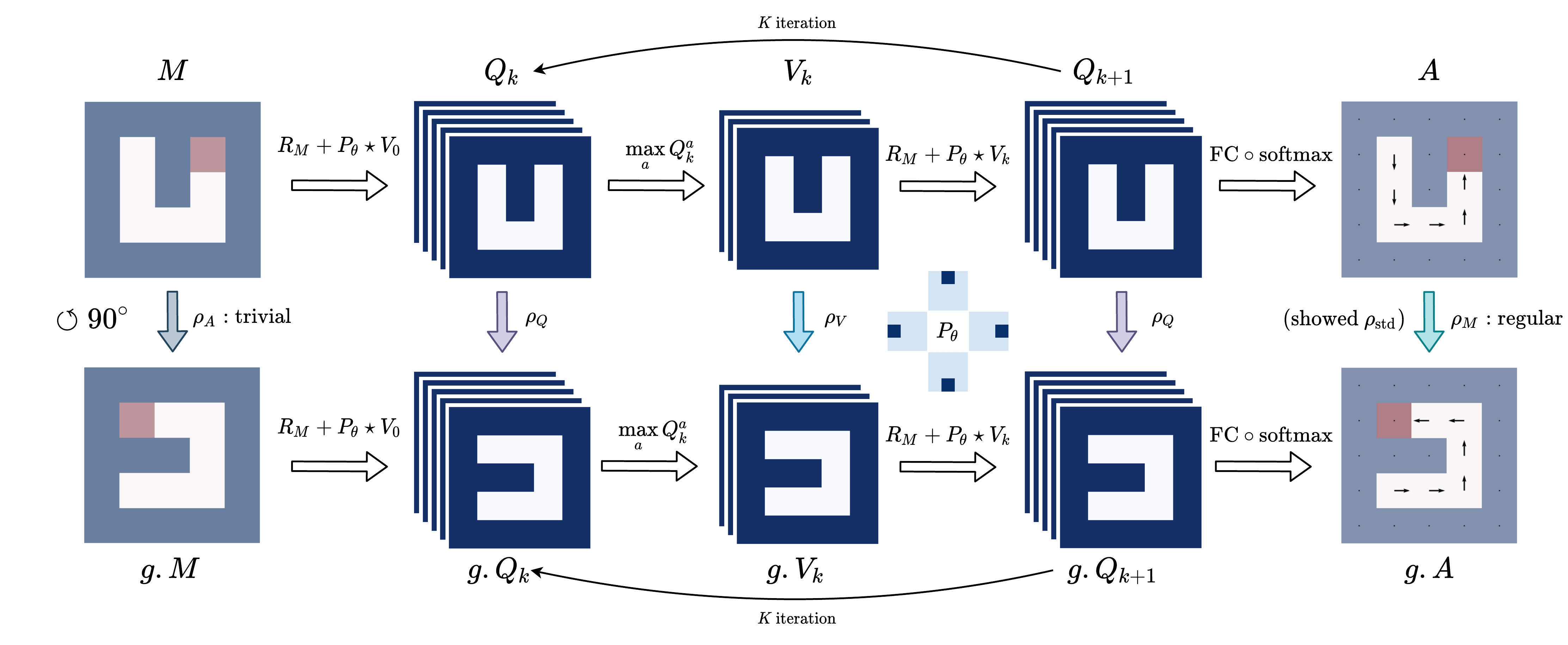}
}
\vspace{-15pt}
\centering
\caption{
\small
The commutative diagram of Symmetric Value Iteration Network (SymVIN).
Every \textit{row} is a full computation graph of VIN.
Every \textit{column} rotates the field by $\circlearrowleft 90^{\circ}$.
}
\vspace{-14pt}
\label{fig:full-commutative-main-new}
\end{figure*}

\textbf{Pipeline: SymVIN.}
We have shown that VI is equivariant given symmetry in path planning.
\edit{We introduce our method \textit{Symmetric Value Iteration Network} (SymVIN), that realizes equivariant VI by integrating equivariance into VIN with respect to \textit{rotation} and \textit{reflection}, in addition to \textit{translation}.}
We use an instance of Steerable CNN: $E(2)$-Steerable CNNs \citep{weiler_general_2021} and their package \texttt{e2cnn} for implementation, which is equivariant under $D_4$ rotation and reflection, and also $\mathbb{Z}^2$ translation on the 2D grid $\mathbb{Z}^2$.
In practice, to inject symmetry into VIN, we mainly need to replace the translation-equivariant \texttt{Conv2D} in Eq.~\ref{eq:vin} with \texttt{SteerableConv}:
\begin{equation}\label{eq:symvin}
\bar{Q}_{\bar{a}}^{(k)} = \bar{R}_{\bar{a}} + \texttt{SteerableConv}(\bar{V}; W^V)
\qquad \qquad
\quad \bar{V}^{(k)}=\max_{\bar{a}} \bar{Q}_{\bar{a}}^{(k)}
\end{equation}
We visualize the full pipeline in Figure~\ref{fig:full-commutative-main-new}.
The map and goal are represented as signals $M: \mathbb{Z}^2 \to \{ 0, 1 \}^2$.
It will be processed by another layer and output to the core value iteration loop.
After some iterations, the final output will be used to predict the actions and compute cross-entropy loss.

Figure~\ref{fig:full-commutative-main-new} highlights our injected equivariance property: if we \textit{rotate} the map (from $M$ to $g.M$), in order to guarantee that the final policy function will also be \textit{equivalently rotated} (from $A$ to $g.A$), we shall guarantee that every \textit{transformation} (e.g., $Q_k \mapsto V_k$ and $V_k \mapsto Q_{k+1}$) in value iteration will also be \textit{equivariant}, for every \textit{pair of columns}.
We formally justify our design in the section below and \edit{provide more technical details in Appendix~\ref{sec:symplan-framework}}.

\textbf{Extension: Symmetric GPPN.}
Based on same spirit, we also implement a symmetric version of Gated Path-Planning Networks (GPPN~\citep{lee_gated_2018}).
GPPNs use LSTMs to alleviate the issue of unstable gradient in VINs.
Although it does not strictly follow value iteration, it still follows the spirit of steerable planning.
Thus, we first obtained a fully convolutional variant of GPPN from~\citet{Kong2022}, called {ConvGPPN}, \add{which incorporates translational equivariance}. It replaces the MLPs in the original LSTM cell with convolutional layers, and then replaces convolutions with equivariant steerable convolutions, resulting in {SymGPPN} \add{that is equivariant to translations, rotations, and reflections}.
\edit{See Appendix~\ref{subsec:sym-conv-gppn} for details.}

\section{Theory: Value Iteration is Steerable Convolution}
\label{sec:new-theory-intuitive}

In the last section, we showed how to exploit symmetry in path planning by equivariance from convolution via intuition.
The goal of this section is to (1) connect the theoretical justification with the algorithmic design, and (2) provide intuition for the justification.
Even through we focus on a specific task, we hope that the underlying guidelines on integrating symmetry into planning are useful for broader planning algorithms and tasks as well.
The complete version is in Appendix~\ref{sec:symplan-framework}.

\textbf{Overview.}
There are numerous types of symmetry in various planning tasks.
We study symmetry in \textbf{path planning} as an example, because it is a straightforward planning problem, and its solutions have been intensively studied in robotics and artificial intelligence~\citep{lavalle_planning_2006,sutton_reinforcement_2018}.
However, even for this problem, symmetry has \textit{not} been \textit{effectively} exploited in existing planning algorithms, such as Dijkstra's algorithm, A*, or RRT, because it is NP-hard to find symmetric states \citep{narayanamurthy_hardness_2008}.

\textbf{Why VIN-based planners?}
There are two reasons for choosing value-based planning methods.
\begin{enumerate}[leftmargin=*]
	\item The expected-value operator in value iteration $\sum_{s'} P(s'|s, a) V(s')$ is (1) \textit{linear} in the value function and (2) \textit{equivariant} (shown in Theorem~\ref{theorem:theorem1-informal}).
		\citet{cohen_general_2020} showed that any \textit{linear equivariant operator} (on homogeneous spaces, e.g., 2D grid) is a group-convolution operator.
	\item Value iteration, using the Bellman (optimality) operator, consists of only maps between signals (steerable fields) over $\mathbb{Z}^2$ (e.g., value map and transition function map).
		This allows us to inject symmetry by enforcing equivariance to those maps. 
		Taking Figure~\ref{fig:fig1} as an example, the 4 corner states are symmetric under transformations in $D_4$. Equivariance enforces those 4 states to have the same value if we rotate or flip the map. This avoids the need to find if a new state is symmetric to any existing state, which is shown to be NP-hard~\citep{narayanamurthy_hardness_2008}.
\end{enumerate}

\editfinal{
Our framework for integrating symmetry applies to any value-based planner with the above properties.
We found that VIN is conceptually the simplest differentiable planning algorithm that meets these criteria,
hence our decision to focus primarily on VIN and its variants.
}

\textbf{Symmetry \textit{from} tasks.}
If we want to exploit inherent symmetry in a task to improve planning, there are two major steps: (1) characterize the symmetry in the task, and (2) incorporate the corresponding symmetry into the planning algorithm.
The theoretical results in Appendix \ref{subsec:symplan-equivariance} mainly characterize the symmetry and direct us to a feasible planning algorithm.

The \textit{symmetry in tasks} for MDPs can be specified by the equivariance property of the transition and reward function, studied in \citet{ravindran2004algebraic,van2020mdp}:
\begin{align}
	\bar{P}(s' \mid s, a ) & = \bar{P}(g.s' \mid g.s, g.a ), \quad \forall g \in G, \forall s, a, s' \\
	\bar{R}_M(s, a ) & = \bar{R}_{g.M}(g.s, g.a ), \quad \forall g \in G, \forall s, a
\end{align}
Note that how the group $G$ \textit{acts} on states and actions is called \textit{group representation}, and is decided by the space $\mathcal{S}$ or $\mathcal{A}$, which is discussed in Equation \ref{eq:transition-equivariance} in Appendix \ref{subsec:symplan-equivariance}.
We emphasize that the equivariance property of the reward function is different from prior work \citep{ravindran2004algebraic,van2020mdp}: in our case, the reward function encodes obstacles as well, and thus depends on map input $M$.
Intuitively, using Figure \ref{fig:fig1} as an example, if a position $s$ is rotated to $g.s$, in order to find the correct original reward $R$ before rotation, the input map $M$ must also be rotated $g.M$.
See Appendix \ref{sec:symplan-framework} for more details.

\textbf{Symmetry \textit{into} planning.}
As for exploiting the \textit{symmetry in planning algorithms}, we focus on value iteration and the VIN algorithm.
We first prove in Theorem \ref{theorem:theorem1-informal} that value iteration for path planning respects the \textit{equivariance} property, motivating us to incorporate symmetry with equivariance.

\begin{theorem}[informal] \label{theorem:theorem1-informal}
If transition is $G$-invariant, expected-value operator $\sum_{s'} P(s'|s, a) V(s')$ and value iteration are equivariant under translation, rotation, reflection on the 2D grid.
\end{theorem}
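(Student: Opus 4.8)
The plan is to prove equivariance of the expected-value operator $(T_P V)(s, a) = \sum_{s'} P(s' \mid s, a) V(s')$ first, and then bootstrap equivariance of the full value iteration from it. Throughout, I fix a group $G$ acting on the 2D grid by isometries (so $G$ contains translations and, for the rotation/reflection part, the point group $D_4$ that stabilizes a lattice point). I need to be explicit about \emph{how} $G$ acts on the various objects: on states $s \in \mathbb{Z}^2$ it acts directly by the isometry, on actions $a$ it acts through the induced permutation representation $\pi$ (rotating an arrow direction as in Figure~\ref{fig:background}, right), and on a value function $V : \mathbb{Z}^2 \to \mathbb{R}$ it acts by $(g.V)(s) = V(g^{-1}.s)$, the standard pullback/regular representation. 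The key structural input I would invoke is the $G$-invariance hypothesis on the transition, $P(s' \mid s, a) = P(g.s' \mid g.s, g.a)$ for all $g$, which is the assumption stated in the theorem.

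First I would carry out the direct calculation showing $T_P$ intertwines the group action, i.e. $g.(T_P V) = T_P (g.V)$ where the action on the left-hand side $Q$-object is the joint action on the state \emph{and} action indices. Concretely I would start from $(T_P(g.V))(s,a) = \sum_{s'} P(s' \mid s, a)\, V(g^{-1}.s')$, then perform the change of summation variable $s' \mapsto g.s''$ (legitimate because $g$ permutes $\mathbb{Z}^2$ bijectively), giving $\sum_{s''} P(g.s'' \mid s, a)\, V(s'')$. The crucial move is to rewrite $(s,a)$ as $(g.(g^{-1}.s), g.(g^{-1}.a))$ and apply the invariance hypothesis $P(g.s'' \mid g.(g^{-1}.s), g.(g^{-1}.a)) = P(s'' \mid g^{-1}.s, g^{-1}.a)$, which collapses the expression to $(T_P V)(g^{-1}.s, g^{-1}.a)$, exactly $(g.(T_P V))(s,a)$. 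This establishes the equivariance of the expected-value operator, which is the linear core.

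Next I would assemble equivariance of one full Bellman step $\mathcal{T}$ and then iterate. One Bellman backup is $\mathcal{T}[V] = \max_a\bigl(\bar{R}(\cdot, a) + \gamma\, T_P V(\cdot, a)\bigr)$. I would argue equivariance componentwise: the reward term is equivariant by the stated reward hypothesis $\bar{R}_M(s,a) = \bar{R}_{g.M}(g.s, g.a)$ (here I must track that the map input $M$ is transported along, which is why the statement is about equivariance \emph{between} rotated MDPs), scalar multiplication by $\gamma$ commutes with the linear $G$-action, addition of two equivariant objects is equivariant, and the $\max_a$ over the action index is $G$-equivariant precisely because $g$ acts on actions by a permutation $\pi(g)$, so maximizing over all actions is invariant to relabeling them. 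Composing, $g.\mathcal{T}[V] = \mathcal{T}[g.V]$. Finally, equivariance of value iteration $\texttt{VI} = \mathcal{T}^\infty$ follows by induction on the number of iterations together with continuity: if $g.\mathcal{T}^{k}[V_0] = \mathcal{T}^{k}[g.V_0]$ for all $k$ and the action of $g$ is a (bounded linear) continuous map, then passing to the fixed-point limit $k \to \infty$ preserves the intertwining, yielding $g.\texttt{VI}(M) = \texttt{VI}(g.M)$ as displayed in the excerpt.

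The main obstacle I anticipate is not any single computation but getting the \emph{representation bookkeeping} exactly right, since the subtle point that distinguishes this setting from the standard symmetric-MDP literature is that the action index transforms under the nontrivial permutation representation $\pi(g)$ while the spatial index transforms by the regular (pullback) representation, and these must be handled simultaneously and consistently. In particular I must verify that the $\max_a$ step genuinely commutes with $\pi(g)$ (it does only because $\pi(g)$ is a permutation, not a general linear map) and that the reward hypothesis is invoked with the map $M$ correctly transported, so that the equivariance is between the MDP for $M$ and the MDP for $g.M$ rather than an invariance within a single fixed MDP. A secondary, more technical point is justifying the interchange of the group action with the infinite iteration limit, which I would handle cleanly by noting $\gamma < 1$ makes $\mathcal{T}$ a contraction with a unique fixed point and $g.(\cdot)$ is an isometry for the sup-norm, so the conjugated operator $g \circ \mathcal{T} \circ g^{-1}$ is again a contraction whose fixed point is the transported value function.
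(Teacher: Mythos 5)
Your proposal is correct and follows essentially the same route as the paper's own proof: the same change-of-variables-plus-$G$-invariance computation for the expected-value operator (Appendix~\ref{subsec:proof-scalar-equivariance}, yours just run in the reverse but equivalent direction), the same componentwise treatment of the Bellman step with the map-dependent reward hypothesis $\bar{R}_M(s,a)=\bar{R}_{g.M}(g.s,g.a)$ and the permutation-equivariant $\max_a$, and composition across iterations (Appendix~\ref{subsec:proof_equiv_vi}). The only substantive addition is your contraction/fixed-point argument for passing to the infinite-iteration limit $\mathcal{T}^\infty$, which the paper leaves implicit by arguing only that finitely many stacked equivariant layers remain equivariant; this is a small but legitimate strengthening rather than a different approach.
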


\begin{wrapfigure}{r}{0.45\textwidth}
\vspace{-20pt}
  \begin{center}
\subfigure{
\includegraphics[width=0.45\columnwidth]{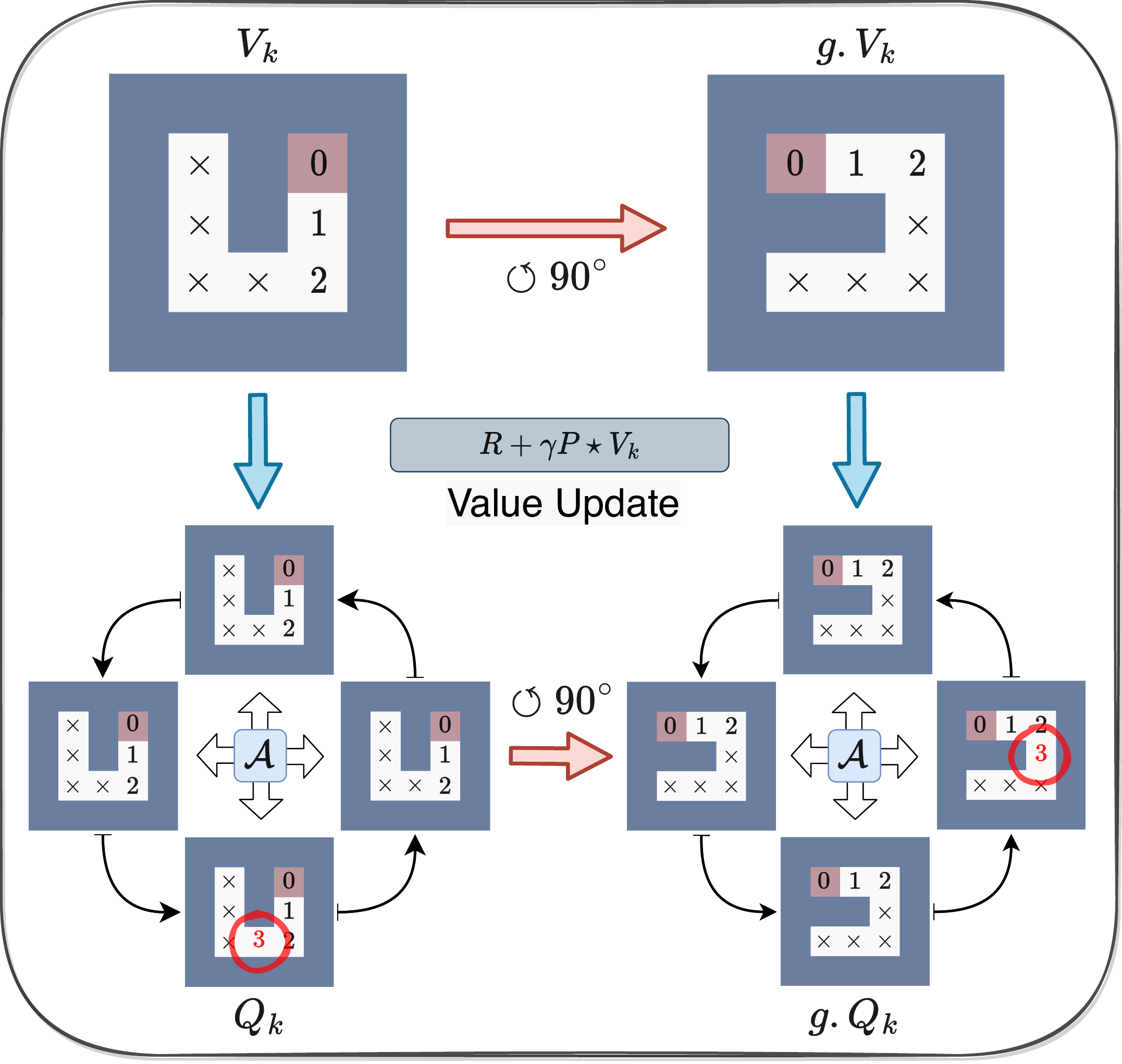}
}
\vspace{-15pt}
\caption{
\small
Commutative diagram of a single step of value update, showing equivariance under rotations.
Each grid in the $Q$-value field corresponds to all the values $Q(\cdot, a)$ of a single action $a$.
}
\label{fig:update-equivariance}
\end{center}
\vspace{-20pt}
\end{wrapfigure}

We visualize the equivariance of the central value-update step $R+\gamma P \star V_k$ in \edit{Figure~\ref{fig:update-equivariance}}.
The upper row is a value field $V_k$ and its rotated version $g.V_k$, and the lower row is for $Q$-value fields $Q_k$ and $g.Q_k$.
The diagram shows that, if we input a rotated value $g.V_k$, the output $R+\gamma P \star g.V_k$ is guaranteed to be equal to rotated $Q$-field $g.Q_k$.
Additionally, rotating $Q$-field $g.Q_k$ has two components: (1) spatially rotating each grid (a feature channel for an action $Q(\cdot, a)$) and (2) cyclically permuting the channels (black arrows).
The red dashed line points out how a specific grid of a $Q$-value grid $Q_k(\cdot, \text{South})$ got rotated and permuted.
We  discuss the theoretical guarantees in Theorem~\ref{theorem:theorem1-informal} and provide full proofs in Appendix \ref{sec:proof}.

However, while the first theorem provides intuition, it is inadequate since it only shows the equivariance property for \textit{scalar-valued} transition probabilities and value functions, and does not address the implementation of VINs with \textit{multiple feature channels} in CNNs.
To address this gap, the next theorem further proves that value iteration is a general form of \textit{steerable convolution}, motivating the use of steerable CNNs by \citet{cohen_steerable_2016} to replace regular CNNs in VIN.
This is related to \citet{cohen_general_2020} that proves steerable convolution is the most general linear equivariant map on homogeneous spaces. %

\begin{theorem}[informal] \label{theorem:theorem2-informal}
If transition is $G$-invariant, the expected-value operator is expressible as a steerable convolution $\star$, which is equivariant under translation, rotation, and reflection on 2D grid. Thus, value iteration (with $\max$, $+$, $\times$) is a form of steerable CNN \citep{cohen_steerable_2016}.

\end{theorem}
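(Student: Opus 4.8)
The plan is to reduce the claim to the general characterization of linear equivariant maps on homogeneous spaces due to \citet{cohen_general_2020}, using Theorem~\ref{theorem:theorem1-informal} to supply the two hypotheses that characterization requires. First I would set up the two objects as steerable feature fields over the base space $\mathbb{Z}^2$: the value function $\bar{V}:\mathbb{Z}^2\to\mathbb{R}$ as a scalar field carrying the trivial representation of the point group $H$ (here $H=D_4$, or $C_4$ for rotations only), and the $Q$-value function $\bar{Q}:\mathbb{Z}^2\to\mathbb{R}^{|\mathcal{A}|}$ as a field whose fiber carries the regular representation $\rho_{\mathrm{reg}}$ of $H$ acting on the action channels by the (cyclic, plus reflection) permutation induced by rotating or flipping the grid. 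The full symmetry group is the crystallographic group $G=\mathbb{Z}^2\rtimes H$, and $\mathbb{Z}^2\cong G/H$ is a homogeneous space of $G$, so the framework of \citet{cohen_general_2020} applies with stabilizer $H$.

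Second, I would observe that the expected-value operator $\bar{V}\mapsto P\star\bar{V}=\sum_{s'}P(s'\mid s,a)\bar{V}(s')$ is (i) linear in $\bar{V}$, immediate from the sum, and (ii) equivariant from $V$-fields to $Q$-fields, which is exactly the content of Theorem~\ref{theorem:theorem1-informal}. These are precisely the hypotheses of the characterization theorem, which states that any linear $G$-equivariant map between induced-representation fields on $G/H$ is a convolution with a single kernel $\kappa$ satisfying the steerability constraint $\kappa(h\,x)=\rho_{\mathrm{out}}(h)\,\kappa(x)\,\rho_{\mathrm{in}}(h)^{-1}$ for all $h\in H$. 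Applying it with $\rho_{\mathrm{in}}$ trivial and $\rho_{\mathrm{out}}=\rho_{\mathrm{reg}}$ yields that the expected-value operator is a steerable convolution in the sense of \citet{cohen_steerable_2016}.

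Third, and this is the step I expect to carry the real content, I would make the correspondence explicit rather than merely invoke it, by identifying the transition probabilities directly with the convolution kernel. Writing $P$ as a function of the displacement $s'-s$ and the action $a$ (legitimate because, after the VIN spatial-MDP construction, the transition is translation-invariant), I would set $\kappa_{a}(x)=P(s+x\mid s,a)$ and then verify that the $G$-invariance identity $P(s'\mid s,a)=P(h.s'\mid h.s,h.a)$ for $h\in H$ translates, channel by channel, into exactly the steerability constraint above, where the action relabeling $a\mapsto h.a$ realizes $\rho_{\mathrm{reg}}(h)$ on the output fiber and the spatial relabeling $x\mapsto h.x$ realizes the action of $h$ on the kernel support. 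The main obstacle is bookkeeping: one must keep the spatial $H$-action on the kernel domain and the permutation $H$-action on the action fiber synchronized, and confirm that they combine into a single constraint; this is where the informal ``rotate each grid and cyclically permute the channels'' picture of Figure~\ref{fig:update-equivariance} must be made precise.

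Finally, I would lift the single-operator result to full value iteration by checking that the remaining operations are equivariant and compose within the steerable-CNN class. The reward addition $\bar{Q}=\bar{R}+\gamma\,(P\star\bar{V})$ is a pointwise sum of fields of the same type and the scaling by $\gamma$ is a scalar, both trivially equivariant; the only nonlinearity, $\bar{V}=\max_{\bar{a}}\bar{Q}_{\bar{a}}$, is a max-pooling over the regular-representation fiber, which is equivariant precisely because permuting the action channels leaves the maximum unchanged, so it intertwines $\rho_{\mathrm{reg}}$ on the input with the trivial representation on the output scalar field. Since each Bellman update is thus a composition of a steerable convolution, an equivariant pointwise affine map, and an equivariant fiber-max, iterating the update to convergence gives an operator built entirely from steerable-CNN layers, which establishes that value iteration for path planning is a form of steerable CNN.
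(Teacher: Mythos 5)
Your proof is correct, but its headline step runs in the opposite logical direction from the paper's own proof, and the two are worth contrasting. The paper's argument (Appendix~\ref{subsec:proof_as_steerable_conv}--\ref{subsec:proof_equiv_vi}) is constructive: from $G$-invariance of the transition it first shows the two-argument kernel $P^a(s' \mid s)$ depends only on the difference $s'-s$, so the expected-value operator literally \emph{is} a convolution with a one-argument kernel satisfying $P^a(s'-s)=P^{r.a}(r.(s'-s))$, and it then proves equivariance of that convolution by the standard steerable-kernel computation --- never using Theorem~\ref{theorem:theorem1-informal} as a hypothesis. You instead invoke the characterization of \citet{cohen_general_2020} (linear $+$ equivariant $\Rightarrow$ steerable convolution), with Theorem~\ref{theorem:theorem1-informal} supplying the equivariance hypothesis; this is precisely the reverse direction, and the paper itself flags this distinction when it contrasts its proof with \citet{cohen_general_2020}, who assume equivariance of the operator and derive invariance of the kernel. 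Both directions are sound here: the abstract route buys the guarantee that nothing more general than a convolution can occur, while the paper's route is self-contained and exhibits the kernel explicitly. Your third step (setting $\kappa_a(x)=P(s+x\mid s,a)$ and checking that $G$-invariance becomes the steerability constraint $\kappa_{h.a}(h.x)=\kappa_a(x)$) is in substance the paper's constructive proof, so your write-up effectively subsumes it, and your treatment of $+$, $\times$, and the fiber-wise $\max$ matches the paper's Steps 1--3 for steerable value iteration. One slip to fix: for $H=D_4$ the four action channels do \emph{not} carry the regular representation (which is $8$-dimensional); they carry a quotient (permutation) representation of $D_4$, as the paper states explicitly --- ``regular'' is correct only for $H=C_4$. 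This mislabeling does not break your argument, since only the permutation action of $h$ on the four actions enters the kernel constraint and the max-pooling step, but it matters when instantiating the steerable CNN, and your pooling remark should accordingly say that the fiber-wise max intertwines this permutation representation (not $\rho_{\mathrm{reg}}$ of $D_4$) with the trivial one.
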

We provide a complete version of the framework in Section~\ref{sec:symplan-framework} and the proofs in Section~\ref{sec:proof}.
This justifies why we should use Steerable CNN \citep{cohen_steerable_2016}: the VI itself is composed of steerable convolution and additional operations ($\max$, $+$, $\times$).
\editfinal{
With equivariance, the value function $\mathbb{Z}^2 \to \mathbb R$ is $D_4$-invariant, thus the planning is effectively done in the quotient space reduced by $D_4$.
}

\textbf{Summary.}
We study how to inject symmetry into VIN for (2D) path planning, and expect the task-specific technical details are useful for two types of readers.
\textit{(i) Using VIN.}
If one uses VIN for differentiable planning, the resulting algorithms SymVIN or SymGPPN can be a plug-in alternative, as part of a larger end-to-end system.
Our framework generalizes the idea behind VINs and enables us to understand its applicability and restrictions.
\textit{(ii) Studying path planning.}
The proposed framework characterizes the symmetry in path planning, so it is possible to apply the underlying ideas to other domains. For example, it is possible to extend to higher-dimensional continuous Euclidean spaces or spatial graphs~\citep{weiler_3d_2018,brandstetter_geometric_2021}.
Additionally, we emphasize that the \textit{symmetry in spatial MDPs} is different from \textit{symmetric MDPs}~\citep{zinkevich_symmetry_2001,ravindran2004algebraic,van_der_pol_mdp_2020}, since our reward function is \textit{not} $G$-invariant (if not conditioning on obstacles).
We further discuss this in Appendices~\ref{subsec:relation-sym-mdp} and \ref{subsec:understand-symplan-abstraction}.

\section{Experiments}

\begin{figure}[t]
\centering
\subfigure{
\includegraphics[height=0.140\textheight]{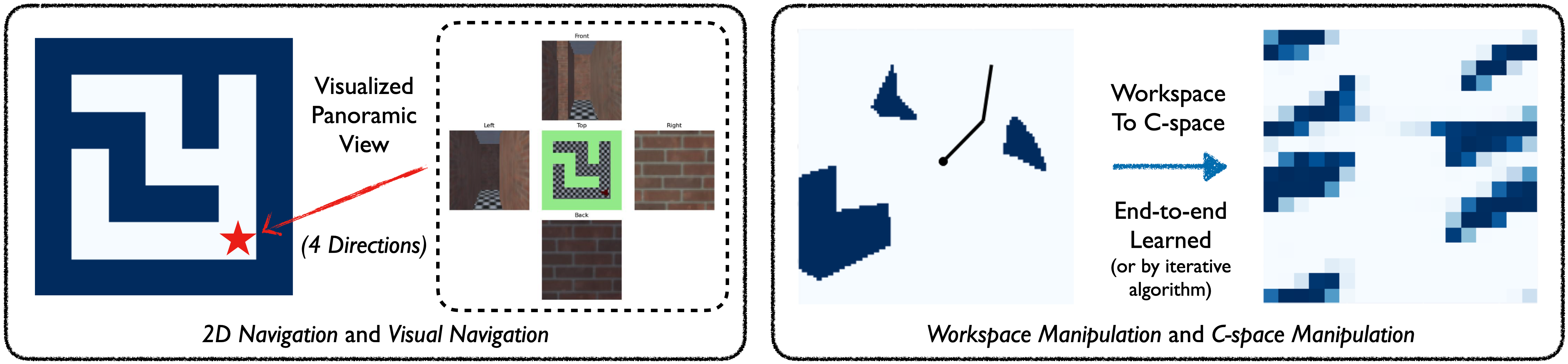}
}
\vspace{-10pt}
\centering
\caption{
\small
\editfinal{
\textbf{(Left)}
We randomly generate occupancy grid $\mathbb{Z}^2 \to \{ 0, 1\}$ for \textbf{2D navigation}.
For \textbf{visual navigation}, each position provides $32 \times 32 \times 3$ egocentric panoramic RGB images in 4 directions for each location $\mathbb{Z}^2 \to \mathbb{R}^{4\times 32\times 32\times 3}$. One location is visualized.
\textbf{(Right)}
The top-down view (left) is the \textit{workspace} of a 2-DOF manipulation task.
For \textbf{workspace manipulation}, it is converted by a mapper layer to configuration space, shown in the right subfigure.
For \textbf{C-space manipulation}, the ground-truth C-space is provided to the planner.
}
}
\vspace{-10pt}
\label{fig:envs}
\end{figure}

We experiment with VIN, GPPN and our SymPlan methods on four path-planning tasks, using both \textit{given} and \textit{learned} maps.
Additional experiments and ablation studies are in Appendix~\ref{sec:add-experiments}.

\textbf{Environments and datasets.}
We demonstrate the idea in four path-planning tasks: (1) \textbf{2D navigation}, (2) \textbf{visual navigation}, (3) 2 degrees of freedom (2-DOF) \textbf{configuration-space (C-space) manipulation}, and (4) \textbf{2-DOF workspace manipulation}. 
For each task, we consider using either \textit{given} (2D navigation and 2-DOF configuration-space manipulation) or \textit{learned} maps (visual navigation and 2-DOF workspace manipulation).
In the latter case, the planner needs to jointly learn a mapper that converts egocentric panoramic images (visual navigation) or workspace states (workspace manipulation) into a map that the planners can operate on, as in~\citep{lee_gated_2018,chaplot_differentiable_2021}.
In both cases, we randomly generate training, validation and test data of $10K/2K/2K$ maps for all map sizes, to demonstrate data efficiency and generalization ability of symmetric planning.
Due to the small dataset sizes, test maps are unlikely to be symmetric to the training maps by any transformation from the symmetry groups $G$.
For all environments, the planning domain is the 2D regular grid \editfinal{as in VIN, GPPN and SPT} $\mathcal{S}=\Omega=\mathbb{Z}^2$, and the action space is to move in 4 $\circlearrowleft$ directions\footnotemark{}: $\mathcal{A}=\texttt{(north, west, south, east)}$.

\footnotetext{
Note that the MDP action space $\mathcal{A}$ needs to be \textit{compatible} with the group action $G \times \mathcal{A} \to \mathcal{A}$.
Since the E2CNN package~\citep{weiler_general_2021} uses \textit{counterclockwise} rotations $\circlearrowleft$ as generators for rotation groups $C_n$, the action space needs to be \textit{counterclockwise} $\circlearrowleft$.
}

\textbf{Methods: planner networks.}
We compare five planning methods, two of which are our SymPlan methods.
Our two equivariant methods are based on \textit{Value Iteration Networks} (\textbf{\textsc{VIN}},~\citep{tamar_value_2016}) and \textit{Gated Path Planning Networks} (\textbf{\textsc{GPPN}},~\citep{lee_gated_2018}).
Our equivariant version of VIN is named \textbf{SymVIN}.
For GPPN, we first obtained a \textit{fully convolutional} version, named \textbf{ConvGPPN}~\citep{Kong2022}, and furthermore obtain \textbf{SymGPPN} by replacing standard convolutions with steerable CNNs.
All methods use (equivariant) convolutions with \textit{circular padding} to plan in configuration spaces for the manipulation tasks, except GPPN which is not fully convolutional.

\textbf{Training and evaluation.}
We report success rate and training curves over $3$ seeds. %
The training process (on given maps) follows~\citep{tamar_value_2016,lee_gated_2018}, where we train $30$ epochs with batch size $32$, and use kernel size $F=3$ by default.
The default batch size is $32$. GPPN variants need smaller number because LSTM consumes much more memory.

\subsection{Planning on given maps}
\label{subsec:given_map}

\textbf{Environmental setup.}
In the \textbf{2D navigation} task, the map and goal are randomly generated, where the map size is $\{15,28,50\}$.
In \textbf{2-DOF manipulation} in configuration space, we adopt the setting in~\citet{chaplot_differentiable_2021} and train networks to take as input ``maps'' in configuration space, represented by the state of the two manipulator joints. We randomly generate $0$ to $5$ obstacles in the manipulator workspace. Then the 2 degree-of-freedom (DOF) configuration space is constructed from the workspace and discretized into 2D grids with sizes $\{ 18, 36 \}$, corresponding to bins of $20^\circ$ and $10^\circ$, respectively.
All methods are trained using the same network size, where for the equivariant versions, we use \textit{regular} representations for all layers, which has size $|D_4|=8$. %
We keep the same parameters for all methods, so all equivariant convolution layers with \textit{regular} representations will have higher embedding sizes.
Due to memory constraints, we use $K=30$ iterations for 2D maze navigation, and $K=27$ for manipulation.
We use kernel sizes $F=\{3,5,5\}$ for $m=\{15,28,50\}$ navigation, and $F=\{3,5\}$ for $m=\{ 18, 36 \}$ manipulation.

\begin{table}[t]
 \centering
     \caption{
     Averaged test success rate ($\%$) for using 10K/2K/2K dataset for all four types of tasks.
     }
    \label{tab:all-test-10k} 
    \small
    \begin{tabular}{r|rrr|r|rr|r}
    \toprule
    	Method & \multicolumn{4}{c}{Navigation} & \multicolumn{3}{c}{Manipulation} \\    
        (10K Data) & $15 \times 15$ & $28\times 28$ & $50 \times 50$ & Visual & $18\times 18$ & $36\times 36$ & Workspace \\
      	\midrule
        
        VIN &  66.97 & 67.57 & 57.92 & 50.83   & 77.82 & 84.32 & 80.44
        \\
        \textbf{SymVIN} &  98.99 & 98.14 & 86.20 & 95.50 & 99.98  & 99.36 & \textbf{91.10}
        \\
        
        \midrule
        
        GPPN &  96.36 & 95.77 & 91.84 & 93.13  &  2.62  & 1.68 & 3.67
        \\
        ConvGPPN &  99.75 & 99.09 & 97.21 & 98.55  & 99.98 & 99.95 & 89.88
        \\
       	\textbf{SymGPPN} &  \textbf{99.98} & \textbf{99.86} & \textbf{99.49} & \textbf{99.78}  & \textbf{100.00} & \textbf{99.99} & 90.50
       	\\
        
        \bottomrule
    \end{tabular}
    \vspace{-10pt}
\end{table}

\begin{wrapfigure}{r}{0.45\textwidth}
\vspace{-20pt}
  \begin{center}
\subfigure{
\includegraphics[width=0.45\textwidth]{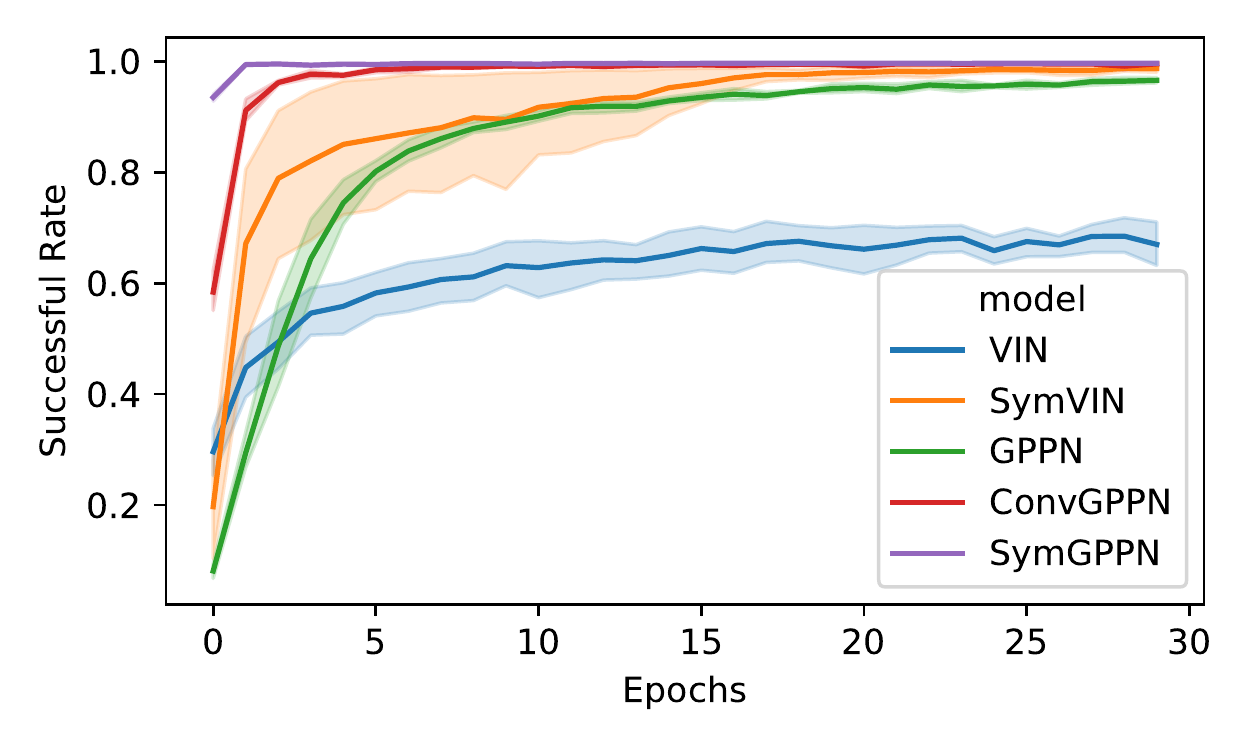}
}
\vspace{-15pt}
\caption{
Training curves on 2D navigation with 10K of $15\times 15$ maps.
Faded areas indicate standard error.
}
\label{fig:training1_main}
  \end{center}
\vspace{-15pt}
\end{wrapfigure}

\textbf{Results.}
We show the averaged test results for both 2D navigation and C-space manipulation tasks on generalizing to unseen maps (Table~\ref{tab:all-test-10k}) and the training curves for 2D navigation (Figure~\ref{fig:training1_main}). %
For VIN-based methods,
SymVIN learns much faster than standard VIN and achieves almost perfect asymptotic performance.
For GPPN-based methods, we found the translation-equivariant convolutional variant ConvGPPN works better than the original one in~\citep{lee_gated_2018}, especially in learning speed.
\editfinal{SymVIN fluctuates in some runs. We observe that the first epoch has abnormally high loss in most runs and quickly goes down in the second or third epoch, which indicates effects from initialization.}
{SymGPPN} further boosts {ConvGPPN} and outperforms all other methods and does not experience variance. %
One exception is that standard GPPN learns poorly in C-space manipulation. For GPPN, the added circular padding in the convolution encoder leads to a gradient-vanishing problem.

Additionally, we found that using regular representations (for $D_4$ or $C_4$) for state value $V: \mathbb{Z}^2 \to \mathbb{R}^{C_V}$ (and for $Q$-values) works better than using trivial representations.
This is counterintuitive since we expect the $V$ value to be scalar $\mathbb{Z}^2 \to \mathbb{R}$.
One reason is that switching between regular (for $Q$) and trivial (for $V$) representations introduces an unnecessary bottleneck.
Depending on the choice of representations, we implement different max-pooling, with details in Appendix~\ref{subsec:max-implement}.
We also empirically found using FC only in the final layer $Q_K \mapsto A$ helps stabilize the training. 
The ablation study on this and more are described in Appendix~\ref{sec:add-experiments}.

\textbf{Remark.}
Our two symmetric planners are both significantly better than their standard counterparts.
Notably, we did not include any symmetric maps in the test data, which symmetric planners would perform much better on.
There are several potential sources of advantages: (1) SymPlan allows parameter sharing across positions and maps and implicitly enables planning in a reduced space: every $(s, a, s')$ generalizes to $(g.s, g.a, g.s')$ for any $g \in G$, (2) thus it uses training data more efficiently, (3) it reduces the hypothesis-class size and facilitates generalization to unseen maps.

\subsection{Planning on learned maps: simultaneously planning and mapping}
\label{subsec:learned_map}

\textbf{Environmental setup.}
For \textbf{visual navigation}, we randomly generate maps using the same strategy as before, and then render four egocentric panoramic views for each location from 3D environments produced with \textit{Gym-MiniWorld}~\citep{gym_miniworld}, which can generate 3D mazes with any layout.
For $m \times m$ maps, all egocentric views for a map are represented by $m \times m \times 4$ RGB images.
For \textbf{workspace manipulation}, we randomly generate 0 to 5 obstacles in workspace as before. We use a mapper network to convert the $96\times96$ workspace (image of obstacles) to the $m\times m$ 2 degree-of-freedom (DOF) configuration space (2D occupancy grid).
In both environments, the setup is similar to Section~\ref{subsec:given_map}, except here we only use map size $m =15$ for visual navigation (training for $100$ epochs), and map size $m =18$ for workspace manipulation (training for $30$ epochs).

\textbf{Methods: mapper networks and setup.}
For \textbf{visual navigation}, we implemented an equivariant mapper network based on \citet{lee_gated_2018}. The mapper network converts every image into a $256$-dimensional embedding $m \times m \times 4 \times 256$ and then predicts the map layout $m \times m \times 1$.
For \textbf{workspace manipulation}, we use a U-net~\citep{Unet} with residual-connection~\citep{resnet} as a mapper.
For more training details, see Appendix~\ref{sec:add-experiments}.

\textbf{Results.}
The results are shown in Table~\ref{tab:all-test-10k}, under the columns ``Visual'' (navigation, $15 \times 15$) and ``Workspace'' (manipulation, $18 \times 18$).
In visual navigation, the trends are similar to the 2D case: our two symmetric planners both train much faster.
Besides standard VIN, all approaches eventually converge to near-optimal success rates during training (around $95\%$), but the validation and test results show large performance gaps (see Figure~\ref{fig:training-visual-navigation} in Appendix~\ref{sec:add-experiments}). 
{SymGPPN} has almost no generalization gap, whereas VIN does not generalize well to new 3D visual navigation environments.
{SymVIN} significantly improves test success rate and is comparable with GPPN.
Since the input is raw images and a mapper is learned end-to-end, it is potentially a source of generalization gap for some approaches.
In workspace manipulation, the results are similar to our findings in C-space manipulation, but our advantage over baselines were smaller.
We found that the mapper network was a bottleneck, since the mapping for obstacles from workspace to C-space is non-trivial to learn.

\subsection{Results on generalization to larger maps}

To demonstrate the generalization advantage of our methods, all methods are trained on small maps and tested on larger maps. All methods are trained on $15 \times 15$ with $K=30$. Then we test all methods on map size $15 \times 15$ through $99 \times 99$, averaging over 3 seeds (3 model checkpoints) for each method and 1000 maps for each map size. Iterations $K$ were set to $\sqrt{2} M$, where $M$ is the test map size (x-axis). The results are shown in Figure~\ref{fig:res-generalization-main}.

\begin{wrapfigure}{r}{0.4\textwidth}
\vspace{-25pt}
  \begin{center}
\centering
\includegraphics[width=0.4\textwidth]{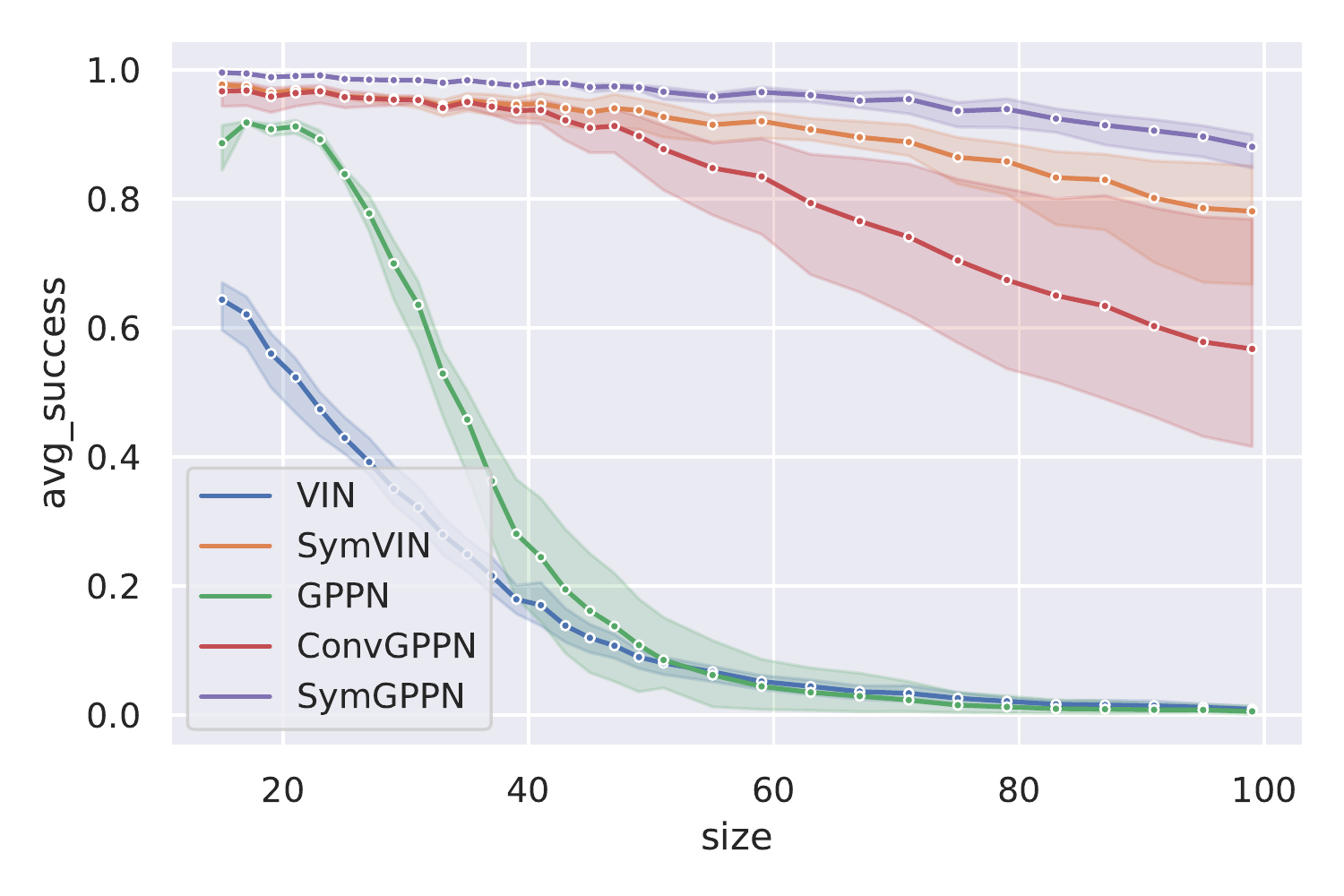}
\vspace{-15pt}
\centering
\caption{
Results for testing on larger maps, when trained on size $15$ map. Our methods outperform all baselines.
}
\label{fig:res-generalization-main}
  \end{center}
\vspace{-15pt}
\end{wrapfigure}

\textbf{Results.} SymVIN generalizes better than VIN, although the variance is greater.
GPPN tends to diverge for larger values of $K$.
ConvGPPN converges, but it fluctuates for different seeds. SymGPPN shows the best generalization and has small variance.
In conclusion, SymVIN and SymGPPN generalize better to different map sizes, compared to all non-equivariant baselines.

\textbf{Remark.}
In summary, our results show that the SymPlan models demonstrate end-to-end planning and learning ability, potentially enabling further applications to other tasks as a differentiable component for planning.
Additional results and ablation studies are in Appendix~\ref{sec:add-experiments}.

\section{Discussion}

In this work, we study the symmetry in the 2D path-planning problem, and build a framework using the theory of steerable CNNs to prove that value iteration in path planning is actually a form of steerable CNN (on 2D grids).
Motivated by our theory, we proposed two symmetric planning algorithms that provided significant empirical improvements in several path-planning domains.
Although our focus in this paper has been on $\mathbb{Z}^2$, our framework can potentially generalize to path planning on higher-dimensional or even continuous Euclidean spaces~\citep{weiler_3d_2018,brandstetter_geometric_2021}, by using \textit{equivariant operations} on \textit{steerable feature fields} (such as steerable convolutions, pooling, and point-wise non-linearities) from steerable CNNs. 
We hope that our SymPlan framework, along with the design of practical symmetric planning algorithms, can provide a new pathway for integrating symmetry into differentiable planning.

\newpage

\section{Acknowledgement}

This work was supported by NSF Grants \#2107256 and \#2134178. R. Walters is supported by The Roux Institute and the Harold Alfond Foundation.
We also thank the audience from previous poster and talk presentations for helpful discussions and anonymous reviewers for useful feedback.

\section{Reproducibility Statement}
We provide additional details in the appendix. We also plan to open source the codebase.
We briefly outline the appendix below.

\begin{enumerate}
	\item Additional Discussion
	\item Background: Technical background and concepts on steerable CNNs and group CNNs
	\item Method: we provide full details on how to reproduce it
	\item Theory/Framework: we provide the complete version of the theory statements
	\item Proofs: this includes all proofs
	\item Experiment / Environment / Implementation details: useful details for reproducibility
	\item Additional results
\end{enumerate}

\newpage

\bibliographystyle{unsrtnat}
\bibliography{main,zotero-linfeng,lingzhi}

\newpage
\appendix

\tableofcontents

\addtocontents{toc}{\protect\setcounter{tocdepth}{2}}

\section{Outline}

We provide a table of content above.

We omit technical details on symmetry and equivariant networks in the main paper and delay them here.
Specifically, for readers interested in additional details on how to use equivariant networks for symmetric planning, we recommend an order as follows:
\textbf{
(1) Basics on group representations and equivariant networks in Section~\ref{subsec:background-basics}.
(2) Practice on building SymVIN in Section~\ref{subsec:building-symvin}.
(3) Detailed formulation on SymPlan in Section~\ref{subsec:steerable-planning} and Section~\ref{subsec:symplan-equivariance}.
}

The rest technical sections provide additional reading materials for the readers interested in more in-depth account on studying symmetry in reinforcement learning and planning.

\section{Additional Discussion}

\subsection{Limitations and Extensions}

\paragraph{Assumption on known domain structure.}

As in VIN, although the framework of steerable planning can potentially handle different domains, one important hidden assumption is that the underlying domain $\Omega$ (state space), is known.
In other words, we fix the structure of learned transition kernels $p(s' \mid s, a)$ and estimate coefficients of it.
One potential method is to use Transformers that learn attention weights to all states in $\mathcal{S}$, which has been partially explored in SPT~\citep{chaplot_differentiable_2021}.
Additionally, it is also possible to treat unknown MDPs as learned transition graphs, as explored in XLVIN~\citep{deac_neural_2021}.
We leave the consideration of symmetry in unknown underlying domains for future work.

\paragraph{The curse of dimensionality.}
The paradigm of steerable planning still requires full expansion in computing value iteration (opposite to \textit{sampling-based}), since we realize the symmetric planner using group equivariant convolutions (essentially summation or integral).
Convolutions on high-dimensional space could suffer from the curse of dimensionality for higher dimensional domains, and are vastly under-explored.
This is a primary reason why we need sampling-based planning algorithms.
If the domain (state-action transition graph) is sparsely connected, value iteration can still scale up to higher dimensions.
It is also unclear either when steerable planning would fail, or how sampling-based algorithms could be integrated with the symmetric planning paradigm. %

\subsection{The Considered Symmetry and Difference to Existing Work}
\label{subsec:relation-sym-mdp}

We need to differentiate between two types of symmetry in MDPs.
Let’s take spatial graph as illustrative example to understand the potential symmetry from a higher level, which means that the nodes $\mathcal{V}$ in the graph have spatial coordinates $\mathbb{Z}^n$ or $\mathbb{R}^n$. Our 2D path planning is a special case of spatial graph, where the actions can only move to adjacent spatial nodes.

Let the graph denoted as $\mathcal{G} = \langle \mathcal{V}, \mathcal{E} \rangle$. $\mathcal{E}$ is the set of edges connecting two states with an action.
One type of symmetry is the symmetry of the graph itself.
For the grid case, it means that after $D_4$ rotation or reflection, the map is unchanged.

Another type of symmetry comes from the isometries of the space.
For a spatial graph, we can rotate it freely in a space, while the relative positions are unchanged.
For our grid case, it is shown in the Figure~\ref{fig:fig1} that rotating a map resulting in the rotated policy. However, the map or policy itself can never be equal under any transformation in $D_4$.

In other words, the first type is symmetry within a MDP (rely on the property of the MDP itself $\mathcal{M}$, or $\mathrm{Aut}(\mathcal{M})$), and the second type is symmetry between MDPs (only rely on the property of the underlying spatial space $\mathbb{Z}^2$, or $\mathrm{Aut}(\mathbb{Z}^2)$).

Nevertheless, we could input map $M$ and somehow treat symmetric states between MDPs as one state. See the proofs section for more details.

\subsection{Additional Discussions}

\paragraph{Potential generalization to 3D space.}

The goal of our work is to integrate symmetry into differentiable planning. While there is a rich branch in math and physics that studies representation theory, we use 2D settings to keep the representation part minimal and provide an example pipeline for using equivariant networks to integrate symmetry. For 3D and other cases, the key is still equivariance, but they require more technical attention on the equivariant network side.

Representation theory has extensively studied 3D rotations. SO(3)-equivariant networks have been developed in the equivariant network field by researchers like \citet{cohen_spherical_2018} (Spherical CNNs) and \citet{geiger_e3nn_2022} (2022, e3nn: Euclidean Neural Networks). SO(3)-equivariant networks avoid gimbal lock by not predicting pose but only needing equivariance to SO(3). SO(3) elements only appear in solving equivariant kernel constraints.

\paragraph{Inference on realistic robotic maps.}
Our choice of small and toyish maps ($100 \times 100$ or smaller) is in line with prior work, such as VIN, GPPN, and SPT, which mainly experimented on $15 \times 15$ and $28 \times 28$ maps. While we recognize that in robotics planning, maps can be significantly larger, we believe that integrating symmetry into differentiable planning is an orthogonal topic with the scalability of differentiable planning algorithms. However, our visual navigation experiment shows that our method can learn from panoramic egocentric RGB images of all locations, and our generalization experiment demonstrates the potential of scalability as the model can generalize to larger maps, which has not been explored in prior work.
In comparison with robotic navigation works like Active Neural SLAM, our approach aims to improve only the differentiable planning module, while it is a systematic pipeline with a hierarchical architecture of global and local planners. Our Symmetric Planners could serve as an alternative to either the global or local planner, but we are not comparing our approach with the whole Active Neural SLAM pipeline. Additionally, Neural SLAM uses room-like maps, which have a different structure and require less planning horizon compared to our maze-like maps.

\paragraph{Measure of efficiency of symmetry in path planning.}

In Appendix Section E.4, we demonstrate how symmetry can aid path planning and provide intuition for our approach. We show that the MDPs from rotated maps are related by MDP homomorphisms, which has been previously used in \citet{ravindran2004algebraic}. This means that states related by $D_4$ rotations/reflections can be aggregated into one state. We inject this symmetry property into the SymVIN algorithm by enforcing $D_4$-invariance of the value function on $\mathbb{Z}^2$, which is equivalent to a function on the quotient space. The global and local equivariance to $D_4$ enables our Symmetric Planning algorithms to effectively plan on the quotient/reduced MDP with a smaller state space. By local equivariance, we mean that every value iteration step is a convolution (or other equivariant) operation and is equivariant to rotation/reflection (see Figure 3 and 4), thus the search space in planning can be exponentially shrunk with respect to the planning horizon. Although we haven't formally proven this or the sample complexity side, intuitively, this gives $|G|$ times smaller state space and sample complexity.

Equivariance's benefits in general supervised learning tasks are still being explored. Recent work includes showing improved generalization bounds for group invariant/equivariant deep networks, as demonstrated in the work of Sannai et al. (2021), Elesedy and Zaidi (2021), and Behboodi et al. (2022), among others.

\section{Background: Equivariant Networks}
\label{sec:background-new-all}

\begin{figure*}[t]
\centering
\includegraphics[width=1.\linewidth]{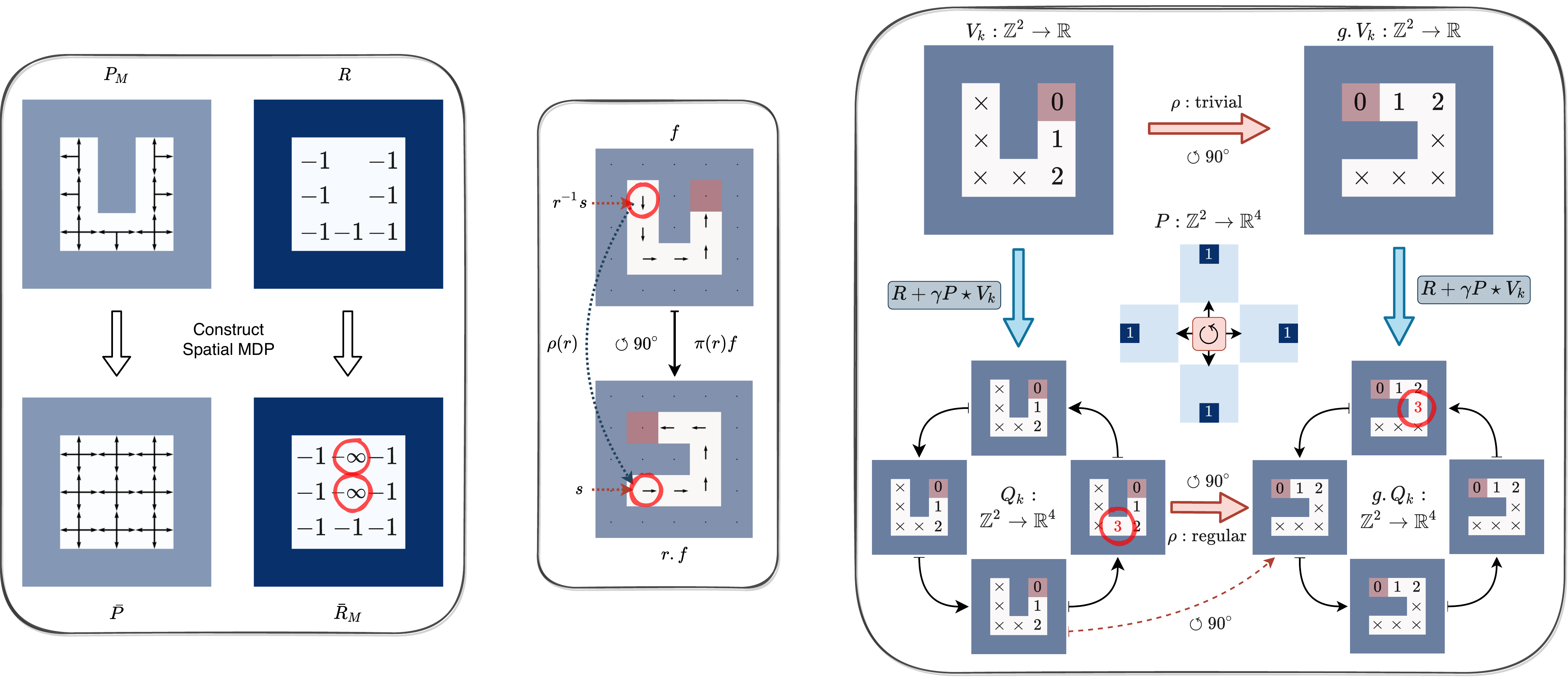}
\centering~\caption{
\textbf{(Left)} 
Construction of spatial MDPs from path planning problems, enabling $G$-invariant transition.
\textbf{(Middle)}
The group acts on a feature field (MDP actions).
We need to find the element in the original field by $f(r^{-1}x)$, and also rotate the arrow by $\rho(r)$, where $r \in D_4$.  %
We represent one-hot actions as arrows (vector field, using $\rho_\text{std}$) for visualization.
\textbf{(Right)} 
Equivariance of $V \mapsto Q$ in Bellman operator on feature fields, under $\circlearrowleft 90^\circ \in C_4$ rotation, which visually explains Theorem~\ref{theorem:theorem1}.
The example simulates VI for one step (see red circles; minus signs omitted) with true transition $P$ using $\circlearrowleft$ N-W-S-E actions.
The $Q$-value field are for 4 actions and can be viewed as either $\mathbb{Z}^2 \to \mathbb{R}^4$ (\citep{cohen_steerable_2016,weiler_general_2021}) or $\mathbb{Z}^2 \rtimes C_4 \to \mathbb{R}$ (on $p4$ group, \citep{cohen_group_2016}).
Simplified figures are presented in the main paper.
}
\vspace{-15pt}
\label{fig:combined1-appendix}
\end{figure*}

We omit technical details in the main paper and delay them here.
This section introduces the background on equivariant networks and representation theory.
The first subsection covers necessary basics, while the rest subsections provide additional reading materials for the readers interested in more in-depth account on the preliminaries on studying symmetry in reinforcement learning and planning.

\subsection{Basics: Groups and Group Representations}
\label{subsec:background-basics}

\textbf{Symmetry groups and equivarance.}
A symmetry \textit{group} is defined as a set $G$ together with a binary composition map
satisfying the axioms of associativity, identity, and inverse.  A (left) \textit{group action} of $G$ on a set $\mathcal{X}$ is defined as the mapping $(g, x) \mapsto g.x$ which is compatible with composition.  Given a function $f: \mathcal{X} \to \mathcal{Y}$ and $G$ acting on $\mathcal{X}$ and $\mathcal{Y}$, then $f$ is \textit{$G$-equivariant} if it commutes with group actions: $g.f(x) = f(g.x), \forall g \in G, \forall x \in \mathcal{X}$. 
In the special case the action on $\mathcal{Y}$ is trivial $g.y = y$, then $f(x) = f(g.x)$ holds, and we say $f$ is \textit{$G$-invariant}.

\textbf{Group representations.}
We mainly use two groups: dihedral group $D_4$ and cyclic group $C_4$.
The cyclic group of $4$ elements is $C_4 = \langle r \mid r^4 = 1 \rangle$, a symmetry group of rotating a square.
The dihedral group $D_4 = \langle r, s \mid r^4 = s^2 = (sr)^2 = 1 \rangle$ includes both rotations $r$ and reflections $s$, and has size $|D_4| = 8$.
A group representation defines how a group action transforms a \minor{vector space} $G \times S \to S$.
These groups have three types of representations of our interest: \textit{trivial}, \textit{regular}, and \textit{quotient} representations, see \citep{weiler_general_2021}.
The \textit{trivial representation} $\rho_\text{triv}$ maps each $g \in G$ to 1 and hence fixes all $s \in S$.
The \textit{regular representation} $\rho_\text{reg}$ of $C_4$ group sends each $g \in C_4$ to a $4 \times 4$ permutation matrix that cyclically permutes a $4$-element vector, such as a one-hot 4-direction action.
The regular representation of $D_4$ maps each element to an $8 \times 8$ permutation matrix which does not act on 4-direction actions, which requires the \textit{quotient representations} (quotienting out \minor{$sr^2$ reflection part}) and forming a $4 \times 4$ permutation matrix.
It is worth mentioning the \textit{standard representation} of the cyclic groups, which are $2 \times 2$ rotation matrices, only used for visualization (Figure \ref{fig:combined1-appendix} middle).

\textbf{Steerable feature fields and Steerable CNNs.}
The concept of \textit{feature fields} is used in (equivariant) CNNs \citep{bronstein2021geometric,cohen_general_2020,kondor_generalization_2018,cohen_steerable_2016,cohen_group_2016,weiler_general_2021}.
The pixels of an 2D RGB image $x: \mathbb{Z}^2 \to \mathbb{R}^3$ on a domain $\Omega = \mathbb{Z}^2$ is a feature field.
In steerable CNNs for 2D grid, features are formed as \textit{steerable feature fields} $f: \mathbb{Z}^2 \to \mathbb{R}^C$ that associate a $C$-dimensional feature vector $f(x) \in \mathbb{R}^C$ to each element on a base space, such as $\mathbb{Z}^2$.
Defined like this, we know how to transform a steerable feature field and also the feature field after applying CNN on it, using some group \citep{cohen_steerable_2016}.
The type of CNNs that operates on steerable feature fields is called Steerable CNN \citep{cohen_steerable_2016}, which is equivariant to groups including \textit{translations} as subgroup $(\mathbb{Z}^2, +)$, extending \citep{cohen_group_2016}.
It needs to satisfy a \textit{kernel steerability} constraint, where the $\mathbb{R}^2$ and $\mathbb{Z}^2$ cases are considered in \citep{weiler_general_2021}.
We consider the 2D grid as our domain $\Omega = \mathcal{S} = \mathbb{Z}^2$ and use $G = p4m$ group as the running example.
The group $p4m = (\mathbb{Z}^2, +) \rtimes D_4$ (wallpaper group) is semi-direct product of discrete translation group $\mathbb{Z}^2$ and dihedral group $D_4$, see \citep{cohen_group_2016,cohen_steerable_2016}.
We visualize the \textit{transformation law} of $p4m$ on a feature field on $\Omega= \mathbb{Z}^2$ in \textbf{Figure \ref{fig:combined1-appendix} (Middle)}, usually referred as \textit{induced representation} \citep{cohen_steerable_2016,weiler_general_2021}.

\subsection{Group Representations: Visual Understanding}

A group representation is a (linear) group action that defines how a group acts on some space.
\citet{cohen_group_2016,cohen_steerable_2016,weiler_general_2021} provide more formal introduction to them in the context of equivariant neural networks.
We provide visual understanding and refer the readers to them for comprehensive account.

To visually understand how the group $D_4$ acts on some vector space, we visualize the trivial, regular, and quotient (quotienting out reflections $sr^2$) representations, which are \textit{permutation matrices}.
If we apply such a representation $\rho(g) (g \in D_4)$ to a vector, the elements get \textit{cyclically permuted}.
See Figure~\ref{fig:vis-repr}.

The quotient representation that quotients out reflections and has dimension $4 \times 4$ is what we need to use on the $4$-direction action space.

\begin{figure}[t]
\centering
\hfill
\subfigure{
\includegraphics[width=0.24\textwidth]{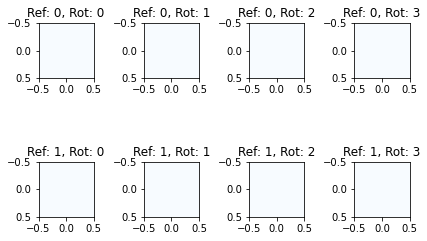}
}
\hfill
\subfigure{
\includegraphics[width=0.22\textwidth]{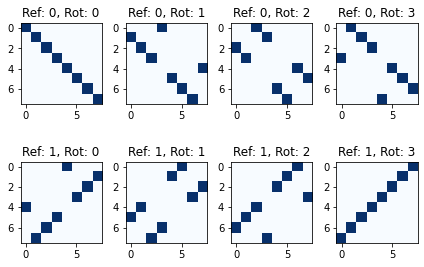}
}
\subfigure{
\includegraphics[width=0.22\textwidth]{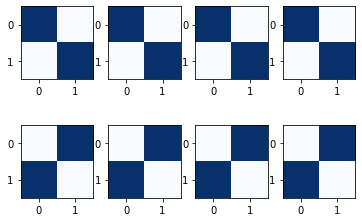}
}
\hfill
\subfigure{
\includegraphics[width=0.22\textwidth]{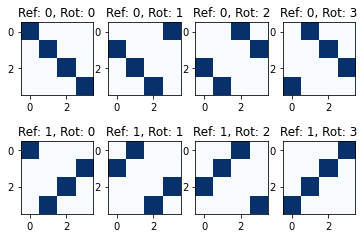}
}
\hfill
\vspace{-5pt}
\centering
\caption{
Visualization of the permutation representations of $D_4$ group for every element $g \in D_4$ ($4$ rotations each row and $2$ reflections each column).
They are (1) the trivial representation, (2) the regular representation, (3) the quotient representation (quotienting out \textit{rotations}), (4) the quotient representation (quotienting out \textit{reflections}). 
}
\vspace{-10pt}
\label{fig:vis-repr}
\end{figure}

\subsection{Geometric Deep Learning}

We review another set of important concepts that motivate our formulation of steerable planning: geometric deep learning and the theories on connecting equivariance and convolution~\citep{bronstein2021geometric,cohen_general_2020,kondor_generalization_2018}.
\citet{bronstein2021geometric} use $x$ for feature fields while \citet{cohen_steerable_2016,cohen_general_2020,weiler_general_2021} use $f$.

\paragraph{Convolutional feature fields.}

The signals are taken from set $\mathcal{C}=\mathbb{R}^D$ on some structured \textit{domain} $\Omega$, and all mappings from the domain to signals forms the space of $\mathcal{C}$-valued signals $\mathcal{X}(\Omega, \mathcal{C})=\{f: \Omega \to \mathcal{C}\}$, or $\mathcal{X}(\Omega)$ for abbreviation.
For instance, for RGB images, the domain is the 2D $n \times n$ grid $\Omega= \mathbb{Z}_n \times \mathbb{Z}_n$, and every pixel can take RGB values $\mathcal{C}=\mathbb{R}^3$ at each point in the domain $u \in \Omega$, represented by a mapping $x: \mathbb{Z}_n \times \mathbb{Z}_n \to \mathbb{R}^3$.
A function on images thus operates on $3n^2$-dimensional inputs. %

It is argued that the underlying geometric structure of domains $\Omega$ plays key role in alleviating the curse of dimensionality, such as convolution networks in computer vision, and this framework is named \textit{Geometric Deep Learning}.
We refer the readers to Geometric Deep Learning~\citep{bronstein2021geometric} for more details, and to more rigorous theories on the relation between equivariant maps and convolutions in~\citep{cohen_general_2020} (vector fields through induced representations) and~\citep{kondor_generalization_2018} (scalar fields through trivial representations).

\paragraph{Group convolution.}

Convolutions are shift-equivariant operations, and vice versa.
This is the special case for $\Omega = \mathbb{R}$, which can be generalized to any group $G$ (that we can integrate or sum over).
The \textit{group convolution} for signals on $\Omega$ is then defined\footnotemark{} as
\begin{equation}
\label{eq:group-conv}
    (f \star \psi)({g})=\langle f, \rho({g}) \psi\rangle=\int_{\Omega} f(u) \psi({g}^{-1} u) \mathrm{d} u ,
\end{equation}
where $\psi(u)$ is shifted copies of a filter, usually locally supported on a subset of $\Omega$ and padded outside. %
Note that although $x$ takes $u \in \Omega$, the feature map $(x \star \psi)$ takes as input the elements $g \in G$ instead of points on the domain $u \in \Omega$.
All following group convolution layers take $G$: $\mathcal{X}(G) \to \mathcal{X}(G)$.
In the grid case, the domain $\Omega$ is \textit{homogeneous} space of the group $G$, i.e. the group $G$ acts transitively: for any two points $u, v \in \Omega$ there exists a symmetry $g \in G$ to reach $u=gv$.

Analogous to classic shift-equivariant convolutions, the generalized group convolution is $G$-equivariant~\citep{cohen_general_2020}.
It is observed that $\langle x, \rho({g}) \theta\rangle=\langle\rho({g}^{-1}) x, \theta\rangle$, and from the defining property of group representations $\rho(h^{-1}) \rho (g) = \rho(h^{-1} g)$, the $G$-equivariance of group convolution follows~\citep{bronstein2021geometric}:
\begin{equation}
    (\rho({h}) x \star \theta)({g})=\langle\rho({h}) x, \rho({g}) \theta\rangle=\left\langle x, \rho({h}^{-1} {g}) \theta\right\rangle=\rho({h})(x \star \theta)({g})
\end{equation}

\footnotetext{The definition of group convolution needs to assume that (1) signals $\mathcal{X}(\Omega)$ are in a Hilbert space (to define an inner product $\langle x, \theta\rangle=\int_{\Omega} x(u) \theta(u) \mathrm{d} u$) and (2) the group $G$ is locally compact (so a Haar measure exists and "shift" of filter can be defined).}

\paragraph{Steerable convolution kernels.}
Steerable convolutions extend group convolutions to more general setup and decouple the computation cost with the group size~\citep{cohen_steerable_2016,cohen_equivariant_2021}.
For example, $E(2)$-steerable CNNs~\citep{weiler_general_2021} apply it for $E(2)$ group, which is semi-direct product of translations $\mathbb{R}^2$ and a fiber group $H$, where $H$ is a group of transformations that fixes the origin and is $O(2)$ or its subgroups.
The representation on the signals/fields is induced from a representation of the fiber group $H$.
Use $\mathbb{R}^2$ as example, a steerable kernel only needs to be $H$-equivariant by satisfying the following constraint~\citep{weiler_general_2021}:
\begin{equation}
\psi(h x)=\rho_{\text {out }}(h) \psi(x) \rho_{\text {in }}(h^{-1}) \quad \forall h \in H, x \in \mathbb{R}^{2}.
\end{equation}

\subsection{Steerable CNNs}
\label{subsec:steerable-cnn}

We still use the running example on $\mathbb{Z}^2$ and group $p4m = \mathbb{Z}^2 \rtimes D_4$.

\paragraph{Induced representations.}

We follow~\citep{cohen_steerable_2016,cohen_general_2020} to use $\pi$ for \textit{induced} representations.
We still use feature fields over $\mathbb{Z}^2$ as example.

As shown in \textbf{Figure~\ref{fig:combined1-appendix} middle}, to transform a feature field $f: \mathbb{Z}^2 \to \mathbb{R}^{C}$ on base $\mathbb{Z}^2$ with group $p4m = \mathbb{Z}^2 \rtimes D_4$, we need the \textit{induced representation}~\citep{cohen_steerable_2016,cohen_general_2020}.
The induced representation in this case is denoted as $\pi(g) \triangleq \mathrm{ind}^{\mathbb{Z}^2 \rtimes D_4}_{D_4} \rho(g)$ (for all $g$), which means how the group action of $D_4$ transforms a feature field on $\mathbb{Z}^2 \rtimes D_4$.

It acts on the feature field with two parts: (1) on the base space $\mathbb{Z}^2$ and (2) on the fibers (feature channels $\mathbb{R}^{C}$) by fiber group $H=D_4$~\citep{cohen_steerable_2016,weiler_general_2021}.
More specifically, applying a translation $t \in \mathbb{Z}^2$ and a transformation $r \in D_4$ to some field $f$, we get $\pi(tr) f$~\citep{cohen_steerable_2016,weiler_general_2021}:
\begin{equation}
	f(x) \mapsto \left[\pi(t r) f \right] (x)  \triangleq  \rho(r)\cdot \left[ f \left((t r)^{-1} x\right)\right].
\end{equation}
$\rho(r)$ is the fiber representation that transforms the fibers $\mathbb{R}^{C}$, and $(tr)^{-1}x$ finds the element before group action (or equivalently transforming the base space $\mathbb{Z}^2$).
Thus, $\pi$ only depends on the fiber representation $\rho$ but not the latter part, thus named \textit{induced representation} by $\rho$.

\footnotetext{
Technically, we still need to solve the linear equivariance constraint in Eq.~\ref{eq:steerability} to enable weight-sharing for equivariance, while~\citet{weiler_general_2021} have implemented it for 2D case.
}

\paragraph{Steerable convolution vs. group convolution.}

The steerable convolution on $\mathbb{Z}^2$
The understanding of this point helps to understand how a group acts on various feature fields and the design of state space for path planning problems.
We use the discrete group $p4 = \mathbb{Z}^2 \rtimes C_4$ as example, which consists of $\mathbb{Z}^2$ translations and $90^\circ$ rotations.
The only difference with $p4m$ is $p4$ does not have reflections.

The group convolution with filter $\psi $ and signal $x$ on grid (or $\mathbf{p} \in \mathbb{Z}^2$), which outputs signals (a function) on group $p4$
\begin{equation}
	[\psi \star x](\mathbf{t}, r) := \sum_{\mathbf{p} \in \mathbb{Z}^2} \psi((\mathbf{t}, r)^{-1} \mathbf{p})\ x(\mathbf{p}).
\end{equation}

A group $G$ has a natural action on the functions over its elements; if $x: G \to \mathbb{R}$ and $g \in G$, the function $g.x$ is defined as $[g.x](h) := x(g^{-1} \cdot h)$.

For example: The group action of a rotation $r \in C_4$ on the space of functions over $p4$ is
\begin{equation}
	[r.y](\mathbf{p}, s) := y(r^{-1} (\mathbf{p}, s)) = y(r^{-1}\mathbf{p}, r^{-1}s),
\end{equation}
where $r^{-1}\mathbf{p}$ spatially rotates the pixels, $r^{-1}s$ cyclically permutes the 4 channels.

The $G$-space (functions over $p$4) with a natural action of $p4$ on it:
\begin{equation}
	[(\mathbf{t}, r).y](\mathbf{p}, s) := y((\mathbf{t}, r)^{-1} \cdot (\mathbf{p}, s)) = y(r^{-1}(\mathbf{p} - \mathbf{t}), r^{-1}s)
\end{equation}

The group convolution in discrete case is defined as 
\begin{equation}
	[\psi \star x](g) := \sum_{h \in H} \psi(g^{-1} \cdot h)\ x(h).
\end{equation}

The group convolution with filter $\psi $ and signal $x$ on $p4$ group is given by:
\begin{equation}
	  [\psi \star x](\mathbf{t}, r) := \sum_{s \in C_4} \sum_{\mathbf{p} \in \mathbb{Z}^2} \psi((\mathbf{t}, r)^{-1} (\mathbf{p}, s))\ x(\mathbf{p}, s).
\end{equation}

Using the fact
\begin{equation}
	\psi((\mathbf{t}, r)^{-1} (\mathbf{p}, s)) = \psi(r^{-1} (\mathbf{p} - \mathbf{t}, s)) = [r.\psi](\mathbf{p} - \mathbf{t}, s),
\end{equation}
the convolution can be equivalently written into
\begin{equation}
	[\psi \star x](\mathbf{t}, r) := \sum_{s \in C_4} \left( \sum_{\mathbf{p} \in \mathbb{Z}^2} [r.\psi](\mathbf{p} - \mathbf{t}, s)\ x(\mathbf{p}, s) \right).
\end{equation}

So $\left( \sum_{\mathbf{p} \in \mathbb{Z}^2} [r.\psi](\mathbf{p} - \mathbf{t}, s)\ x(\mathbf{p}, s) \right)$ can be implemented in usual shift-equivariant convolution \textsc{conv2d}.

The inner sum $\sum_{\mathbf{p} \in \mathbb{Z}^2}$ is equivalently for the sum in steerable convolution, and the outer sum $\sum_{s \in C_4}$ implement rotation-equivariant convolution that satisfies $H$-steerability kernel constraint.
Here, the outer sum is essentially using the \textit{regular} fiber representation of $C_4$.

In other words, group convolution on $p4 = \mathbb{Z}^2 \rtimes C_4$ group is equivalent to steerable convolution on base space $\mathbb{Z}^2$ with the fiber group of $C_4$ with regular representation.

\paragraph{Stack of feature fields.}

Analogous to ordinary CNNs, a feature space in steerable CNNs can consist of multiple feature fields $f_i: \mathbb{Z}^2 \to \mathbb{R}^{c_i}$.
The feature fields are stacked $f=\bigoplus_{i} f_{i}$ together by concatenating the individual feature fields $f_i$ (along the fiber channel), which transforms under the directly sum $\rho=\bigoplus_{i} \rho_{i}$ of individual (fiber) representations.
Every layer will be equivariant between input and output field $f_\text{in},f_\text{out}$ under induced representations $\pi_\text{in},\pi_\text{out}$.
For a steerable convolution between more than one-dimensional feature fields, the kernel is matrix-valued~\citep{cohen_general_2020,weiler_general_2021}.

\section{Symmetric Planning in Practice}
\label{sec:symplan-practice-appendix}

\subsection{Building Symmetric VIN}
\label{subsec:building-symvin}

In this section, we discuss how to achieve Symmetric Planning on 2D grids with $\mathrm{E(2)}$-steerable CNNs~\citep{weiler_general_2021}.
We focus on implementing symmetric version of value iteration, SymVIN, and generalize the methodology to make a symmetric version of a popular follow-up of VIN, GPPN~\citep{lee_gated_2018}.

\textbf{Steerable value iteration.}
We have showed that, value iteration for path planning problems on $\mathbb{Z}^2$ consists of equivariant maps between steerable feature fields.
It can be implemented as an equivariant steerable CNN, with recursively applying two alternating (equivariant) layers:
\begin{equation}
Q_k^a(s) =  {R}_m^a(s) + \gamma \times \left[ {P}^a_\theta \star V_k \right] (s), \quad
V_{k+1}(s) = \max_a Q_k^a(s), \quad s \in \mathbb{Z}^2,
\end{equation}
where $k \in [K]$ indexes iteration, $V_k, Q^a_k, {R}_m^a$ are steerable feature fields over $\mathbb{Z}^2$ output by equivariant layers, ${P}^a_\theta$ is a learned kernel in neural network, and $+,\times$ are element-wise operations.

\textbf{Implementation of pipeline.}
We follow the pipeline in VIN~\citep{tamar_value_2016}.
The commutative diagram for the full pipeline is shown in Figure~\ref{fig:full-commutative}.
The path planning task is given by a $m \times m$ spatial binary obstacle occupancy map and one-hot goal map, represented as a feature field $M: \mathbb{Z}^2 \to \{ 0, 1 \}^2$.
For the iterative process $Q^a_k \mapsto V_k \mapsto Q^a_{k+1}$, the reward field $R_M$ is predicted from map $M$ (by a $1 \times 1$ convolution layer) and the value field $V_0$ is initialized as zeros.
The network output is (logits of) planned actions for all locations\footnotemark{}, represented as $A: \mathbb{Z}^2 \to \mathbb{R}^{|\mathcal{A}|}$, predicted from the final Q-value field $Q_K$ (by another $1 \times 1$ convolution layer).
The number of iterations $K$ and the convolutional kernel size $F$ of ${P}^a_\theta$ are set based on map size $M$, and the spatial dimension $m \times m$ is kept consistent.

\footnotetext{Technically, it also includes values or actions for obstacles, since the network needs to learn to approximate the reward $R_M(s, \Delta s) = -\infty$ with enough small reward and avoid obstacles.}

\begin{figure*}[t]
\centering
\subfigure{
\includegraphics[width=.95\linewidth]{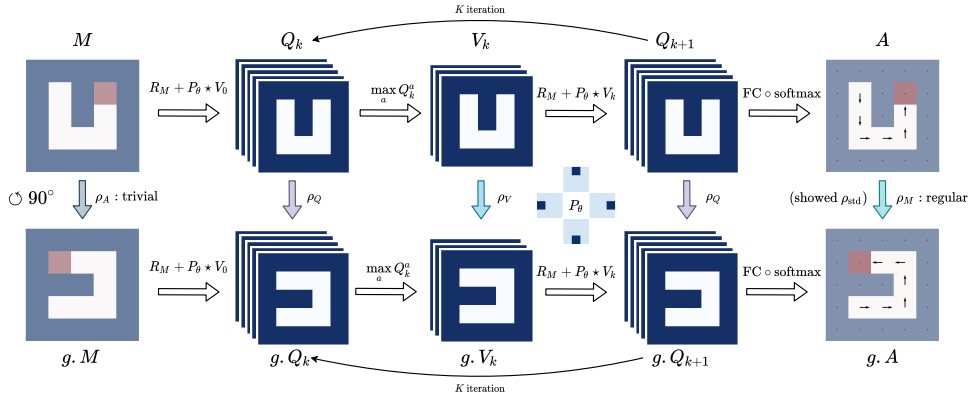}
}
\vspace{-15pt}
\centering
\caption{
Commutative diagram for the full pipeline of SymVIN on steerable feature fields over $\mathbb{Z}^2$ (every grid).
If rotating the input map $M$ by $\pi_{M}(g)$ of any $g$, the output action $A = \texttt{SymVIN} (M)$ is guaranteed to be transformed by $\pi_{A}(g)$, i.e. the entire steerable SymVIN is equivariant under induced representations $\pi_{M}$ and $\pi_{A}$: $\texttt{SymVIN} (\pi_{M}(g) M) = \pi_{A}(g) \texttt{SymVIN} (M)$.
We use stacked feature fields to emphasize that SymVIN supports direct-sum of representations beyond scalar-valued.
}
\vspace{-15pt}
\label{fig:full-commutative}
\end{figure*}

\textbf{Building Symmetric Value Iteration Networks.}
Given the pipeline of VIN fully on steerable feature fields, we are ready to build equivariant version with $\mathrm{E(2)}$-steerable CNNs~\citep{weiler_general_2021}. %
The idea is to replace every \texttt{Conv2d} with a steerable convolution layer between steerable feature fields, and associate the fields with proper fiber representations $\rho(h)$.

VINs use ordinary CNNs and can choose the size of intermediate feature maps.
The design choices in steerable CNNs is the feature fields and fiber representations (or \textit{type}) for every layer~\citep{cohen_steerable_2016,weiler_general_2021}.
The main difference\footnotemark{} in steerable CNNs is that we also need to tell the network how to \textit{transform} every \textit{feature field}, by specifying \textit{fiber representations}, as shown in Figure~\ref{fig:full-commutative}.

\textbf{Specification of input map and output action.}
We first specify \textit{fiber representations} for the input and output field of the network: map $M$ and action $A$.
For input \textbf{occupancy map and goal} $M: \mathbb{Z}^2 \to \{ 0, 1 \}^2$, it does not $D_4$ to act on the $2$ channels, so we use two copies of trivial representations $\rho_{M} = \rho_\text{triv} \oplus \rho_\text{triv}$.
For \textbf{action}, the final action output $A: \mathbb{Z}^2 \to \mathbb{R}^{|\mathcal{A}|}$ is for logits of four actions $\mathcal{A}=\texttt{(north, west, south, east)}$ for every location.
If we use $H=C_4$, it naturally acts on the four actions (ordered $\circlearrowleft$) by \textit{cyclically $\circlearrowleft$ permuting} the $\mathbb{R}^{4}$ channels.
However, since the $D_4$ group has $8$ elements, we need a \textit{quotient representation}, see~\citep{weiler_general_2021} and Appendix~\ref{sec:add-practice}.

\textbf{Specification of intermediate fields: value and reward.}
Then, for the intermediate feature fields: Q-values $Q_k$, state value $V_k$, and reward $R_m$, we are free to choose fiber representations, as well as the width (number of copies).
For example, if we want $2$ copies of regular representation of $D_4$, the feature field has $2 \times 8 = 16$ channels and the stacked representation is $16 \times 16$ (by direct-sum).

For the \textbf{$Q$-value field} $Q_k^a(s)$, we use representation $\rho_Q$ and its size as $C_Q$. %
We need at least $C_A \ge \mathcal{|\mathcal{A}|}$ channels for all actions of $Q(s,a)$ as in VIN and GPPN, then stacked together and denoted as $Q_k \triangleq \bigoplus_{a} Q_k^a$ with dimension $Q_k: \mathbb{Z}^2 \to \mathbb{R}^{C_Q * C_A}$.
Therefore, the representation is direct-sum $\bigoplus \rho_Q$ for $C_A$ copies.
The \textbf{reward} is implemented similarly as $R_M \triangleq \bigoplus_{a} R_M^a$ and must have same dimension and representation to add element-wisely.
For \textbf{state value} field, we denote the choose as fiber representation as $\rho_V$ and its size $C_V$.
It has size $V_k: \mathbb{Z}^2 \to \mathbb{R}^{C_V}$
Thus, the steerable kernel is \textit{matrix-valued} with dimension ${P}_\theta: \mathbb{Z}^2 \to \mathbb{R}^{(C_Q * C_A) \times C_V}$.
In practice, we found using \textit{regular representations} for all three works the best. 
It can be viewed as "augmented" state and is related to group convolution, detailed in Appendix~\ref{sec:add-practice}.

\textbf{Other operations.}
We now visit the remained (equivariant) operations.
(1) The \textbf{$\max$ operation in $Q_k \mapsto V_{k+1}$.}
While we have showed the $\max$ operation in $V_{k+1}(s) = \max_a Q_k^a(s)$ is equivariant in Theorem~\ref{theorem:equiv-steerable-vi}, we need to apply max(-pooling) for all actions along the "representation channel" from stacked representations $C_A * C_Q$ to one $C_Q$. More details are in Appendix~\ref{subsec:max-implement}.
(2) The \textbf{final output layer $Q_K \mapsto A$.}
After the final iteration, the $Q$-value field $Q_k$ is fed into the policy layer with $1 \times 1$ convolution to convert the action logit field $\mathbb{Z}^2 \to \mathbb{R}^{|\mathcal{A}|}$.

\textbf{Extended method: Symmetric GPPN.}
Gated path planning network (GPPN~\citep{lee_gated_2018}) proposes to use LSTM to alleviate the issue of unstable gradient in VINs.
Although it does not strictly follow value iteration, it still follows the spirit of steerable planning.
Thus, we first obtained a fully convolutional variant of GPPN from [Redacted for anonymous review], called {ConvGPPN}. It replaces the MLPs in the original LSTM cell with convolutional layers, and then replaces convolutions with equivariant steerable convolutions, resulting in a fully equivariant {SymGPPN}. See Appendix~\ref{subsec:sym-conv-gppn} for details.

\textbf{Extended tasks: planning on learned maps with mapper networks.}
We consider two planning tasks on 2D grids: 2D navigation and 2-DOF manipulation. %
To demonstrate the ability of handling symmetry in differentiable planning, we consider more complicated state space input: visual navigation and workspace manipulation, and discuss how to use mapper networks to convert the state input and use end-to-end learned maps, as in~\citep{lee_gated_2018,chaplot_differentiable_2021}.
See Appendix~\ref{subsec:mapper-net} for details.

\subsection{PyTorch-style pseudocode}
\label{subsec:pseudocode}

Here, we write a section on explaining the SymVIN method with PyTorch-style pseudocode, since it directly corresponds to what we propose in the method section. We try to relate (1) existing concepts with VIN, (2) what we propose in Section 4 and 5 for SymVIN, and (3) actual PyTorch implementation of VIN and SymVIN aligned line-by-line based on semantic correspondence.

We provide the key Python code snippets to demonstrate how easy it is to implement SymVIN, our symmetric version of VIN~\citep{tamar_value_2016}.

In the current Section 5 (SymPlan practice), we heavily use the concepts from Steerable CNNs.
Thanks to the equivariant network community and the \texttt{e2cnn} package, the actual implementation is compact and closely corresponds to their non-equivariant counterpart, VIN, line-by-line.
Thus, the ultimate goal here is to illustrate that, whatever concepts we have in regular CNNs (e.g., have whatever channels we want), we can can use steerable CNNs that incorporate desired extra symmetry (of $D_4$ rotation+reflection or $C_4$ rotation).

We highlight the implementation of the value iteration procedure in VIN and SymVIN:
\begin{equation}
	V := \max_a R^a + \gamma \times P^a * V.
\end{equation}
Note that we use actual code snippets to avoid hiding any details.

\paragraph{Defining (steerable) convolution layer.}

First, we show the definition of the key convolution layer for a key operation in VIN and SymVIN: expected value operator, in Listing~\ref{lst:vin-conv} and~\ref{lst:symvin-conv}.
 
As proved in Theorem~\ref{theorem:theorem2}, the expected value operator can be executed by a steerable convolution layer for (2D) path planning.
This serves as the theoretical foundation on how we should use a steerable layer here.

For the left side, a regular 2D convolution is defined for VIN.
The right side defines a steerable convolution layer, using the library \texttt{e2cnn} from~\citep{weiler_general_2021}.
It provides high-level abstraction for building equivariant 2D steerable convolution networks.
As a user, we only need to specify how the feature fields transform (as shown in Figure~\ref{fig:full-commutative}), and it will solve the $G$-steerability constraints, process what needs to be trained for equivariant layers, etc.
We use name \texttt{q\_r2conv} to highlight the difference.

\begin{figure}
\noindent\begin{minipage}{.48\textwidth}
\begin{lstlisting}[language=Python, label={lst:vin-conv}, caption=Define `expected value` convolution layer for VIN.]
import torch










# Define regular 2D convolution
q_conv = torch.nn.Conv2d(
    in_channels=1, 
    out_channels=2 * q_size,
    kernel_size=F, stride=1, bias=False
)
\end{lstlisting}
\end{minipage}\hfill
\begin{minipage}{.48\textwidth}
\begin{lstlisting}[language=Python, label={lst:symvin-conv}, caption=Define `expected value` (steerable) convolution layer for SymVIN.]
import torch
import e2cnn

# Define the symmetry group to be D4
gspace = e2cnn.gspaces.FlipRot2dOnR2(N=4)
# Define feature (fiber) representations
field_type_q_in = e2cnn.nn.FieldType(
    gspace=gspace,
    representations=2 * q_size * [gspace.regular_repr]
)
# Define steerable convolution
q_r2conv = e2cnn.nn.R2Conv(
    in_type=field_type_q_in, 
    out_type=field_type_q_out,
    kernel_size=F, stride=1, bias=False
)
\end{lstlisting}
\end{minipage}
\end{figure}

\paragraph{Value iteration procedure.}

Second, we compare the for loop for value iteration updates in VIN and SymVIN, where the former one has regular 2D convolution \texttt{Conv2D} (Listing~\ref{lst:vin-vi}), and the latter one uses steerable convolution~\citep{weiler_general_2021} (Listing~\ref{lst:symvin-vi}).

The lines are aligned based on semantic correspondence.
The \texttt{e2cnn} layers, including steerable convolution layers, operate on its \texttt{GeometricTensor} data structure, which is to wrap a PyTorch tensor.
We denote them with \texttt{\_geo} suffix.
It only additionally needs to specify how this tensor (feature field) transforms under a group (e.g., $D_4$), i.e. the user needs to specify a group representation for it.

\texttt{tensor\_directsum} is used to concatenate two \texttt{GeometricTensor}'s (feature fields) and compute their associated representations (by direct-sum).

Thus, the \texttt{e2cnn} steerable convolution layer on the right side \texttt{q\_r2conv} can be used as a regular PyTorch layer, while the input and output are \texttt{GeometricTensor}.

We also define the $\max$ operation as a customized max-pooling layer, named \texttt{q\_max\_pool}. The implementation is similar to the left side of VIN and needs to additionally guarantee equivariance, and the detail is omitted.

Note that for readability, we assume we use regular representations for the Q-value field $Q$ and the state-value field $V$. They are empirically found to work the best. This corresponds to the definition in \texttt{field\_type\_q\_in} in line 9 in the SymVIN definition listing and the comments in line 16-17 in the steerable VI procedure listing for SymVIN.

Other components are omitted.

\begin{figure}
\noindent\begin{minipage}{.48\textwidth}
\begin{lstlisting}[language=Python, basicstyle=\ttfamily\tiny, label={lst:vin-vi}, caption=The central value iteration procedure for VIN. Some variable names are adjusted accordingly for readability. W and H are width and height for 2D map.]
# Input: maze and goal map, #iterations K




x = torch.cat([maze_map, goal_map], dim=1)

r = r_conv(x)

# Init value function V
v = torch.zeros(r.size())


for _ in range(K):
    # Concat and convolve V with P
    rv = torch.cat([r, v], dim=1)
    q = q_conv(rv)
    
    # Max over action channel
    # > Q: batch_size x q_size x W x H
    # > V: batch_size x 1 x W x H
    q = q.view(-1, q_size, W, H)
    v, _ = torch.max(q, dim=1)
    v = v.view(-1, W, H)

# Output: 'q' (to produce policy map)
\end{lstlisting}
\end{minipage}\hfill
\begin{minipage}{.48\textwidth}
\begin{lstlisting}[language=Python, basicstyle=\ttfamily\tiny, label={lst:symvin-vi}, caption=The equivariant steerable value iteration procedure for SymVIN. Lines are aligned by semantic correspondence. Definition of other field types are similar and thus omitted.]
# Input: maze and goal map, #iterations K

from e2cnn.nn import GeometricTensor
from e2cnn.nn import tensor_directsum

x = torch.cat([maze_map, goal_map], dim=1)
x_geo = GeometricTensor(x, type=field_type_x)
r_geo = r_r2conv(x_geo)

# Init V and wrap V in e2cnn 'geometric tensor'
v_raw = torch.zeros(r_geo.size())
v_geo = GeometricTensor(v_raw, field_type_v)

for _ in range(K):
    # Concat (direct-sum) and convolve V with P
    rv_geo = tensor_directsum([r_geo, v_geo])
    q_geo = q_r2conv(rv_geo)

    # Max over group channel
    # > Q: batch_size x (|G| * q_size) x W x H
    # > V: batch_size x (|G| * 1) x W x H
    v_geo = q_max_pool(q_geo)

    
    
# Output: 'q_geo' (to produce policy map)
\end{lstlisting}
\end{minipage}
\end{figure}

\section{Symmetric Planning Framework}
\label{sec:symplan-framework}

This section formulates the notion of Symmetric Planning (SymPlan).
We expand the understanding of path planning in neural networks by planning as convolution on steerable feature fields (\textit{steerable planning}).
We use that to build \textit{steerable value iteration} and show it is equivariant. %

\subsection{Steerable Planning: Planning On Steerable Feature Fields}
\label{subsec:steerable-planning}

We start the discussion based on Value Iteration Networks (VINs,~\citep{tamar_value_2016}) and use a running example of planning on the 2D grid $\mathbb{Z}^2$.
We aim to understand (1) how VIN-style networks embed planning and how its idea generalizes, (2) how is symmetry structure defined in path planning and how could it be injected into such planning networks.

\textbf{Constructing $G$-invariant transition: spatial MDP.}
Intuitively, the embedded MDP in a VIN is different from the original path planning problem, since (planar) convolutions are translation equivariant but there are different obstacles in different regions.

We found the key insight in VINs is that it implicitly uses an MDP that has translation equivariance.
The core idea behind the construction is that it converts \textit{obstacles} (encoded in transition probability $P$, by \textit{blocking}) into ``\textit{traps}'' (encoded in reward $\bar{R}$, by \textit{$-\infty$ reward}).
This allows to use planar convolutions with translation equivariance, and also enables use to further use steerable convolutions.

The demonstration of the idea is shown in \textbf{Figure~\ref{fig:combined1-appendix} (Left)}.
We call it \textit{spatial MDP}, with different transition and reward function $\bar{\mathcal{M}} = \langle \mathcal{S} , \mathcal{A} , \bar{P} , \bar{R}_m , \gamma \rangle$, which converts the “complexity” in the transition function $P$ in $\mathcal{M}$ to the reward function $\bar{R}_m$ in $\bar{\mathcal{M}}$.
The state and action space are kept the same: state $\mathcal{S} = \mathbb{Z}^2$ and action $\mathcal{A} \subset \mathbb{Z}^2$ to move $\Delta s$ in four directions in a 2D grid.
We provide the detailed construction of the spatial MDP in Section \ref{subsec:path-planning-in-nn}.

\textbf{Steerable features fields.} We generalize the idea from VIN, by viewing functions (in RL and planning) as \textit{steerable feature fields}, motivated by~\citep{bronstein2021geometric,cohen_general_2020,cohen_steerable_2016}.
This is analogous to pixels on images $\Omega \to [255]^3$, and would allow us to apply convolution on it.
The state value function is expressed as a field $V:\mathcal{S} \to \mathbb{R}$, while the $Q$-value function needs a field with $|\mathcal{A}|$ channels: $Q: \mathcal{S} \to \mathbb{R}^{|\mathcal{A}|}$.
Similarly, a policy field\footnotemark{} has probability logits of selecting $|\mathcal{A}|$ actions.
For the transition probability $P(s' | s, a)$, we can use action to index it as $P^a(s' | s)$, similarly for reward $R^a(s)$.
The next section will show that we can convert the transition function to field and even convolutional filter.

\footnotetext{We avoid the symbol $\pi$ for policy since it is used for induced representation in~\citep{cohen_steerable_2016,weiler_general_2021}.}

\subsection{Symmetric Planning: Integrating Symmetry by Convolution}
\label{subsec:symplan-equivariance}

The seemingly slight change in the construction of spatial MDPs brings important symmetry structure.
The general idea in exploiting symmetry in path planning is to use \textit{equivariance} to avoid explicitly constructing equivalence classes of symmetric states.
To this end, we construct value iteration over steerable feature fields, and show it is \textit{equivariant} for path planning.

In VIN, the convolution is over 2D grid $\mathbb{Z}^2$, which is symmetric under $D_4$ (rotations and reflections).
However, we also know that VIN is already equivariant under translations. 
To consider all symmetries, as in~\citep{cohen_steerable_2016,weiler_general_2021}, we understand the group $p4m = G = B \rtimes H$ as constructed by a \textit{base space} $B = G/H = (\mathbb{Z}^2, +)$ and a \textit{fiber} group $H = D_4$, which is a \textit{stabilizer subgroup} that fixes the origin $\mathbf{0} \in \mathbb{Z}^2$.
We could then formally study such symmetry in the spatial MDP, since we construct it to ensure that the transition probability function in $\bar{\mathcal{M}}$ is $G$-invariant.
Specifically, we can uniquely decompose any $g \in \mathbb{Z}^2 \rtimes D_4$ as $t \in \mathbb{Z}^2$ and $r \in D_4$ (and translations act "trivially" on action), so
\begin{equation} \label{eq:transition-equivariance}
\bar{P}(s' \mid s, a ) = \bar{P}(g.s' \mid g.s, g.a ) \equiv \bar{P}\left( (tr).s' \mid (tr).s, r.a \right), \quad \forall g = tr	\in \mathbb{Z}^2 \rtimes D_4, \forall s, a, s'.
\end{equation}

\textbf{Expected value operator as steerable convolution.}
The equivariance property can be shown step-by-step: (1) \textit{expected value operation}, (2) \textit{Bellman operator}, and (3) full \textit{value iteration}.  %
First, we use $G$-invariance to prove that the expected value operator $\sum_{s'} P(s'|s, a) V(s')$ is equivariant.

\begin{theorem} \label{theorem:theorem1}
If transition is $G$-invariant, the expected value operator $E$ over $\mathbb{Z}^2$ is $G$-equivariant.
\end{theorem}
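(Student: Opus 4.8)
The plan is to verify the equivariance identity $E[g.V] = g.(E[V])$ directly, by expanding both sides and invoking the $G$-invariance hypothesis \eqref{eq:transition-equivariance}. First I would pin down the two relevant group actions, since they differ on the input and output. On the scalar value field $V:\mathbb{Z}^2\to\mathbb{R}$ the induced representation has trivial fiber, so it acts purely spatially, $[g.V](s)=V(g^{-1}.s)$. On the output $Q$-field $\mathbb{Z}^2\to\mathbb{R}^{|\mathcal{A}|}$ it acts both spatially and by permuting the action channels through the fiber representation $\rho(r)$. Writing $g=tr$ with $t\in\mathbb{Z}^2$ and $r\in D_4$, and recalling that translations act trivially on actions, the induced action on the output is $[g.(EV)]^a(s)=[EV]^{r^{-1}.a}(g^{-1}.s)$, using that $\rho(r)$ is the permutation representation on $\mathcal{A}$.

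Next I would compute the left-hand side. Starting from
\[
[E(g.V)]^a(s)=\sum_{s'}\bar{P}(s'\mid s,a)\,V(g^{-1}.s'),
\]
I would substitute $u=g^{-1}.s'$; since $g$ acts bijectively on $\mathbb{Z}^2$ and the counting sum is invariant under this reindexing, the sum becomes $\sum_u \bar{P}(g.u\mid s,a)\,V(u)$. The key algebraic step is then to rewrite the transition weight using \eqref{eq:transition-equivariance}: applying the invariance with the replacements $s\mapsto g^{-1}.s$ and $a\mapsto r^{-1}.a$ (so that $g.(g^{-1}.s)=s$ and $r.(r^{-1}.a)=a$) gives $\bar{P}(g.u\mid s,a)=\bar{P}(u\mid g^{-1}.s,\,r^{-1}.a)$. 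Substituting yields
\[
[E(g.V)]^a(s)=\sum_u \bar{P}(u\mid g^{-1}.s,\,r^{-1}.a)\,V(u)=[EV]^{r^{-1}.a}(g^{-1}.s),
\]
which matches the right-hand side computed above, completing the argument.

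I expect the main obstacle to be bookkeeping the two distinct group actions correctly rather than any deep analysis: one must track that the scalar field transforms only spatially while the $Q$-field additionally permutes its $|\mathcal{A}|$ channels via the fiber representation (the cyclic rotation of actions depicted in Figure~\ref{fig:update-equivariance}), and that the substitution into \eqref{eq:transition-equivariance} correctly matches the translation part $t$ to the spatial argument and the rotation/reflection part $r$ to the action argument. A secondary subtlety worth stating explicitly is that the change of variables $u=g^{-1}.s'$ is legitimate precisely because the sum over $\mathbb{Z}^2$ is invariant under the bijective $G$-action; on the infinite grid this is justified by the local support of the spatial-MDP transition kernel (finitely many nonzero terms per state). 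Once these conventions are fixed, the statement follows from a single change of variables plus one application of the hypothesis.
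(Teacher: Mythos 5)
Your proof is correct and takes essentially the same route as the paper's: both verify the identity directly by combining the $G$-invariance of the transition kernel with a bijective change of variables in the sum over $\mathbb{Z}^2$, while tracking the decomposition $g = tr$ and the fact that translations act trivially on actions. The only cosmetic difference is that you package the output as a $Q$-field transforming under the permutation fiber representation and compute from $E[g.V]$ to $g.E[V]$, whereas the paper carries the action as an index on scalar fields with trivial representations (proving $g.E^a[V] = E^{g.a}[g.V]$) and computes in the opposite direction; the two formulations are equivalent up to relabeling $a \mapsto r^{-1}.a$.
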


The proof is in Section \ref{subsec:proof-scalar-equivariance} and visual understanding is in Figure~\ref{fig:combined1-appendix} \minor{middle}.
However, this provides intuition but is inadequate since we do not know: (1) how to implement it with CNNs, (2) how to use multiple feature channels like VINs, since it shows for scalar-valued transition probability and value function (corresponding to trivial representation).
To this end, we next prove that we can implement value iteration using steerable convolution with general steerable kernels.

\begin{theorem} \label{theorem:theorem2}
If transition is $G$-invariant, there exists a (one-argument, isotropic) matrix-valued steerable kernel $P^a(s - s')$ (for every action), such that the expected value operator can be written as a steerable convolution and is $G$-equivariant:
\begin{equation}
\minor{E^a}[V] = P^a \star V, \quad  [g.[P^{a} \star V]](s) = [\minor{P^{g.a}} \star [g.V]](s),  \quad  \forall s \in \mathbb{Z}^2, \forall g \in \mathbb{Z}^2 \rtimes D_4.
\end{equation}
\end{theorem}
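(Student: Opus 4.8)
The plan is to construct the steerable kernel directly from the $G$-invariant transition and then verify the two assertions separately: that $E^a[V] = P^a \star V$ (a convolutional form) follows from translation invariance, and that the convolution is equivariant follows from the rotation–reflection part of the invariance in Eq.~\ref{eq:transition-equivariance}.

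First I would exploit the \emph{translation} part of the $G$-invariance. Setting $g = t \in \mathbb{Z}^2$ (i.e. $r = e$) in Eq.~\ref{eq:transition-equivariance} gives $\bar P(s'\mid s, a) = \bar P(s'+t \mid s+t, a)$ for every $t$, so the two-argument transition depends only on the displacement $s' - s$. This lets me define a one-argument kernel $P^a(\Delta) := \bar P(s + \Delta \mid s, a)$ and rewrite the expected-value operator as $E^a[V](s) = \sum_{s'} P^a(s'-s)\,V(s') = \sum_{\Delta} P^a(\Delta)\, V(s+\Delta) =: [P^a \star V](s)$, matching the convolution form of Eq.~\ref{eq:group-conv} up to the usual sign convention on the filter argument. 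Stacking the column kernels $P^a$ over actions yields the matrix-valued kernel $P: \mathbb{Z}^2 \to \mathbb{R}^{|\mathcal{A}| \times 1}$ (and $\mathbb{R}^{(C_Q C_A)\times C_V}$ in the general multi-channel case), establishing the convolutional form.

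The main step is to show this kernel satisfies the $H$-steerability constraint $P(r.\Delta) = \rho_{\text{out}}(r)\, P(\Delta)\, \rho_{\text{in}}(r^{-1})$ for $r \in D_4$, where I take $\rho_{\text{in}} = \rho_\text{triv}$ on the scalar value fibre and $\rho_{\text{out}}$ the quotient representation of $D_4$ permuting the four action channels. Setting $t = 0$ in Eq.~\ref{eq:transition-equivariance} gives $\bar P(s'\mid s, a) = \bar P(r.s' \mid r.s, r.a)$; since rotations and reflections are linear, $r.s' - r.s = r.(s'-s)$, so in kernel form this reads $P^a(\Delta) = P^{r.a}(r.\Delta)$. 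Relabelling $b = r.a$ and using the permutation-representation identity $[\rho_{\text{out}}(r) v]_b = v_{r^{-1}.b}$ turns this into $P(r.\Delta) = \rho_{\text{out}}(r)\, P(\Delta)$, which is exactly the steerability constraint since $\rho_{\text{in}}(r^{-1}) = 1$. I expect this bookkeeping to be the main obstacle: one must match how $\rho_{\text{out}}$ permutes the action channels against how $r$ rotates displacements, and verify that the \emph{quotient} representation of $D_4$ (not the regular one) acts correctly on the four-direction action space, which requires checking the reflection generator $s$ separately from the rotation generator $r$.

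Finally, with a steerable kernel in hand, the claimed equivariance $[g.[P^a \star V]](s) = [P^{g.a} \star [g.V]](s)$ follows from the standard theorem that convolution with an $H$-steerable kernel commutes with the induced representations~\citep{cohen_steerable_2016,weiler_general_2021}; alternatively it can be re-derived by the short substitution of $\rho(h)x$ into Eq.~\ref{eq:group-conv} as in Section~\ref{subsec:steerable-cnn}. I would observe that the translation part of this equivariance recovers the standard $\mathbb{Z}^2$-equivariance of VIN while the $D_4$ part is the new content, and that the statement specializes back to Theorem~\ref{theorem:theorem1} when all fibre representations are trivial. Composing this equivariant convolution with the channel-wise $\max$ and the pointwise additions and scalings—each equivariant under the same induced representations—then gives the informal conclusion that value iteration is a steerable CNN.
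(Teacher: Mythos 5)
Your proposal is correct and follows essentially the same route as the paper's proof: use the $G$-invariance of the transition to reduce the two-argument kernel to a one-argument kernel $P^a(s'-s)$ (hence a convolution), extract the point-group constraint $P^a(\Delta) = P^{r.a}(r.\Delta)$ as the $D_4$-steerability condition on the stacked (quotient-representation) kernel, and conclude equivariance from the standard steerable-convolution argument of \citet{cohen_steerable_2016,weiler_general_2021}, which the paper re-derives explicitly where you cite it. Your separation of the translation part (for the convolutional form) from the $t=0$ rotation/reflection part (for steerability) is a slightly cleaner bookkeeping than the paper's combined group element, but it is the same argument.
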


The full derivation is provided in Section \ref{sec:proof}.
We write the transition probability as $P^a(s,s')$, and we show it only depends on \textit{state difference} $P^a(s - s')$ (or \textit{one-argument} kernel~\citep{cohen_general_2020}) using $G$-invariance, which is the key step to show it is some \textit{convolution}.
Note that we use one kernel $P^a$ for each action (four directions), and when the group acts on $E$, it also acts on the action $P^{g.a}$ (and state, so technically acting on $\mathcal{S} \times \mathcal{A}$).
Additionally, if the steerable kernel also satisfies the \textit{$D_4$-steerability constraint}~\citep{weiler_general_2021,weiler_3d_2018}, the steerable convolution is \textit{equivariant} under $p4m = \mathbb{Z}^2 \rtimes D_4$.
We can then extend VINs from $\mathbb{Z}^2$ translation equivariance to $p4m$-equivariance (translations, rotations, reflections).
The derivation follows the existing work on steerable CNNs~\citep{cohen_group_2016,cohen_steerable_2016,weiler_general_2021,cohen_general_2020}, while this is our goal: to justify the close connection between path planning and steerable convolutions.

\textbf{Steerable Bellman operator and value iteration.}
We can now represent all operations in Bellman (optimality) operator on steerable feature fields over $\mathbb{Z}^2$ (or \textit{steerable Bellman operator}) as follows:
\begin{equation}
    V_{k+1}(s) = \max_a R^a(s) + \gamma \times \left[ {P}^a \star V_k \right] (s),
\end{equation}
where $V, R^a, \bar{P}^a$ are steerable feature fields over $\mathbb{Z}^2$.
As for the operations, $\max_a$ is (max) pooling (over group channel), $+, \times$ are point-wise operations, and $\star$ is convolution.
As the second step, the main idea is to prove every operation in Bellman (optimality) operator on steerable fields is equivariant, including the nonlinear $\max_a$ operator and $+,\times$.
Then, iteratively applying Bellman operator forms value iteration and is also equivariant, as shown below and proved in Appendix \ref{subsec:proof_equiv_vi}.

\begin{proposition} \label{theorem:equiv-steerable-vi}
For a spatial MDP with $G$-invariant transition, the optimal value function can be found through G-steerable value iteration.
\end{proposition}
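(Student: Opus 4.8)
The plan is to split the proof into two independent pillars — \emph{equivariance} of each value-iteration step and \emph{convergence} of the iteration — and then glue them together with a limit argument. First I would establish that the steerable Bellman operator $\mathcal{T}[V](s) = \max_a R^a(s) + \gamma[P^a \star V](s)$ is $G$-equivariant by checking its constituent operations one at a time. The expected-value convolution $P^a \star V$ is handled directly by Theorem~\ref{theorem:theorem2}, which already supplies $g.[P^a \star V] = [P^{g.a} \star g.V]$. Scalar multiplication by $\gamma$ and the pointwise addition of the reward field $R^a$ commute with the induced representation, because that representation acts by a spatial transformation of the base space together with a linear (permutation) action on the fibre channels, both compatible with channel-wise arithmetic; here I would also invoke the reward-equivariance relation $R_M(s,a) = R_{g.M}(g.s, g.a)$ so that transforming the input map transforms the reward field consistently.

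The crux is the nonlinear $\max_a$ step $Q \mapsto V$. I would argue that maximizing over the action/group channel commutes with the group action for two reasons: the group permutes the action index (cyclically for $C_4$ via the regular representation, or via the quotient representation for $D_4$), and the maximum of a finite set is invariant under permutation of that set; meanwhile the spatial part of each action is applied uniformly over the base space and so passes through the pointwise $\max$ untouched. This yields equivariance of the map from the $Q$-field (carrying the action representation) to the $V$-field (carrying the reduced representation). Composing the three equivariant operations gives $g.\mathcal{T}[V] = \mathcal{T}[g.V]$, and an induction starting from an equivariant initialization such as $V_0 = 0$ shows every iterate respects the equivariance relation.

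For convergence I would appeal to the standard fact that the Bellman optimality operator is a $\gamma$-contraction in the supremum norm on the (finite) grid, so by the Banach fixed-point theorem the iterates $V_k = \mathcal{T}^k[V_0]$ converge to the unique fixed point $V^*$ independently of $V_0$; this fixed point solves the Bellman optimality equation and is therefore the optimal value function. Finally I would close the loop: since equivariance is a closed condition preserved under the uniform limit, the limit $V^*$ inherits the equivariance of the iterates, and equivalently $g.V^*$ is a fixed point of the operator built from the transformed map, so uniqueness forces it to coincide with the optimal value function of that transformed map.

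The main obstacle I anticipate is the bookkeeping around the $\max_a$ operation, specifically matching the representation carried by the action channels (regular for $C_4$, quotient for $D_4$) to the max-pooling implementation, so that the step genuinely intertwines the $Q$-field and $V$-field representations rather than merely preserving scalar values. Everything else — the convolution equivariance from Theorem~\ref{theorem:theorem2}, the pointwise operations, and the contraction argument — is either already supplied or textbook, so the weight of the proof rests on verifying that this one nonlinear step is compatible with the chosen group representations.
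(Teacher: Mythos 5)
Your proposal is correct and follows essentially the same route as the paper's proof: it establishes equivariance step-by-step (the convolution step via Theorem~\ref{theorem:theorem2} together with reward invariance $\bar{R}_M^a(s) = \bar{R}^{g.a}_{g.M}(g.s)$, the pointwise $+$ and $\times$, and the $\max_a$ step via permutation-invariance of the maximum under the regular/quotient action on channels), and then composes the equivariant layers across iterations. The only difference is that you additionally make explicit the standard $\gamma$-contraction/Banach fixed-point argument and the preservation of equivariance under the uniform limit, which the paper treats as implicit from classical value-iteration theory rather than proving.
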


\textbf{Remark.}
Our framework generalizes the idea behind VINs and enables us to understand its applicability and restrictions.
More importantly, this allows us to integrate symmetry but avoid explicitly building equivalence classes and enables planning with symmetry in end-to-end fashion.
We emphasize that the \textit{symmetry in spatial MDPs} is different from \textit{symmetric MDPs}~\citep{zinkevich_symmetry_2001,ravindran2004algebraic,van_der_pol_mdp_2020}, since our reward function is \textit{not} $G$-invariant (\minor{if not conditioning on reward}).
Although we focus on $\mathbb{Z}^2$, we can generalize to path planning on higher-dimensional or even continuous Euclidean spaces (like $\mathbb{R}^3$ space~\citep{weiler_3d_2018} or spatial graphs in $\mathbb{R}^3$~\citep{brandstetter_geometric_2021}), and use \textit{equivariant operations} on \textit{steerable feature fields} (such as steerable convolutions, pooling, and point-wise non-linearities) from steerable CNNs. 
We refer the readers to~\citep{cohen_group_2016,cohen_steerable_2016,cohen_equivariant_2021,weiler_general_2021} for more details.

\subsection{Details: Constructing Path planning in Neural Networks}
\label{subsec:path-planning-in-nn}

We provide the detailed construction of doing path planning in neural networks in the Section~\ref{sec:symplan-framework}.
This further explains the visualization in Figure~\ref{fig:combined1-appendix} left.

We use the running example of planning on the 2D grid $\mathbb{Z}^2$.
We aim to understand (1) how VIN-style networks embed planning and how its idea generalizes, (2) how is symmetry structure defined in path planning and how could it be injected into such planning networks.
Recall that we aim to understand (1) how VIN-style networks embed planning and how its idea generalizes, (2) how is symmetry structure defined in path planning and how could it be injected into such planning networks.

\paragraph{Path planning as MDPs.}
To answer the above two questions, we first need to understand how a VIN embeds a path planning problem into a convolutional network as some embedded MDP.
Intuitively, the embedded MDP in a VIN is different from the original path planning problem, since (planar) convolutions are translation equivariant but there are different obstacles in different regions.

For path planning on the 2D grid $\mathcal{S} = \mathbb{Z}^2$, the objective is to avoid some obstacle region $\mathcal{C}_\mathrm{obs} \subset \mathbb{Z}^2$ and navigate to the goal region $\mathcal{C}_\mathrm{goal}$ through free space $\mathcal{C}\backslash \mathcal{C}_\mathrm{obs}$.
An action $a = \Delta s \in \mathcal{A}$ is to move from the current state $s$ to a next \textit{free} state $s' = s + \Delta s$, where for now we limit it to be in four directions: $\mathcal{A} = $.
Assuming deterministic transition, the agent moves to $s'$ with probability $1$ if $s + \Delta s \in \mathcal{C}\backslash \mathcal{C}_\mathrm{obs}$. If it hits an obstacle, it stays at $s$ if $s + \Delta s \in \mathcal{C}_\mathrm{obs}$: $P\left( s + \Delta s \mid s, \Delta s \right) = 0$ and $P\left( s \mid s, \Delta s \right) = 1$. Every move has a constant negative reward $R(s,a) = -1$ to encourage shortest path.
We call this \textit{ground} path planning MDP, a $5$-tuple $\mathcal{M} = \langle \mathcal{S} , \mathcal{A} , P , R , \gamma \rangle$.

\paragraph{Constructing embedded MDPs.}

However, such transition function is not translation-invariant, i.e. at different position, the transition probabilities are not related by any symmetry: ${P}(s' | s, a ) \neq {P}(g.s' | g.s, g.a )$.
Instead, we could always construct a "symmetric" MDP that has equivalent optimal value and policy for path planning problems, which is implicitly realized in VINs.
The idea is to move the information of obstacles from transition function to reward function: when we hit some action $s + \Delta s \in \mathcal{C}_\mathrm{obs}$, we instead allow transition $\bar{P} \left( s + \Delta s \mid s, \Delta s \right) = 1$ (with all other $s' $ as $0$ probability) while set a "trap" with negative infinity reward $\bar{R}_m \left(s, \Delta s\right) = - \infty$.
The reward function needs the information from the occupancy map $M$, indicating obstacles $\mathcal{C}_\mathrm{obs}$ and free space.
For the free region, the reward is still a constant $\bar{R}_M \left(s, \Delta s\right) = -1$, indicating the cost of movement.

We call it the \textit{embedded} MDP, with different transition and reward function $\bar{\mathcal{M}} = \langle \mathcal{S} , \mathcal{A} , \bar{P} , \bar{R}_M , \gamma \rangle$, which converts the “complexity” in the transition function $P$ in $\mathcal{M}$ to the reward function $\bar{R}_m$ in $\bar{\mathcal{M}}$.
Here, map $M$ shall also be treated as an ``input'', thus later we will derive how the group acts on the map $g.M$.
It has the same optimal policy and value as the ground MDP $\mathcal{M}$, since the optimal policies in both MDPs will avoid obstacles in $\mathcal{M}$ or trap cells in $\bar{\mathcal{M}}$.
It could be easily verified by simulating value iteration backward in time from the goal position.

The transition probability $\bar{P}$ of the embedded MDP $\bar{\mathcal{M}}$ is for an ``empty'' maze and thus translation-invariant.
Note that the reward function $\bar{R}$ is not not necessarily invariant.
This construction is not limited to 2D grid and generalizes to continuous state space or even higher dimensional space, such as $\mathbb{R}^6$ configuration space for 6-DOF manipulation.

Note, all of this is what we use to conceptually understand how a VIN is possible to learn. The reward cannot be negative infinity, but the network will learn it to be smaller than all desired Q-values.

\subsection{Details: Understanding Symmetric Planning by Abstraction}
\label{subsec:understand-symplan-abstraction}

How do we deal with potential symmetry in path planning? How do we characterize it?
We try to understand symmetric planning (\minor{steerable} planning after integrating symmetry with equivariance) and how it is difference classic planning algorithms, such as A*, for planning under \textit{symmetry}.

\footnotetext{We avoid the symbol $\pi$ for policy since it is used for induced representation in~\citep{cohen_steerable_2016,weiler_general_2021}.}

\paragraph{Steerable planning.}

Recall that we generalize the idea of VIN by considering it as a planning network that composes of mappings between steerable feature fields.

The critical point is that, convolutions directly operate on local patches of pixels and never directly touch coordinates of pixels.
In analogy, this avoids a critical drawback in other \textit{explicit} planning algorithms: in sampling-based planning, a trajectory $(s_1, a_1, s_2, a_2, \ldots)$ is sampled and inevitable represented by states $\Omega=\mathcal{S}$.
However, to find another symmetric state $g.s$, we potentially need to compare it against all known states $\mathcal{S}' \subset \mathcal{S}$ with all symmetries $g \in G$.
On high level, an implicit planner can avoid such symmetry breaking and is more easily compatible with symmetry by using equivariant constraints.

We can use MDP homomorphism to understand this~\citep{ravindran2004algebraic,van2020mdp}.

\paragraph{MDP homomorphisms.} 
An \textit{MDP homomorphism} $h: \mathcal{M} \to \overline{\mathcal{M}}$ is a mapping from one MDP $\mathcal{M}=\langle \mathcal{S}, \mathcal{A}, P, R, \gamma \rangle$ to another $\overline{\mathcal{M}}=\langle \overline{\mathcal{S}}, \overline{\mathcal{A}}, \overline{P}, \overline{R}, \gamma \rangle$~\citep{ravindran2004algebraic,van2020mdp}. $h$ consists of a tuple of surjective maps $h=\langle \phi, \{ \alpha_s \mid s \in \mathcal{S} \} \rangle$, where $\phi: \mathcal{S} \to \overline{\mathcal{S}}$ is the state mapping and $\alpha_s: \mathcal{A} \to \overline{\mathcal{A}}$ is the \textit{state-dependent} action mapping.
The mappings are constructed to satisfy the following conditions:
\begin{equation} \label{eq:homomorphism}
\begin{aligned}
\overline{R}\left(\phi(s), \alpha_{s}(a)\right) & \triangleq R(s, a) \; , \\
\overline{P}\left(\phi\left(s^{\prime}\right) \mid \phi(s), \alpha_{s}(a)\right) & \triangleq \sum_{s^{\prime \prime} \in \phi^{-1}\left(\phi(s^{\prime})\right)} P\left(s^{\prime \prime} \mid s, a\right) \; ,
\end{aligned}
\end{equation}
for all $s, s^{\prime} \in \mathcal{S}$ and for all $a \in \mathcal{A}$.

We call the \textit{reduced} MDP $\overline{\mathcal{M}}$ the \textit{homomorphic image} of $\mathcal{M}$ under $h$.
If $h=\langle \phi, \{ \alpha_s \mid s \in \mathcal{S} \} \rangle$ has \textit{bijective} maps $\phi$ and $\lbrace \alpha_s \rbrace$, we call $h$ an \textit{MDP isomorphism}.
Given MDP homomorphism $h$, $(s,a)$ and $(s', a')$ are said to be $h$-equivariant if $\sigma(s)=\sigma\left(s^{\prime}\right)$ and $\alpha_{s}(a)=\alpha_{s^{\prime}}\left(a^{\prime}\right)$.

\paragraph{Symmetry-induced MDP homomorphisms.} 
Given group $G$, an MDP homomorphism $h$ is said to be \textit{group structured} if any state-action pair $(s,a)$ and its transformed counterpart $g.(s,a)$ are mapped to the same abstract state-action pair: $(\phi(s), \alpha_{s}(a)) = (\phi(g.s), \alpha_{g.s}(g.a))$, for all $s \in \mathcal{S}, a \in \mathcal{A}, g \in G$.
For convenience, we denote $g.(s,a)$ as $(g.s,g.a)$, where $g.a$ implicitly\footnotemark{} depends on state $s$.
Applied to the transition and reward functions, the transition function $P$ is $G$-invariant if $P$ satisfies $P(g.s' | g.s, g.a) = P(s' | s, a)$, and reward function $R$ is $G$-invariant if $R(g.s, g.a)  = R(s, a)$, for all $s \in \mathcal{S}, a \in \mathcal{A}, g \in G$.

\footnotetext{The group operation acting on action space $\mathcal{A}$ \textit{depends on state}, since $G$ actually acts on the \textit{product space} $\mathcal{S} \times \mathcal{A}$: $(g, (s,a)) \mapsto g.(s,a)$, while we denote it as $(g.s, g.a)$ for consistency with $h=\langle \phi, \{ \alpha_s \mid s \in \mathcal{S} \} \rangle$.
As a bibliographical note, in~\citet{van2020mdp}, the group acting on state and action space is denoted as state transformation $L_g: \mathcal{S} \to \mathcal{S}$ and \textit{state-dependent} action transformation $K_g^s: \mathcal{A} \to \mathcal{A}$.
}

However, this only fits the type of symmetry in~\citep{van_der_pol_mdp_2020,wang_mathrmso2-equivariant_2021}.
And also, they cannot handle invariance to translation $\mathbb{Z}^2$.
In our case, we need to augment the reward function with map $M$ input:
\begin{equation}
	R_{g.M}(g.s, g.a)  = R_M(s, a),
\end{equation}
for all $s \in \mathcal{S}, a \in \mathcal{A}, g \in G = p4m$.

This means that, at least for rotations and reflections $D_4$, the MDPs constructed from transformed maps $ \{ g.M \}$ are MDP \textit{isomorphic} to each other.

\subsection{Note: Augmented State}

We derive the relationship between group convolution and steerable convolution in Section~\ref{subsec:steerable-cnn}.

The augmented state $\mathbb{Z}^2 \rtimes D_4 \to \mathbb{R}$ can be similarly treated on the group $p4m = \mathbb{Z}^2 \rtimes D_4$.
It is equivalent to using regular representation on the base space $\mathbb{Z}^2$ as $\mathbb{Z}^2 \to \mathbb{R}^{8}$.

\section{Symmetric Planning Framework: Proofs}
\label{sec:proof}

We show the derivation and proofs for all theoretical results in this section.

We follow the notation in~\citep{cohen_general_2020} to use $\star$ for (one-argument) convolution and $\cdot$ for (two-argument) multiplication:
\begin{equation}
E^a[V](s) = [P^a \cdot V] (s) \equiv \sum_{s'} P^a \left(s' \mid s \right) \cdot V(s')
\end{equation}

\subsection{Proof: equivariance of scalar-valued expected value operation}
\label{subsec:proof-scalar-equivariance}

We present the Theorem~\ref{theorem:theorem1} here and its formal definition.
\begin{theorem} %
If transition is $G$-invariant, the expected value operator $E$ over $\mathbb{Z}^2$ is $G$-equivariant:
$$
\left[ g.E^a[V] \right](s) = \left[ E^{g.a}[g.V] \right](s), \quad \text{for all } g = tr \in \mathbb{Z}^2 \rtimes D_4.
$$
\end{theorem}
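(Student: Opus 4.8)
The plan is to prove the identity by directly unfolding both sides and reducing the claim to a single change of summation variable combined with the $G$-invariance of the transition kernel. Because the statement is componentwise — a single action superscript $a$ and a single base point $s$ — nothing deep is required beyond careful bookkeeping of how $g = tr$ acts on the three objects involved: the scalar value field $V : \mathbb{Z}^2 \to \mathbb{R}$, the action index $a$, and the base point $s$. Writing $g = tr$ with $t \in \mathbb{Z}^2$ and $r \in D_4$, I would record at the outset that the action on the scalar field is $[g.V](s') = V(g^{-1}.s')$, while the action on the action index is through the rotation/reflection part only, $g.a = r.a$, since translations act trivially on $\mathcal{A}$.

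With these conventions fixed, I would expand the right-hand side:
\[ [E^{g.a}[g.V]](s) = \sum_{s'} P^{r.a}(s' \mid s)\, V(g^{-1}.s'). \]
I would then reindex the sum by the substitution $s'' = g^{-1}.s'$; because $g$ acts as an isometry of $\mathbb{Z}^2$ it is a bijection of the lattice, so this is a legitimate reindexing (and only finitely many terms are nonzero, as $P^a(\cdot \mid s)$ is locally supported). This rewrites the right-hand side as $\sum_{s''} P^{r.a}(g.s'' \mid s)\, V(s'')$. Next I would invoke the $G$-invariance of the transition, $\bar P(s' \mid s, a) = \bar P((tr).s' \mid (tr).s, r.a)$ from Equation~\ref{eq:transition-equivariance}, with the replacement $a \mapsto a$, $s \mapsto g^{-1}.s$, $s' \mapsto s''$; since $(tr).(g^{-1}.s) = s$ this yields $P^a(s'' \mid g^{-1}.s) = P^{r.a}(g.s'' \mid s)$. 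Substituting back gives $\sum_{s''} P^a(s'' \mid g^{-1}.s)\, V(s'') = E^a[V](g^{-1}.s) = [g.E^a[V]](s)$, which is exactly the left-hand side, closing the argument.

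The main obstacle is conceptual bookkeeping rather than computation: one must pin down the induced-representation conventions precisely — in particular that the translation part $t$ leaves the action channel fixed while the rotation/reflection $r$ permutes it, and that the inverses land in the correct places (namely $g^{-1}$ in the field action versus $r$ in the invariance relation). A secondary point I would state carefully is that the change of variables $s' \mapsto g.s''$ is valid exactly because elements of $\mathbb{Z}^2 \rtimes D_4$ act bijectively on $\mathbb{Z}^2$, and that the reindexing is harmless because the kernel has local (indeed, for the deterministic spatial MDP, single-point) support. Once these conventions are settled, the equality follows from one application of invariance.
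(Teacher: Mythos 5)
Your proof is correct and is essentially the paper's own argument run in reverse: the paper starts from $[g.E^a[V]](s)$, applies the transformation law $[g.f](s)=f((tr)^{-1}.s)$, invokes the $G$-invariance of the transition kernel, and reindexes the sum by the bijection $\tilde s' = (tr).s'$, which are exactly your three ingredients applied in the opposite order starting from $[E^{g.a}[g.V]](s)$. Your conventions (translations acting trivially on the action index, $g^{-1}$ in the field action) match the paper's, so no gap remains.
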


\begin{proof}

$E$ is the expected value operator. We also write the transition probability as

Recall the $G$-invariance condition of transition probability, the group element $g$ acts on $s,a, s'$:
\begin{equation}
\bar{P}(s' \mid s, a ) = \bar{P}(g.s' \mid g.s, g.a ) \equiv \bar{P}\left( (tr).s' \mid (tr).s, r.a \right), \quad \forall g = tr	\in \mathbb{Z}^2 \rtimes D_4, \forall s, a, s',
\end{equation}
where we can uniquely decompose any $g \in \mathbb{Z}^2 \rtimes D_4$ as $t \in \mathbb{Z}^2$ and $r \in D_4$~\citep{cohen_steerable_2016}.
Note that, since the action is the difference between states $a = \Delta s = s' - s$, the translation part $t$ acts trivially on it, so $g.a=(tr).a=r.a$ for all $r \in D_4$.

We transform the feature field and show its equivariance:

\begin{align}
[g.E^a[V]](s) 
& \equiv 
[g.[P^a \cdot V]] (s)
\\ & \equiv 
\sum_{s'} \rho_\text{triv}(r) P^a \left(s' \mid (tr)^{-1}. s \right) \cdot V(s')
\\ & = 
\sum_{s'} \rho_\text{triv}(r) P^{r.a} \left( (tr). s' \mid s \right) \cdot V(s')
\\ & = 
\sum_{\tilde{s}'} \rho_\text{triv}(r) P^{r.a} \left( \tilde{s}' \mid s \right) \cdot V\left( (tr)^{-1} \tilde{s}' \right)
\\ & = 
\sum_{\tilde{s}'} P^{r.a} \left( \tilde{s}' \mid s \right) \cdot \rho_\text{triv}(r) V\left( (tr)^{-1} \tilde{s}' \right)
\\ & \equiv 
[P^{r.a} \cdot [g.V]] (s)
\\ & \equiv 
[E^{r.a}[g.V]] (s).
\end{align}

We use the trivial representation $\rho_\text{triv}(g) = \mathrm{Id}_{1 \times 1} = 1$ to emphasize that (1) the group element $g$ acts on \textit{feature fields} $P^a$ and $V$, and (2) both feature fields $P^a$ and $V$ are scalar-valued and correspond to the one-dimensional trivial representation of $r \in D_4$.

In the third line, we use the $G$-invariance of transition probability.

The fourth line uses substitution $\tilde{s}' \triangleq (tr). s'$, for all $s' \in \mathbb{Z}^2$ and $tr \in \mathbb{Z}^2 \rtimes D_4$. This is an one-to-one mapping and the summation does does not change.

\end{proof}

\subsection{Proof: \textit{expected value operator} as steerable convolution}
\label{subsec:proof_as_steerable_conv}

In this section, we derive how to cast expected value operator as steerable convolution.
The equivariance proof is in the next section.

In Theorem~\ref{theorem:theorem1}, we show equivariance of value iteration in 2D path planning, while it is only for the case that feature fields $P^a$ and $V$ are scalar-valued and correspond to one-dimensional trivial representation of $r \in D_4$.

Here, we provide the derivation for Theorem~\ref{theorem:theorem2} show that steerable CNNs~\citep{cohen_steerable_2016} can achieve value iteration since we could construct the G-invariant transition probability as a steerable convolutional kernel.
This generalizes Theorem~\ref{theorem:theorem1} from scalar-valued kernel (for transition probability) with trivial representation to matrix-valued kernel with any combination of representations, enabling using stack (direct-sum) of feature fields and representations.

We state Theorem~\ref{theorem:theorem2} here for completeness:
\begin{theorem} %
If transition is $G$-invariant, there exists a (one-argument, isotropic) matrix-valued steerable kernel $P^a(s - s')$ (for every action), such that the expected value operator can be written as a steerable convolution and is $G$-equivariant:
\begin{equation}
E^a[V] = P^a \star V, \quad  [g.[P^{a} \star V]](s) = [\minor{P^{g.a}} \star [g.V]](s),  \quad  \forall s \in \mathbb{Z}^2, \forall g \in \mathbb{Z}^2 \rtimes D_4.
\end{equation}
\end{theorem}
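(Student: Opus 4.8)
The plan is to establish the claim in two stages that mirror the two factors of $G = \mathbb{Z}^2 \rtimes D_4$: first use the translation part to recast $E^a$ as a genuine convolution with a one-argument kernel, and then use the $D_4$ part to show that this kernel satisfies the steerability constraint, after which $G$-equivariance follows from the steerable-CNN machinery already cited in the excerpt.

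\emph{First stage (convolution form).} I would start from $E^a[V](s) = \sum_{s'} P^a(s' \mid s) V(s')$ and invoke the $G$-invariance of $\bar P$ restricted to pure translations $t \in \mathbb{Z}^2$, which act trivially on the action index. This gives $P^a(s' \mid s) = P^a(s'+t \mid s+t)$ for all $t$; choosing $t = -s$ collapses the two-argument transition into a function of the displacement alone, $P^a(s' \mid s) = P^a(s' - s \mid 0)$. Defining the one-argument kernel $\tilde P^a(u) := P^a(-u \mid 0)$ yields $\tilde P^a(s-s') = P^a(s'\mid s)$ and hence $E^a[V](s) = \sum_{s'} \tilde P^a(s - s') V(s') = [\tilde P^a \star V](s)$, so the expected-value operator is a convolution. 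This is the step that turns planning into convolution, and it is largely bookkeeping once translation invariance is available (one only absorbs a sign flip into the definition of $\tilde P^a$).

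\emph{Second stage (steerability).} Next I would impose the remaining $G$-invariance for rotations and reflections $r \in D_4$, namely $P^a(s' \mid s) = P^{r.a}(r.s' \mid r.s)$. Rewriting both sides through $\tilde P^a$ and using linearity of $r$ on $\mathbb{Z}^2$ reduces this to the pointwise relation $\tilde P^a(u) = \tilde P^{r.a}(r.u)$ on displacements, equivalently $\tilde P^a(r.u) = \tilde P^{r^{-1}.a}(u)$. The crucial observation is to read the family $\{\tilde P^a\}_{a}$ as a single matrix-valued kernel whose output channels are indexed by the action set, with $D_4$ acting on those channels by the permutation representation $\rho_{\mathrm{out}}$ defined by $[\rho_{\mathrm{out}}(r)w]_a = w_{r^{-1}.a}$, and with the scalar input $V$ carrying $\rho_{\mathrm{in}} = \rho_{\mathrm{triv}}$. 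Under this identification the derived relation is verbatim the $H$-steerability constraint $\psi(r.u) = \rho_{\mathrm{out}}(r)\,\psi(u)\,\rho_{\mathrm{in}}(r^{-1})$ for $H = D_4$. To reach the general stacked-field case asserted by the theorem, I would remark that replacing $\rho_{\mathrm{in}}, \rho_{\mathrm{out}}$ by arbitrary direct sums only enlarges the solution space of the same linear constraint, so a valid matrix-valued steerable kernel exists in every case.

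\emph{Conclusion and equivariance.} Having exhibited $\tilde P^a$ as a one-argument kernel obeying the $D_4$-steerability constraint, the desired identity $[g.[P^a \star V]](s) = [P^{g.a} \star [g.V]](s)$ is exactly the equivariance of a steerable convolution under the induced representations of $G = \mathbb{Z}^2 \rtimes D_4$ \citep{cohen_steerable_2016,weiler_general_2021}, which I would invoke rather than reprove: the translation factor supplies ordinary shift-equivariance and the $D_4$ factor is supplied by the constraint just verified. The main obstacle I anticipate is not either stage in isolation but the careful handling of how $g$ acts on the action index. The permutation of $Q$-channels must be matched exactly to $\rho_{\mathrm{out}}$ (and, for $D_4$ specifically, to the \emph{quotient} representation that quotients out the reflection fixing the four directions), the substitution $\tilde s' = (tr).s'$ must be checked to be a bijection of $\mathbb{Z}^2$ so that the reindexed sum is unchanged, and the sign flip absorbed into $\tilde P^a$ must remain consistent with the convolution convention. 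Keeping all of these group-action conventions coherent is where the argument is most error-prone.
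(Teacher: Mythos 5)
Your proposal is correct and takes essentially the same route as the paper's proof: the paper likewise first uses the ($G$-)invariance of the transition to collapse $P^a(s'\mid s)$ into a one-argument kernel $P^a(s'-s)$ (its Proposition in the proofs appendix), then derives precisely your relation $P^a(s'-s) = P^{r.a}(r.(s'-s))$, and concludes by the equivariance of steerable convolutions under induced representations of $\mathbb{Z}^2 \rtimes D_4$. The only difference is presentational: the paper re-derives that final equivariance chain explicitly ``for completeness'' (acknowledging it follows \citet{weiler_general_2021}), whereas you invoke it by citation, and your explicit packaging of the action-indexed kernels $\{P^a\}_a$ into one matrix-valued kernel with permutation (quotient) output representation and trivial input representation makes precise a point the paper leaves somewhat implicit.
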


\paragraph{Steerable kernels.}

In our earlier definition, $\psi^a$ and $f_\text{in}$ are transition probability and value function, which are both real-valued $\psi^a: \mathbb{Z}^2 \to \mathbb{R}, f_\text{in}: \mathbb{Z}^2 \to \mathbb{R}$.
However, this is a \textit{special case} which corresponds to use one-dimensional \textit{trivial representation} of the fiber group $D_4$.
In the general case in steerable CNNs~\citep{cohen_steerable_2016,weiler_general_2021}, we can choose the feature fields $\psi^a:  \mathbb{Z}^2 \to \mathbb{R}^{C_\text{out} \times C_\text{in}}$ and $f_\text{in}: \mathbb{Z}^2 \to \mathbb{R}^{C_\text{in}}$ and their fiber representations, which we will introduce the group representations of $D_4$ and how to choose in practice in the next section.

\citet{weiler_3d_2018} show that \textit{convolutions} with \textit{steerable kernels} $\psi^a:  \mathbb{Z}^2 \to \mathbb{R}^{C_\text{out} \times C_\text{in}}$ is the most general \textit{equivariant linear map} between steerable feature space, transforming under $\rho_\text{in}$ and $\rho_\text{out}$.
In analogy to the continuous version\footnotemark{} in~\citep{weiler_general_2021}, the convolution is equivariant \textit{iff} the kernel satisfies a $H$-steerability kernel constraint:
\begin{equation}
\label{eq:steerability}
\psi^a (h s) = \rho_\text{out}(h) \psi^a(s) \rho_\text{in} (h^{-1}) \quad h \in H = D_4, s \in \mathbb{Z}^2.
\end{equation}

\footnotetext{\citet{weiler_general_2021} use letter $G$ to denote the stabilizer subgroup $H \le \mathrm{O(2)}$ of $\mathrm{E(2)}$.}

\paragraph{Expected value operation as steerable convolution.}

The foremost step is to show that the expected value operation is a form of convolution and is also $G$-equivariant.
By definition, if we want to write a (linear) operator as a form of convolution, we need one-argument kernel.
\citet{cohen_general_2020} show that every linear equivariant operator is some convolution and provide more details.
For our case, this is formally shown as follows.

\begin{proposition}
If the transition probability is $G$-invariant, it can be expressed as an (one-argument) kernel $P^a(s' | s) = P^a(s' - s)$ that only depends on the difference $s' - s$.
\end{proposition}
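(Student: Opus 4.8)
The plan is to prove this by exploiting only the \emph{translation} subgroup of $G$; the rotation/reflection part $D_4$ plays no role in this particular claim (it enters later, through the $H$-steerability constraint of Eq.~\ref{eq:steerability}). Since $G = \mathbb{Z}^2 \rtimes D_4$ contains $(\mathbb{Z}^2,+)$ as its base/translation subgroup, the assumed $G$-invariance of $P^a$ in particular implies invariance under every pure translation $t \in \mathbb{Z}^2$. This is exactly the classical statement that a translation-equivariant kernel is a convolution kernel, so I would frame the whole argument as an instance of that fact.

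First I would write down the translation-invariance specialization of the hypothesis. Recall from the spatial-MDP construction that an action is a state difference $a = \Delta s = s' - s$, so translations act \emph{trivially} on the action label: $t.a = a$ for all $t \in \mathbb{Z}^2$. Decomposing any $g = tr$ and discarding the $r \in D_4$ factor, the $G$-invariance condition $P^a(s' \mid s) = P^a(g.s' \mid g.s, g.a)$ collapses, for $g = t$, to
\begin{equation}
P^a(s' \mid s) = P^a(s' + t \mid s + t), \qquad \forall\, t \in \mathbb{Z}^2,\ \forall\, s, s' \in \mathbb{Z}^2.
\end{equation}

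Second, I would extract the one-argument kernel by a single substitution. Choosing the particular translation $t = -s$ gives
\begin{equation}
P^a(s' \mid s) = P^a(s' - s \mid \mathbf{0}).
\end{equation}
Defining the one-argument kernel $\tilde P^a(u) \triangleq P^a(u \mid \mathbf{0})$, this reads $P^a(s' \mid s) = \tilde P^a(s' - s)$, which depends only on the difference $s' - s$, as claimed. With this form in hand, the expected-value operator $E^a[V](s) = \sum_{s'} P^a(s'\mid s)\,V(s') = \sum_{s'} \tilde P^a(s'-s)\,V(s')$ is manifestly a (cross-)correlation/convolution, which is the bridge to Theorem~\ref{theorem:theorem2}.

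I do not expect a serious obstacle here, since the argument is essentially one substitution. The only point requiring care is \emph{well-definedness}: one must verify that the trivial action of translations on $a$ is legitimate (guaranteed by the $a = \Delta s$ convention), and that the value $P^a(s'-s \mid \mathbf{0})$ obtained by setting $t = -s$ is genuinely independent of the base point $s$ — which follows immediately because the displayed translation-invariance identity holds for \emph{all} $t$, so any two choices of $s$ yielding the same difference $s'-s$ give the same value. I would state this consistency check explicitly rather than leave it implicit.
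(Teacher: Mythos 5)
Your proof is correct and takes essentially the same route as the paper's: both anchor the two-argument kernel at the origin by applying the group element that translates $s$ to $\mathbf{0}$, and both rely on the fact that translations act trivially on the action label $a = \Delta s$. The only difference is cosmetic --- the paper's derivation composes the translation with an arbitrary $r \in D_4$ which then cancels via a second application of $G$-invariance, whereas you correctly isolate the translation subgroup as the only part of $G$ needed for this particular claim.
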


\begin{proof}

The form of our proof is similar to~\citep{cohen_general_2020}, while its direction is different from us.
We construct a MDP such that the transition probability kernel is $G$-invariant, while \citet{cohen_general_2020} assume the linear operator $\psi \cdot f$ is linear \textit{equivariant} operator on a homogeneous space, and then derive that the kernel is $G$-invariant and expressible as one-argument kernel.
Additionally, our kernel $\psi^a(s, s')$ and $\psi^a(s - s')$ both live on the base space $B = \mathbb{Z}^2$ but not on the group $G = \mathbb{Z}^2 \rtimes D_4$.

We show that the transition probability only depends on the difference $\Delta s = s' - s$, so we can define the two-argument kernel $P^a(s' | s)$ on $\mathcal{S} \times \mathcal{S}$ by an one-argument kernel $P^a(s' - s)$ (for every action $a$) on $\mathcal{S} = \mathbb{Z}^2$, without loss of generality:
\begin{align}
P^a(s' - s) 
&\equiv P^a (\mathbf{0}, s' - s) \\
&= P^{g.a} (g.\mathbf{0}, g.(s' - s))  \\
&= P^{r.a} ((rs).\mathbf{0}, (rs).(s' - s)) \\
&= P^{r.a} (r.s, r.(s' - s + s)) \\
&= P^{r.a} (r.s, r.s') \\
&= P^a (s, s'),
\end{align}
where the second step uses $G$-invariance with $g = sr$, understood as the composition of a translation $s \in \mathbb{Z}^2$ and a transformation in $r \in D_4$. %

\end{proof}

Additionally, we can also derive that, for the one-argument kernel, if we rotate state difference $r.(s'-s)$, the probability is the same for rotated action $r.a$.
\begin{equation}
P^a (s' - s) = P^{r.a}(r.(s' - s)), \text{for all } r \in D_4, s,s' \in \mathbb{Z}^2
\end{equation}

The \textit{expected value operator} with two-argument kernel can be then written as 
\begin{equation}
E[V] (s) 
\equiv [P^a \cdot V](s)
= \sum_{s'} P^a ( s' | s) V(s') 
= \sum_{s'} P^a (s' - s) V(s')
\equiv [P^a \star V](s).
\end{equation}
Note that we do not differentiate between cross-correlation ($s' -s$) and convolution ($s - s'$).

\subsection{Proof: equivariance of \textit{expected future value}}
\label{subsec:proof_equivariance}

Our derivation follows the existing work on group convolution and steerable convolution networks~\citep{cohen_group_2016,cohen_steerable_2016,weiler_general_2021,cohen_general_2020}.
However, the goal of providing the proof is not just for completeness, but instead to emphasize the close connection between how we formulate our planning problem and the literature of steerable CNNs, which explains and justifies our formulation.

Additionally, there are several subtle differences worth to mention.
(1) Throughout the paper, we do not discuss kernels or fields that live on a group $G$ to make it more approachable.
Nevertheless, group convolutions are a special case of steerable convolutions with fiber representation $\rho$ as regular representation.
(2) We use $\mathbb{Z}^2$ as running example. Some prior work uses $\mathbb{R}^2$ or $\mathbb{Z}^2$, but they are merely just differ in integral and summation.
(3) The definition of convolution and cross-correlation might be defined and used interchangeably in the literature of (equivariant) CNNs.

\paragraph{Notation.}

To keep notation clear and consistent with the literature~\citep{cohen_steerable_2016,cohen_general_2020,weiler_general_2021}, we denote the transition probability $\bar{P}(s' | s, a) \triangleq \psi^a (s, s') \in \mathbb{R}$ (one kernel for an action) and value function as $V(s') \triangleq f_\text{in}(s') \in \mathbb{R}$, and the resulting expected value as $f_\text{out}^a (s) = \sum_{s'} \psi^a (s, s') f_\text{in}(s')$ (given a specific action $a$).

\paragraph{Transformation laws: induced representation.}
For some group acting on the base space $\mathbb{Z}^2$, the signals $f: \mathbb{Z}^2 \to \mathbb{R}^{c}$ are transformed like~\cite{cohen_steerable_2016}:
\begin{equation}
[\pi(g) f](x) = f(g^{-1} x)
\end{equation}
Apply a translation $t$ and a transformation $r \in D_4$ to $f$, we get $\pi(tr) f$.
The transformation law on the input space $f_\text{in}$ is~\citep{cohen_steerable_2016,weiler_general_2021}:
\begin{equation}
f(x) \mapsto \left[\pi(t r) f \right] (x)  \triangleq  \rho(r)\cdot \left[ f \left((t r)^{-1} x\right)\right]
\end{equation}

The transformation law of the output space after applying $\pi_\text{in}$ on input $f_\text{in}$ is given by~\cite{cohen_steerable_2016}:
\begin{equation}
\left[ \psi \star f \right] (x) \mapsto \left[\psi \star \left[\pi(t r) f \right] \right](x)  \triangleq  \rho(r)\cdot \left[[\psi \star f]\left((t r)^{-1} x\right)\right].
\end{equation}
In our case, the output space is $f^a_\text{out}: \mathbb{Z}^2 \to \mathbb{R}^{C_\text{out}}$ and the input space is $f_\text{in}: \mathbb{Z}^2 \to \mathbb{R}^{C_\text{in}}$.
Intuitively, if we rotate a vector field (fibers represent arrows) by the induced representation $\pi(tr)$ of $f$, we also need to rotate the direction of arrows by $\rho(r), r\in D_4$.

\paragraph{Equivariance.}
Now we prove the steerable convolution is equivariant:
\begin{equation}
\left[\psi^a \star \left[\pi_\text{in}(g) f_\text{in} \right] \right] (s) = \left[\pi_\text{out}(g) f_\text{out}^a \right] (s)
\quad \forall s \in \mathcal{S}, \forall g \in G.
\end{equation}

The induced representation of input field $f_\text{in}$ is induced by the fiber representation $\rho_\text{in}$, expressed by $\pi_\text{in} \triangleq \mathrm{ind}^G_H \rho_\text{in} = \mathrm{ind}^{\mathbb{Z}^2 \rtimes D_4}_{D_4} \rho_\text{in}$, where $\rho_\text{in}$ is the fiber representation of group $H=D_4$.
The induced representation of output field $\pi_\text{out}$ is analogously from $\rho_\text{out}$.

\citet{weiler_general_2021} proved equivariance of steerable convolutions for $\mathbb{R}^2$ case, while we include the proof under our setup for completeness.
The definition in~\citep{weiler_general_2021} uses a form of \textit{cross-correlation} and we use \textit{convolution}, while it is usually referred to interchangeably in the literature and is equivalent.
\citet{cohen_steerable_2016,weiler_3d_2018,weiler_general_2021,cohen_general_2020,cohen_equivariant_2021} provide more details and we refer the readers to them for more comprehensive account.

The convolution on discrete grids $\mathbb{Z}^2$ with input field $f_\text{in}$ transformed by the induced representation $\pi_\text{in}$ gives:

\begin{equation}
\begin{aligned}
 [\psi^a \star [\pi_\text{in}(rt) f_\text{in}] ] (s)
& = \sum_{s' \in \mathbb{Z}^2} \psi^a (s - s') [\pi_\text{in}(rt) f_\text{in}] (s') \\
& = \sum_{s'\in \mathbb{Z}^2} \psi^a (s - s') \rho_\text{in}(r) f_\text{in} (r^{-1}(s' - t)) \\
& = \sum_{s'\in \mathbb{Z}^2} \rho_\text{out}(r) \psi^a (r^{-1}(s - s')) \rho_\text{in}(r)^{-1} \rho_\text{in}(r) f_\text{in} (r^{-1}(s' - t)) \\
& = \rho_\text{out}(r) \sum_{s'\in \mathbb{Z}^2} \psi^a (r^{-1}(s - s'))  f_\text{in} (r^{-1}(s' - t)) \\
& = \rho_\text{out}(r) \sum_{\tilde{s}\in \mathbb{Z}^2} \psi^a (r^{-1}(s - t) - \tilde{s})  f_\text{in} (\tilde{s}) \\
& = \rho_\text{out}(r)  f_\text{out} (r^{-1}(s - t)) \\
& = \left[\pi_\text{out}(rt) f_\text{out}^a \right] (s),
\end{aligned}
\end{equation}
where $s' \in \mathcal{S} = \mathbb{Z}^2$, and thus satisfies the equivariance condition:
\begin{equation}
    \left[\psi^a \star \left[\pi_\text{in}(rt) f_\text{in} \right] \right] (s) = \left[\pi_\text{out}(rt) f_\text{out}^a \right] (s) , \forall s \in \mathbb{Z}^2, \forall rt \in \mathbb{Z}^2 \rtimes D_4.
\end{equation}

\begin{enumerate}
    \item Definition of $\star$
    \item Transformation law of the induced representation $\pi_\text{in}$~\citep{cohen_steerable_2016,weiler_general_2021}
    \item Kernel steerability $\psi^a (s) = \rho_\text{out}(h) \psi^a(h^{-1} s) \rho_\text{in} (h^{-1}) $~\citep{weiler_general_2021}
    \item Move and cancel
    \item Substitutes $\tilde{s} = r^{-1}(s' - t)$, $r^{-1} s' = r^{-1} t + \tilde{s}$, so $r^{-1}(s - s') = r^{-1}(s - t) - \tilde{s}$. Since $r \in D_4$ and $s - s' \in \mathbb{Z}^2$, the result is still in $p4m$, it is one-to-one correspondence $p4m \times \mathbb{Z}^2 \to \mathbb{Z}^2$, and the summation does not change. \citet{weiler_general_2021} analogously considers the continuous case, where $D_4$ is orthogonal transformations so the Jacobian is always $1$.
    \item Definition of $\star$
    \item Transform law of the induced representation $\pi_\text{out}$
\end{enumerate}

\subsection{Proof: equivariance of steerable value iteration}
\label{subsec:proof_equiv_vi}

\begin{figure*}[t]
\centering
\subfigure{
\includegraphics[width=.95\linewidth]{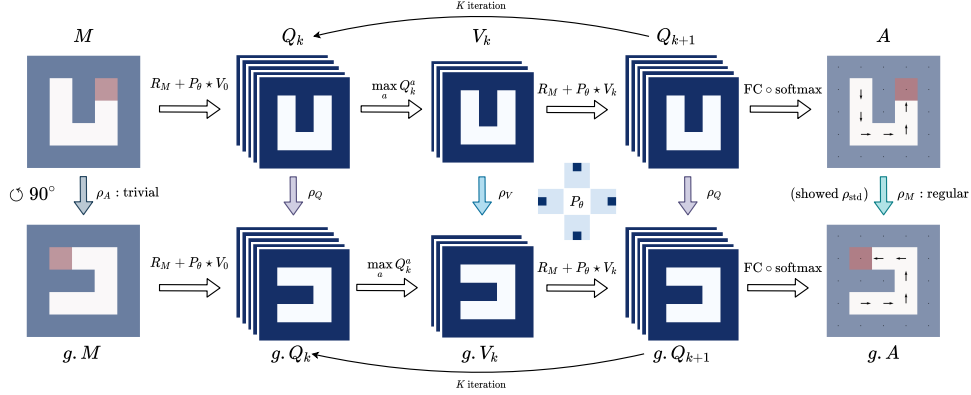}
}
\vspace{-15pt}
\centering
\caption{
\textit{We attach a copy of the commutative diagram of SymVIN to show the equivariance of steerable value iteration.}
Commutative diagram for the full pipeline of SymVIN on steerable feature fields over $\mathbb{Z}^2$ (every grid).
If rotating the input map $M$ by $\pi_{M}(g)$ of any $g$, the output action $A = \texttt{SymVIN} (M)$ is guaranteed to be transformed by $\pi_{A}(g)$, i.e. the entire steerable SymVIN is equivariant under induced representations $\pi_{M}$ and $\pi_{A}$: $\texttt{SymVIN} (\pi_{M}(g) M) = \pi_{A}(g) \texttt{SymVIN} (M)$.
We use stacked feature fields to emphasize that SymVIN supports direct-sum of representations beyond scalar-valued.
}
\vspace{-15pt}
\label{fig:full-commutative-appendix}
\end{figure*}

As the third and final step, we would like to show that the full steerable value iteration pipeline is equivariant under $G = \mathbb{Z}^2 \rtimes D_4$.
We need to show that every operation in the steerable value iteration is equivariant.

The key is to prove that $\max_a$ is an equivariant non-linearity over feature fields, which follows Section D.2 in~\citep{weiler_general_2021}.

\paragraph{Step 1: $V \mapsto Q$.}

Here, we prove the equivariance of $Q_k^a(s) =  \bar{R}_M^a(s) + \gamma \times \left[ \bar{P}^a_\theta \star V_k \right] (s)$.
First, let the group acts on both sides:
\begin{align}
 Q_k^a(s) & =  \bar{R}_M^a(s) + \gamma \times \left[ \bar{P}^a_\theta \star V_k \right] (s) \\
\iff  [\pi_\text{out}(g) Q_k^a] (s) & =  [\pi_\text{out}(g) \bar{R}_M^a] (s) + \gamma \times \left[\pi_\text{out}(g) \left[ \bar{P}^a_\theta \star V_k \right] \right] (s) \\
\iff  [\pi_\text{out}(g) Q_k^a] (s) & =  [\pi_\text{out}(g) \bar{R}_M^a] (s) + \gamma \times \left[ \bar{P}^a_\theta \star \left[ \pi_\text{in}(g) V_k \right] \right] (s) \\
\iff  Q_k^{g.a}(g^{-1}s) & =  \bar{R}_{g.M}^{g.a}(g^{-1}s) + \gamma \times \left[ \bar{P}^{g.a}_\theta \star V_k \right] (g^{-1} s) \\
\iff  Q_k^{\tilde{a}}(\tilde{s}) & =  \bar{R}_{\pi_\text{M}(g) M}^{\tilde{a}}(\tilde{s}) + \gamma \times \left[ \bar{P}^{\tilde{a}}_\theta \star V_k \right] (\tilde{s})
\end{align}

The the last step we substitute $\tilde{s} = g^{-1}s$ and $\tilde{a} = g.a$.

$M: \mathbb{Z}^2 \to \{0,1\}^2$ is the concatenation of maze occupancy map and goal map, which also lives on $\mathbb{Z}^2$.
We use two copies of trivial representations as fiber representation $\rho_\text{M}$, and denote the induced representation of the field $M$ as $\pi_\text{M}$.

Then, we prove the equivariance: if we transform the occupancy map (and goal map), the value iteration should have both input $V$ and output $Q$ transformed.
Since this is an iterative process, the only input to the value iteration is actually the occupancy map $M: \mathbb{Z}^2 \to \{0,1\}^2$.

Before that, we observe that the reward also has $G$-invariance when we have map as input:
\begin{equation}
	\bar{R}_{M}^a(s) = \bar{R}^{g.a}_{g.M}(g.s).
\end{equation}
Additionally, since the reward $\bar{R}_{M}^a(s)$ means the reward at given position in map $M$ \textbf{after executing action $a$}, when we transform the map, we also need to transform the action: $\bar{R}^{g.a}_{g.M}(s)$.

Since it is iterative process, let the $Q$-map being transformed by $g$:
\begin{align}
[g.Q_k^a](s) & = Q_k^a(g^{-1} s) \\
& = \bar{R}_{M}^a(g^{-1} s) + \gamma \times \left[ \bar{P}^a_\theta \star V_k \right] (g^{-1} s) \\
& = \bar{R}_{g.M}^{g.a}(s) + \gamma \times \left[ \bar{P}^{a}_\theta \star V_k \right] (g^{-1} s) \\
& = \bar{R}_{g.M}^{g.a}(s) + \gamma \times \left[ \bar{P}^{g.a}_\theta \star [g. V_k] \right] (s)
\end{align}

The second last step uses the $G$-invariance condition $\bar{R}_{M}^a(s) = \bar{R}^{g.a}_{g.M}(g.s)$.
The last step uses the equivariance of steerable convolution.

It should be understood as: (1) transforming map $g.M$ and action $g.a$, is always equal to (2) transforming values $[g.Q^a_k]$ and $[g.V_k]$.
This proves the equivariance visually shown in Figure~\ref{fig:full-commutative-appendix}.

\paragraph{Step 2: $Q \mapsto V$.}

The second step is to show for $V_{k+1}(s) = \max_a Q_k^a(s)$.

Intuitively, we sum over every channel of each representation.
For example, if we have $N$ copies of the regular representation with size $|D_4| = 8$, we transform the tensor $(N \times 8) \times m \times m$ to $(1 \times 8) \times m \times m$ along the $N$ channel.
Thus, how we use the $8 \times 8$ regular representation to transform the $N \times 8$ channels still holds for $1 \times 8$, which implies equivariance.
The $m \times m$ spatial map channels form the base space $\mathbb{Z}^2$ and are transformed as usual (spatially rotated).

\citet{weiler_general_2021} provide detailed illustration and proofs for equivariance of different types of non-linearities.

\paragraph{Step 3: multiple iterations.}

Since each layer is equivariant (under induced representations), \citet{cohen_group_2016,kondor_generalization_2018,cohen_general_2020} show that stacking multiple equivariant layers is also equivariant.
Thus, we know iteratively applying step 1 and 2 (\textit{equivariant steerable Bellman operator}) is also \textit{equivariant} (\textit{steerable value iteration}).

\section{Implementation Details}
\label{sec:add-practice}

\subsection{Implementation of SymGPPN}
\label{subsec:sym-conv-gppn}
ConvGPPN~\citep{Kong2022} is inspired by VIN and GPPN. To avoid the training issues in VIN, GPPN proposes to use LSTM to alleviate them. In particular, it does not use max pooling in the VIN. Instead, it uses a CNN and LSTM to mimic the value iteration process. ConvGPPN, on the other hand, integrates CNN into LSTM, resulting in a single component convLSTM for value iteration. We found that ConvGPPN performs better than GPPN in most cases. Based on ConvGPPN, SymGPPN replaces each convolutional layer with steerable convolutional layer.

\subsection{Implementation of $\max$ operation}
\label{subsec:max-implement}

Here, we consider how to implement the $\max$ operation in $V_{k+1}(s) = \max_a Q_k^a(s)$.
The $\max$ is taken over every state, so the computation mainly depends on our choice of fiber representation.

For example, if we use \textit{trivial representations} for both input and output, the input would be $Q_k: \mathbb{Z}^2 \to \mathbb{R}^{1 * C_A}$ and the output is state-value $V_k: \mathbb{Z}^2 \to \mathbb{R}$.
This recovers the default value iteration since we take $\max$ over $\mathbb{R}^{C_A}$ vector.

In steerable CNNs, we can use stack of fiber representations.
We can choose from regular-regular, trivial-trivial, and regular-trivial (trivial-regular is not considered).

We already covered \textit{trivial} representations for both input and output, they would be $Q_k: \mathbb{Z}^2 \to \mathbb{R}^{C_Q * C_A}$ and $V_k: \mathbb{Z}^2 \to \mathbb{R}^{C_V}$ with $C_Q=C_V=1$, since every channel would need a trivial representation.

If we use \textit{regular} representation for $Q$ and \textit{trivial} for $V$, they are $Q_k: \mathbb{Z}^2 \to \mathbb{R}^{C_Q * C_A}$ and $V_k: \mathbb{Z}^2 \to \mathbb{R}^{C_V}$ with $C_Q=|D_4|=8$ and $C_V=1$.
It degenerates that we just take $\max$ over all $C_Q * C_A$ channels.

For both using regular representations, we need to make sure they use the same fiber group (such as $D_4$ or $C_4$), so $C_Q=C_V$.
If using $D_4$, we have $Q_k: \mathbb{Z}^2 \to \mathbb{R}^{8 * C_A}$ and $V_k: \mathbb{Z}^2 \to \mathbb{R}^{8}$, and we take max over every $C_A$ channels (for every location) and have $8$ channels left, which are used as $\mathbb{Z}^2 \to \mathbb{R}^{8}$.

Empirically, we found using regular representations for both works the best overall.

\section{Experiment Details and Additional Results}
\label{sec:add-experiments}

\subsection{Environment Setup}

\paragraph{Action space.}
Note that the MDP action space $\mathcal{A}$ needs to be \textit{compatible} with the group action $G \times \mathcal{A} \to \mathcal{A}$.
Since the E2CNN package~\citep{weiler_general_2021} uses \textit{counterclockwise} rotations as generators for rotation groups $C_n$, the action space needs to be \textit{counterclockwise}.

We show the figures for \textbf{Configuration-space and Workspace manipulation} in Figure~\ref{fig:env-mani}, and the figures for \textbf{2D and Visual Navigation} in Figure~\ref{fig:env-nav}.

\begin{figure}[h]
\centering

\subfigure{
\includegraphics[height=0.115\textheight]{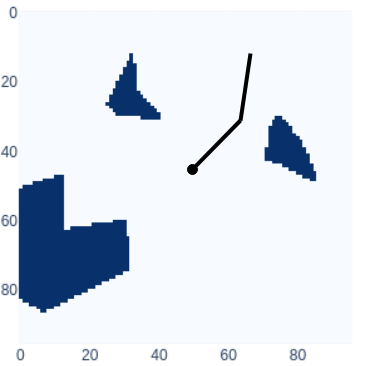}
}
\subfigure{
\includegraphics[height=0.115\textheight]{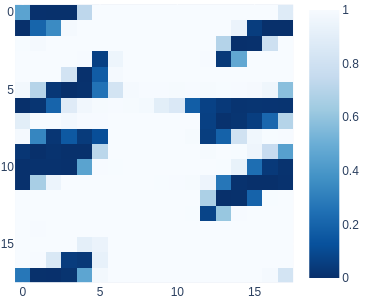}
}
\subfigure{
\includegraphics[height=0.115\textheight]{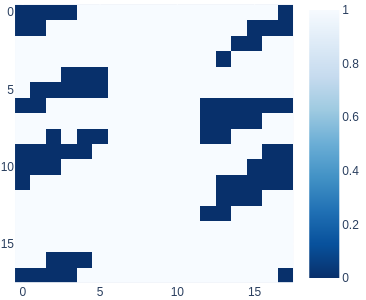}
}
\subfigure{
\includegraphics[height=0.125\textheight]{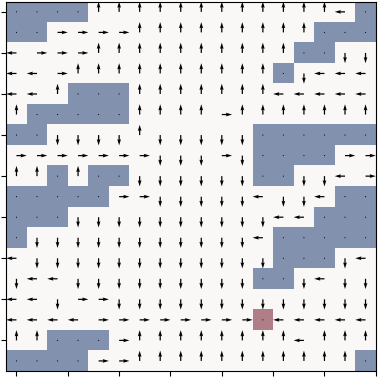}
}
\vspace{-5pt}
\centering
\caption{
A set of visualization for a 2-joint manipulation task. The obstacles are randomly generated.
\textbf{(1)} The 2-joint manipulation task shown in top-down workspace with $96\times 96$ resolution. This is used as the input to the \textbf{Workspace Manipulation} task.
\textbf{(2)} The predicted configuration space in resolution $18 \times 18$ from a mapper module, which is jointly optimized with a planner network.
\textbf{(3)} The ground truth configuration space from a handcraft algorithm in resolution $18 \times 18$. This is used as input to the \textbf{Configuration-space (C-space) Manipulation} task and as target in the auxiliary loss for the Workspace Manipulation task (as done in SPT~\citep{chaplot_differentiable_2021}).
\textbf{(4)} The predicted policy (overlaid with C-space obstacle for visualization) from an end-to-end trained SymVIN model that uses a mapper to take the top-down workspace image and plans on a learned map. The red block is the goal position.
}
\vspace{-10pt}
 \label{fig:env-mani}
\end{figure}

\begin{figure}[h]
\centering

\subfigure{
\includegraphics[height=0.3\textheight]{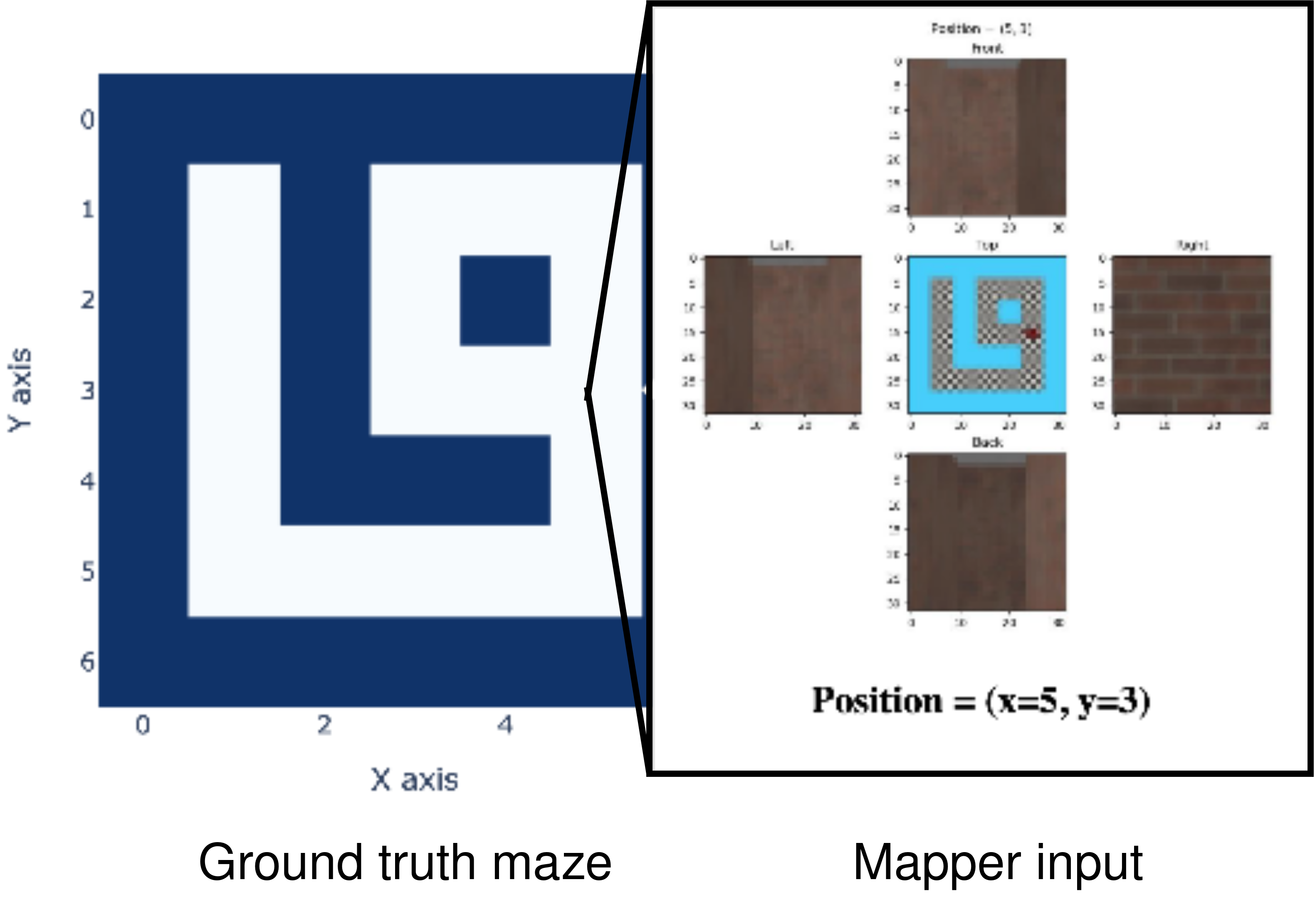}
}\\

\subfigure{
\includegraphics[height=0.15\textheight]{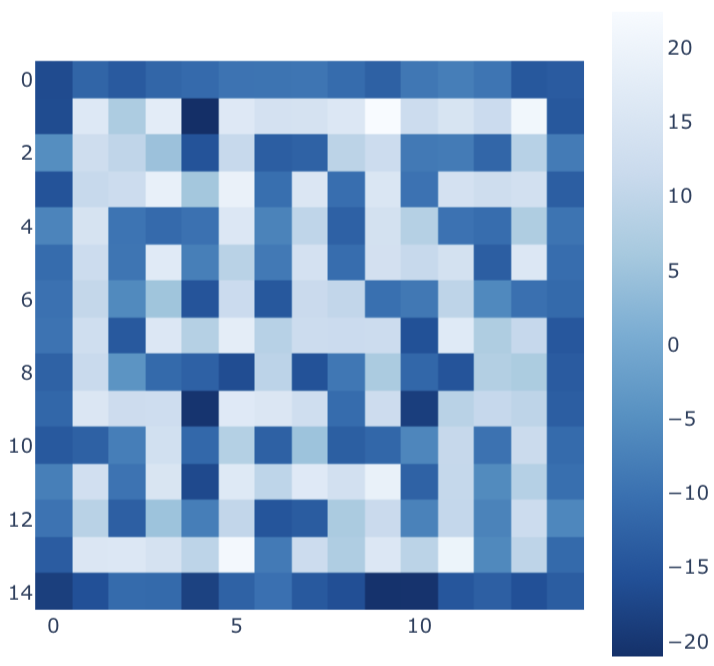}
}
\subfigure{
\includegraphics[height=0.15\textheight]{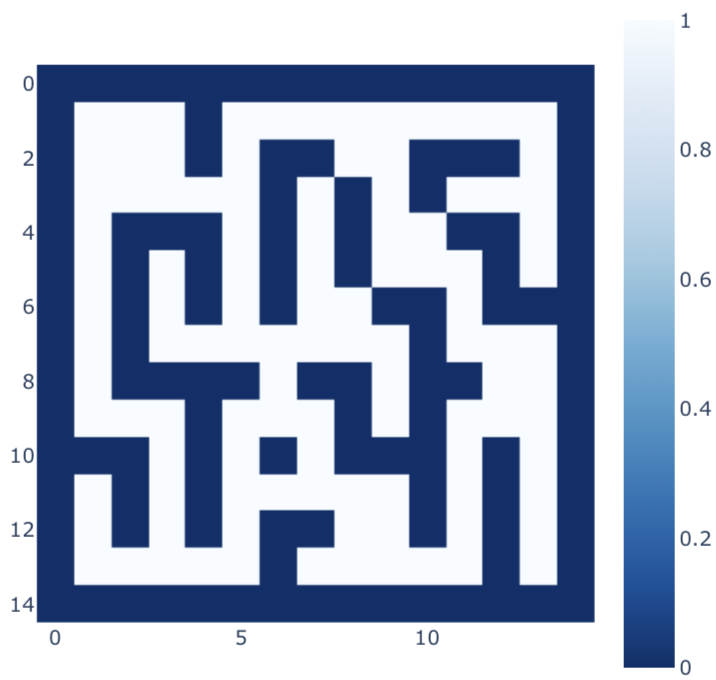}
}
\subfigure{
\includegraphics[height=0.15\textheight]{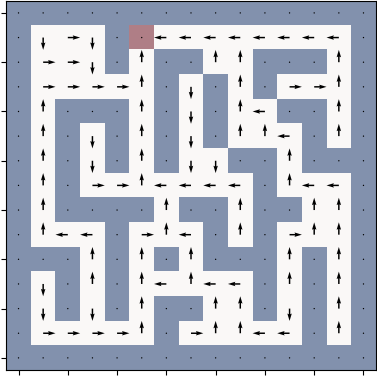}
}
\vspace{-5pt}
\centering
\caption{
A set of visualization for 2D navigation and visual navigation. The maze is randomly generated.
\textbf{(1, top)} The 3D visual navigation environment generated by an illustrative $7 \times 7$ map, where we highlight the panoramic view at a position $(5, 3)$ with four RGB images (resolution $32 \times 32 \times 3$). The entire observation tensor for this $7 \times 7$ example visual navigation environment is $7 \times 7 \times 4 \times 32 \times 32 \times 3$. This is used as the input to the \textbf{Visual Navigation} task.
\textbf{(2)} Another predicted map in resolution $15 \times 15$ from a mapper module, which is jointly optimized with a planner network. We show the visualization a different map used in actual training.
\textbf{(3)} The ground truth map in resolution $15 \times 15$. This is also used as input to the \textbf{2D Navigation} task and as target in the auxiliary loss for the Visual Navigation task (as done in GPPN).
\textbf{(4)} The predicted policy from an end-to-end trained SymVIN model that uses a mapper to take the observation images (formed as a tensor) and plans on a learned map. The red block is the goal position.
}
\vspace{-10pt}
 \label{fig:env-nav}
\end{figure}

\paragraph{Manipulation.}

For planning in configuration space, the configuration space of the 2 DoFs manipulator has no constraints in the $\{0, \pi\}$ boundaries, i.e., no joint limits. To reflect this nature of the configuration space in manipulation tasks, we use circular padding before convolution operation. The circular padding is applied to convolution layers in VIN, SymVIN, ConvGPPN, and SymGPPN. Moreover, in GPPN, there is a convolution encoder before the LSTM layer. We add the circular padding in the convolution layers in GPPN as well.

In \textbf{2-DOF manipulation} in configuration space, we adopt the setting in~\citep{chaplot_differentiable_2021} and train networks to take as input of configuration space, represented by two joints. We randomly generate $0$ to $5$ obstacles in the manipulator workspace. Then the 2 degree-of-freedom (DOF) configuration space is constructed from workspace and discretized into 2D grid with sizes $\{ 18, 36 \}$, corresponding to bins of $20^\circ$ and $10^\circ$, respectively.

 We allow each joint to rotate over $2\pi$, so the configuration space of 2-DOF manipulation forms a torus $\mathbb{T}^2$.
 Thus, the both boundaries need to be connected when generating action demonstrations, and (equivariant) convolutions need to be circular (with padding mode) to wrap around for all methods.
 We allow each joint to rotate over $2\pi$, so the both boundaries in configuration space need to be connected when generating action demonstrations, and (equivariant) convolutions need to be circular (with padding mode) to wrap around for all methods.

\subsection{Building Mapper Networks}
\label{subsec:mapper-net}

\paragraph{For visual navigation.}
For navigation, we follow the setting in GPPN~\citep{lee_gated_2018}.
The input is $m \times m$ panoramic egocentric RGB images in 4 directions of resolution $32 \times 32 \times 3$, which forms a tensor of $m \times m \times 4 \times 32 \times 32 \times 3$.
A mapper network converts every image into a $256$-dimensional embedding and results in a tensor in shape $m \times m \times 4 \times 256$ and then predicts map layout $m \times m \times 1$.

For the first image encoding part, we use a CNN with first layer of $32$ filters of size $8 \times 8$ and stride of $4 \times 4$, and second layer with $64$ filters of size $4 \times 4$ and stride of $2 \times 2$, with a final linear layer of size $256$.

The second obstacle prediction part, the first layer has $64$ filters and the second layer has $1$ filter, all with filter size $3 \times 3$ and stride $1 \times 1$.

\paragraph{For workspace manipulation.}

For \textbf{workspace manipulation}, we use U-net~\cite{Unet} with residual-connection~\cite{resnet} as a mapper, see Figure.\ref{fig:Unet}.
The input is $96 \times 96$ top-down occupancy grid of the workspace with obstacles, and the target is to output $18 \times 18$ configuration space as the maps for planning.

\begin{figure*}[t]
\centering
\subfigure{
\includegraphics[width=.95\linewidth]{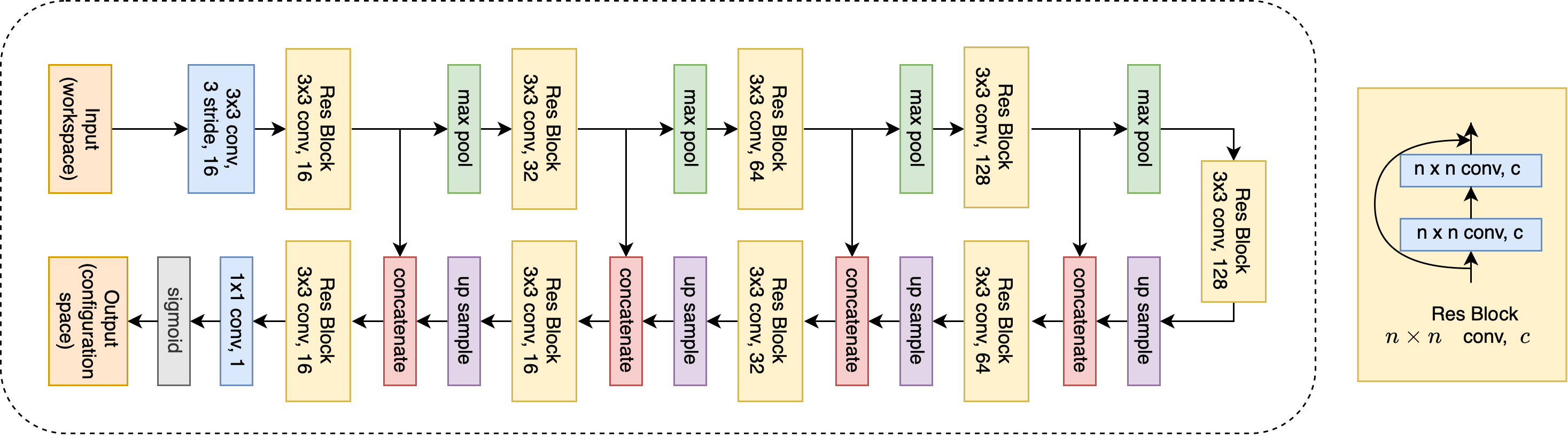}
}
\vspace{-15pt}
\centering
\caption{The U-net architecture we used as manipulation mapper.
}
\vspace{-15pt}
\label{fig:Unet}
\end{figure*}

 During training, we pre-train the mapper and the planner separately for $15$ epochs. Where the mapper takes manipulator workspace and outputs configuration space. The mapper is trained to minimize the binary cross entropy between output and ground truth configurations space. The planner is trained in the same way as described in Section~\ref{subsec:given_map}. After pre-training, we switch the input to the planner from ground truth configuration space to the one from the mapper. During testing, we follow the pipeline in~\cite{chaplot_differentiable_2021} that the mapper-planner only have access to the manipulator workspace.

\subsection{Training Setup}

We try to mimic the setup in VIN and GPPN~\citep{lee_gated_2018}.

For non-SymPlan related parameters, we use learning rate of $10^{-3}$, batch size of $32$ if possible (GPPN variants need smaller), RMSprop optimizer.

For SymPlan parameters, we use $150$ hidden channels (or $150$ \textit{trivial} representations for SymPlan methods) to process the input map. 
We use $100$ hidden channels for Q-value for VIN (or $100$ \textit{regular} representations for SymVIN), and use $40$ hidden channels for Q-value for GPPN and ConvGPPN (or $40$ \textit{regular} representations for SymGPPN on $15\times 15$, and $20$ for larger maps because of memory constraint).

\subsection{Visualization of Learned Models}

We visualize a trained VIN and a SymVIN, evaluated on a $15\times 15$ map and its rotated version.
For non-symmetric VIN in Figure~\ref{fig:policy-vin}, the learned policy is obviously not equivariant under rotation.

We also visualize SymVIN on larger map sizes: $28 \times 28$ and $50 \times 50$, to demonstrate its performance and equivariance.

\begin{figure*}[!ht]
\centering
\subfigure{
\includegraphics[width=0.4\linewidth]{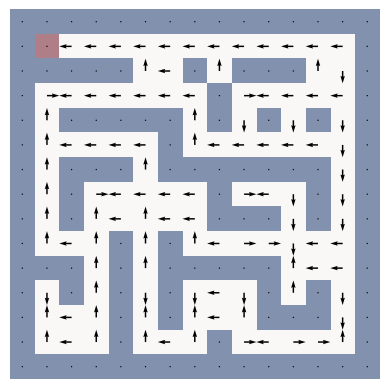}
}
\hfill
\subfigure{
\includegraphics[width=0.4\linewidth]{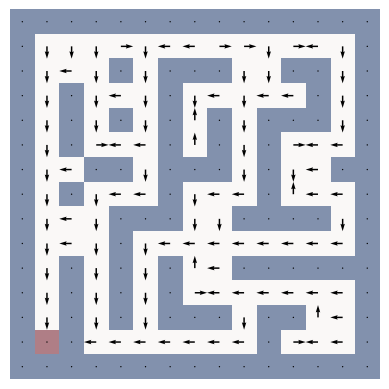}
} 
\vspace{-10pt}
\centering
\caption{
A trained VIN evaluated on a $15\times 15$ map and its rotated version.
It is obvious that the learned policy is not equivariant under rotation.
}
\label{fig:policy-vin}
\end{figure*}

\begin{figure*}[!ht]
\centering
\subfigure{
\includegraphics[width=0.4\linewidth]{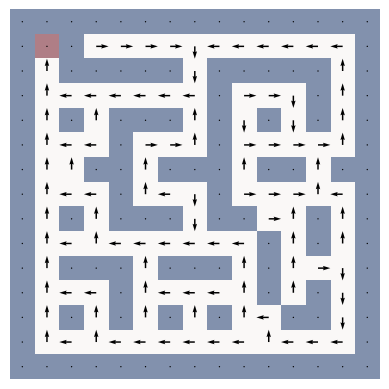}
}
\hfill
\subfigure{
\includegraphics[width=0.4\linewidth]{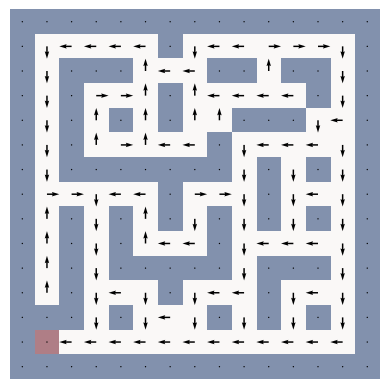}
} 
\vspace{-10pt}
\centering
\caption{
A trained SymVIN evaluated on a $15\times 15$ map and its rotated version.
}
\end{figure*}

\begin{figure*}[!ht]
\centering
\subfigure{
\includegraphics[width=0.4\linewidth]{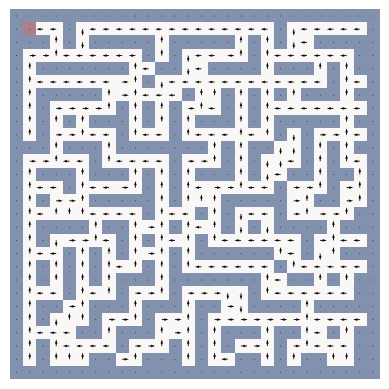}
}
\hfill
\subfigure{
\includegraphics[width=0.4\linewidth]{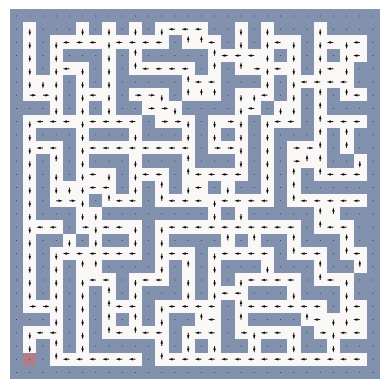}
} 
\vspace{-10pt}
\centering
\caption{
A fully trained SymVIN evaluated on a $28\times 28$ map and its rotated version.
}
\end{figure*}

\begin{figure*}[!ht]
\centering
\subfigure{
\includegraphics[width=0.4\linewidth]{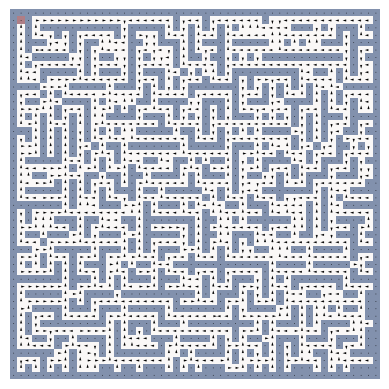}
}
\hfill
\subfigure{
\includegraphics[width=0.4\linewidth]{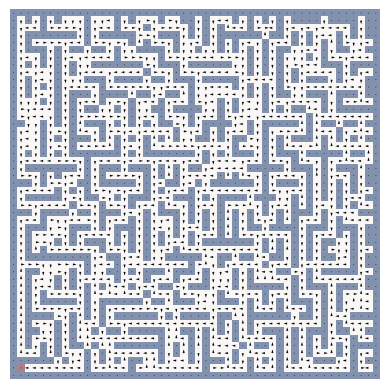}
} 
\vspace{-10pt}
\centering
\caption{
A fully trained SymVIN evaluated on a $50\times 50$ map and its rotated version.
}
\end{figure*}

\subsection{SPL metric}
\label{subsec:add-spl-metric}

We additionally provide SPL metric of the 2D navigation experiments with different seeds.
The trends should be similar to the results in the main paper.

\begin{table}[h]
 \centering
      \vspace{-5pt}
     \caption{
     \small
     Averaged test success rate ($\%$) over 5 seeds for using 10K/2K/2K dataset on 2D navigation.
     }
     \vspace{-5pt}
    \label{tab:all-test-10k-sr} 
    \small
    \begin{tabular}{l|lll}
    \toprule
        Methods & $15\times 15$ & $28\times 28$  & $50 \times 50$ \\
      	\midrule

   VIN  & $ 69.94 {\color{gray}\scriptstyle \pm 2.92} $ & $ 61.43 {\color{gray}\scriptstyle \pm 5.65} $ & $ 54.62 {\color{gray}\scriptstyle \pm 9.76} $ \\
  SymVIN  & $ 98.94 {\color{gray}\scriptstyle \pm 0.39} $ & $ 97.56 {\color{gray}\scriptstyle \pm 0.71} $ & $ 96.27 {\color{gray}\scriptstyle \pm 0.27} $ \\
 
 \midrule
  GPPN  & $ 96.48 {\color{gray}\scriptstyle \pm 0.65} $ & $ 94.35 {\color{gray}\scriptstyle \pm 1.14} $ & $ 88.25 {\color{gray}\scriptstyle \pm 4.25} $ \\
 ConvGPPN  & $ 99.51 {\color{gray}\scriptstyle \pm 0.19} $ & $ 96.44 {\color{gray}\scriptstyle \pm 8.56} $ & $ 96.62 {\color{gray}\scriptstyle \pm 0.38} $ \\
 SymGPPN  & $ 99.68 {\color{gray}\scriptstyle \pm 0.00} $ & $ 99.98 {\color{gray}\scriptstyle \pm 0.02} $ & $ 99.91 {\color{gray}\scriptstyle \pm 0.05} $ \\

        \bottomrule
    \end{tabular}
    \vspace{-5pt}
\end{table}

\begin{table}[h]
 \centering
      \vspace{-5pt}
     \caption{
     \small
     Averaged test SPL ($\%$) over 5 seeds for using 10K/2K/2K dataset on 2D navigation.
     }
     \vspace{-5pt}
    \label{tab:all-test-10k-spl} 
    \small
    \begin{tabular}{l|lll}
    \toprule
        Methods & $15\times 15$ & $28\times 28$  & $50 \times 50$ \\
      	\midrule

 VIN  & $ 68.62 {\color{gray}\scriptstyle \pm 3.02} $ & $ 60.81 {\color{gray}\scriptstyle \pm 5.73} $ & $ 54.14 {\color{gray}\scriptstyle \pm 9.90} $ \\
 SymVIN  & $ 98.08 {\color{gray}\scriptstyle \pm 0.50} $ & $ 97.23 {\color{gray}\scriptstyle \pm 0.67} $ & $ 96.07 {\color{gray}\scriptstyle \pm 0.27} $ \\
 
 \midrule
 
 GPPN  & $ 95.55 {\color{gray}\scriptstyle \pm 0.64} $ & $ 94.00 {\color{gray}\scriptstyle \pm 1.17} $ & $ 87.79 {\color{gray}\scriptstyle \pm 4.30} $ \\
 ConvGPPN  & $ 98.73 {\color{gray}\scriptstyle \pm 0.19} $ & $ 96.22 {\color{gray}\scriptstyle \pm 8.57} $ & $ 96.44 {\color{gray}\scriptstyle \pm 0.40} $ \\
 SymGPPN  & $ 98.91 {\color{gray}\scriptstyle \pm 0.01} $ & $ 99.76 {\color{gray}\scriptstyle \pm 0.06} $ & $ 99.82 {\color{gray}\scriptstyle \pm 0.06} $ \\

        \bottomrule
    \end{tabular}
    \vspace{-5pt}
\end{table}

The performance under SPL metric is pretty similar to success rate.
This indicates that most paths are pretty close to the optimal length from all planners.
Thus, we decided to not include this to avoid extra concept in experiment section.

\subsection{Ablation Study}

\paragraph{Additional training curves.}

We also provide other training curves that we only show test numbers in the main text.

\begin{figure}[!ht]
\centering
\subfigure{
\includegraphics[width=0.48\linewidth]{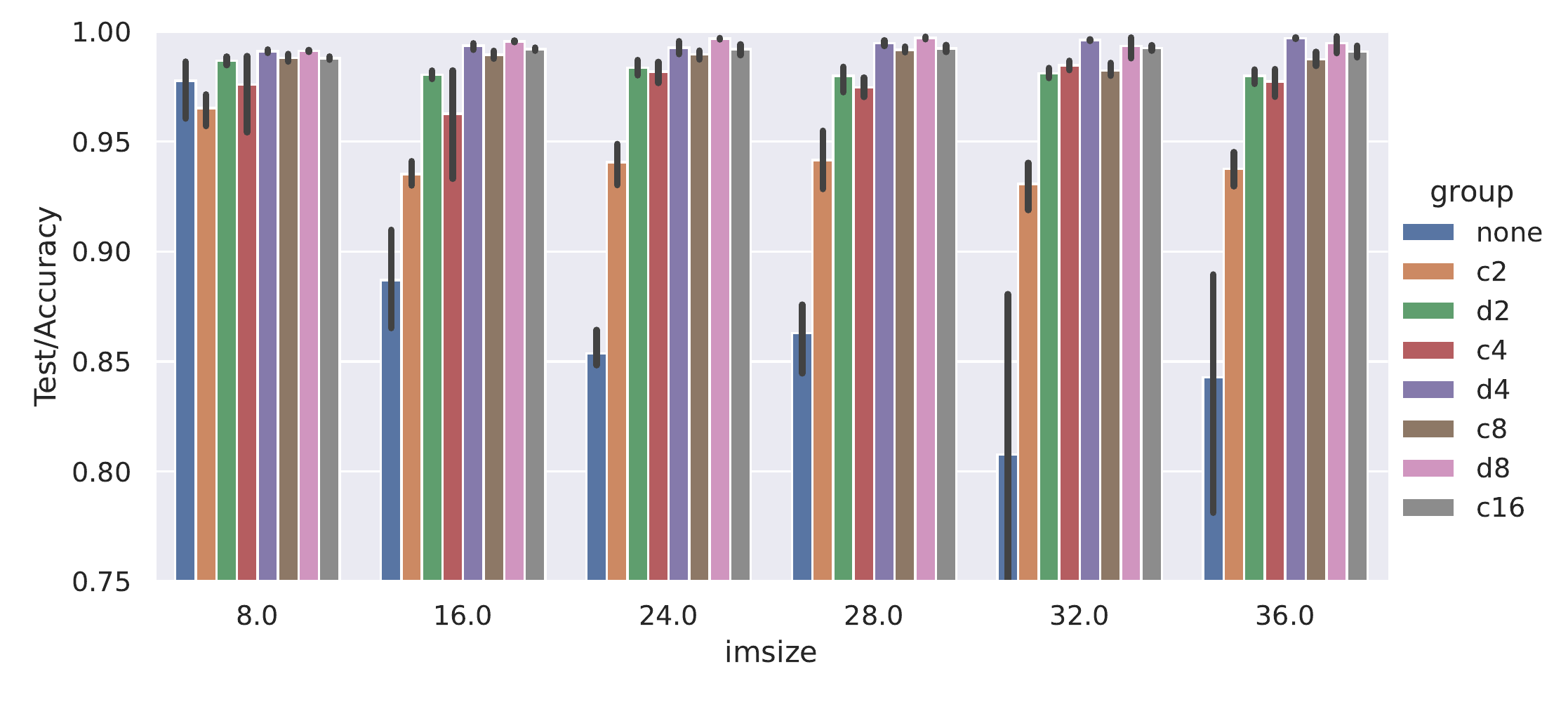}
}
\hfill
\subfigure{
\includegraphics[width=0.48\linewidth]{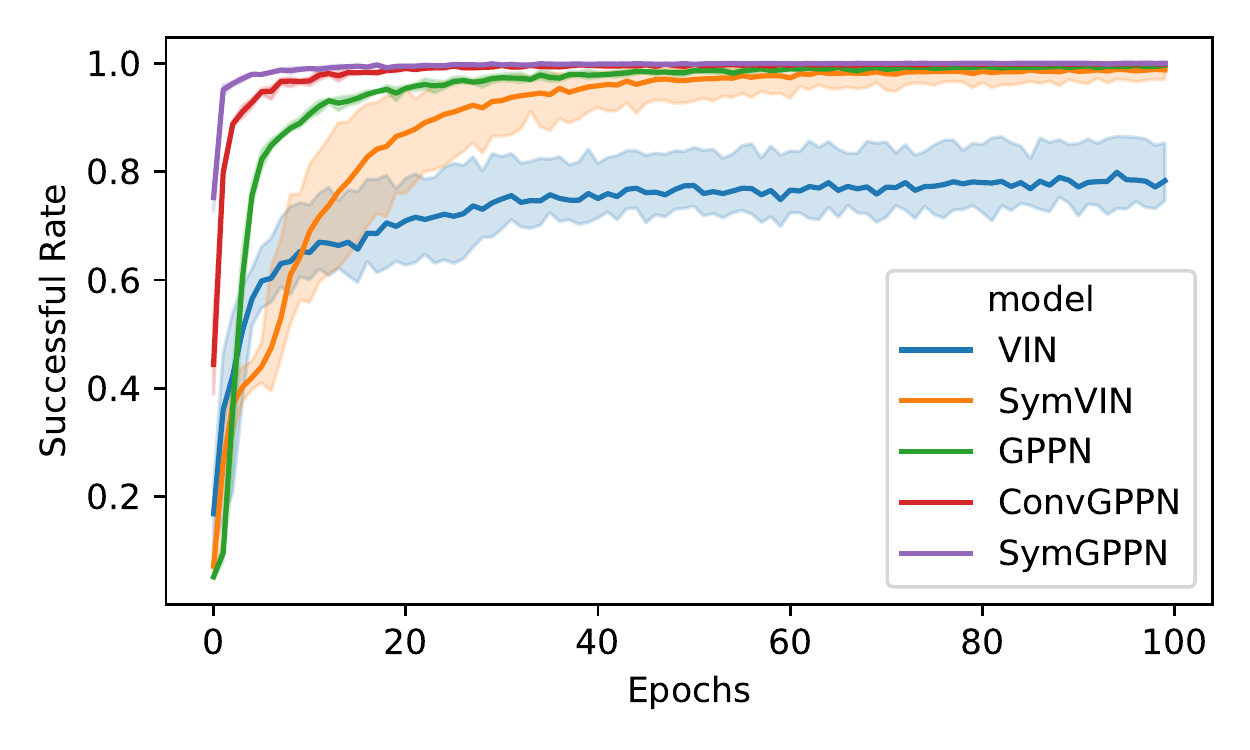}
} 
\vspace{-15pt}
\centering
\caption{
\textbf{(Left)} Accuracy evaluated on unseen test maps. The x-axis is the width of the map, and the y-axis is the accuracy, reported on every map size and every size and every chose symmetry group $G$.
\textbf{(Right)} Visual navigation $15 \times 15$ with 10K data.
}
\vspace{-15pt}
\label{fig:training-combine-group}
\end{figure}

\begin{figure}[t]
\centering
\subfigure{
\includegraphics[width=0.48\linewidth]{results/15x15-nav-training}
}
\hfill
\subfigure{
\includegraphics[width=0.48\linewidth]{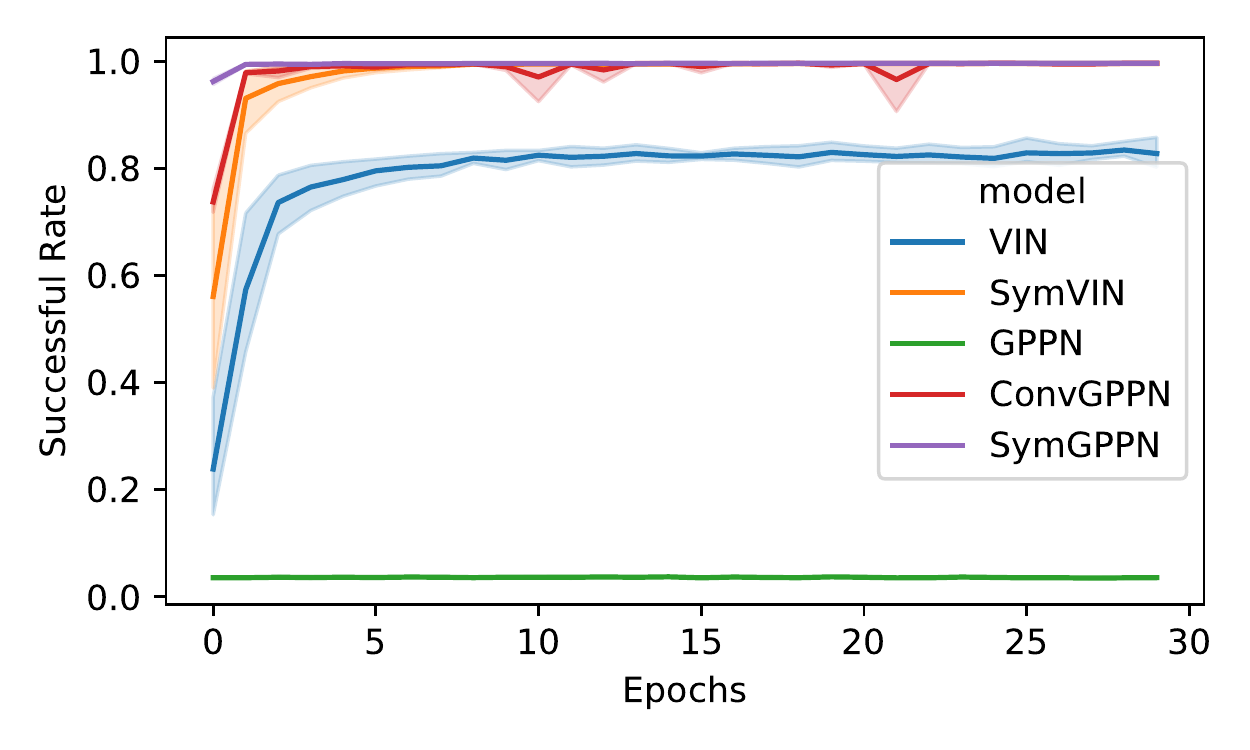}
} 
\vspace{-15pt}
\centering
\caption{
Training curves on \textbf{(Left)} 2D navigation with 10K of $15\times 15$ maps and on \textbf{(Right)} 2DoFs manipulation with 10K of $18\times 18$ maps in configuration space.
Faded areas indicate standard error.
}
\vspace{-15pt}
\label{fig:training1}
\end{figure}

\begin{figure}[!ht]
\centering
\subfigure{
\includegraphics[width=0.48\linewidth]{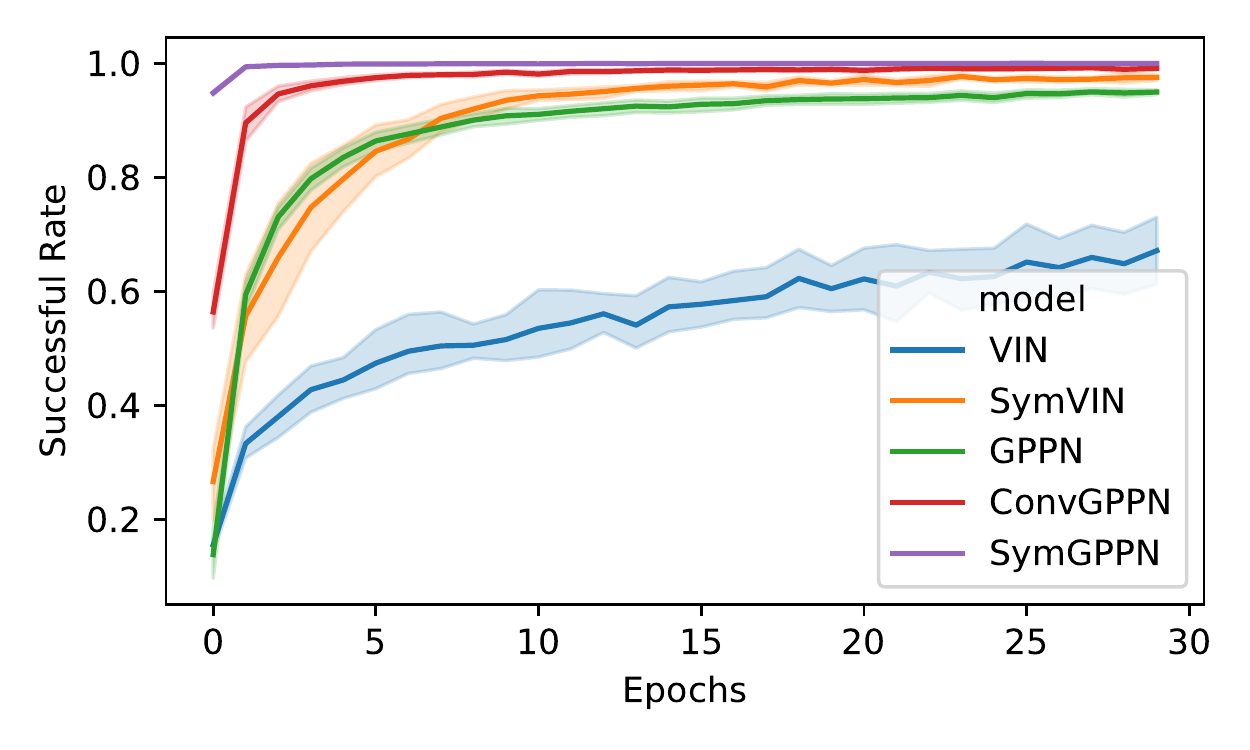}
}
\hfill
\subfigure{
\includegraphics[width=0.48\linewidth]{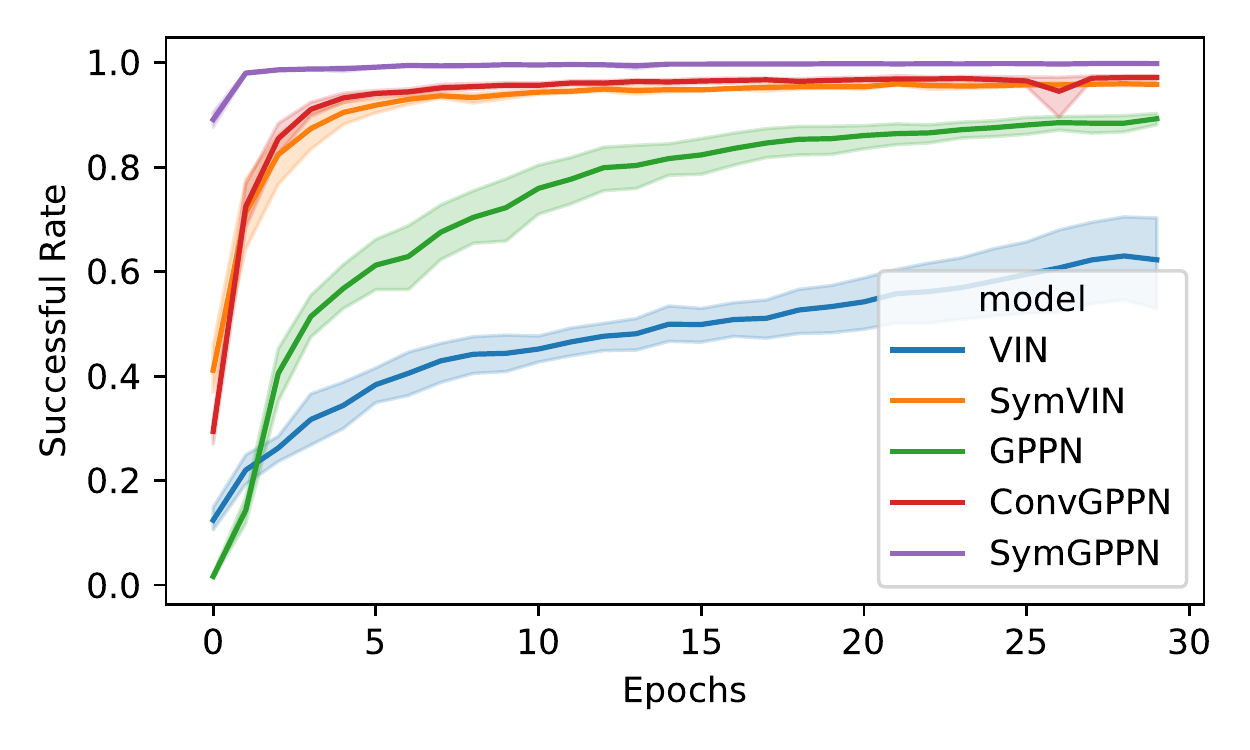}
} 
\vspace{-15pt}
\centering
\caption{
Training curves for \textbf{(Left)} $28 \times 28$ and \textbf{(Right)} $50 \times 50$.
}
\vspace{-15pt}
\end{figure}

\begin{figure}[!ht]
\centering
\subfigure{
\includegraphics[width=0.48\linewidth]{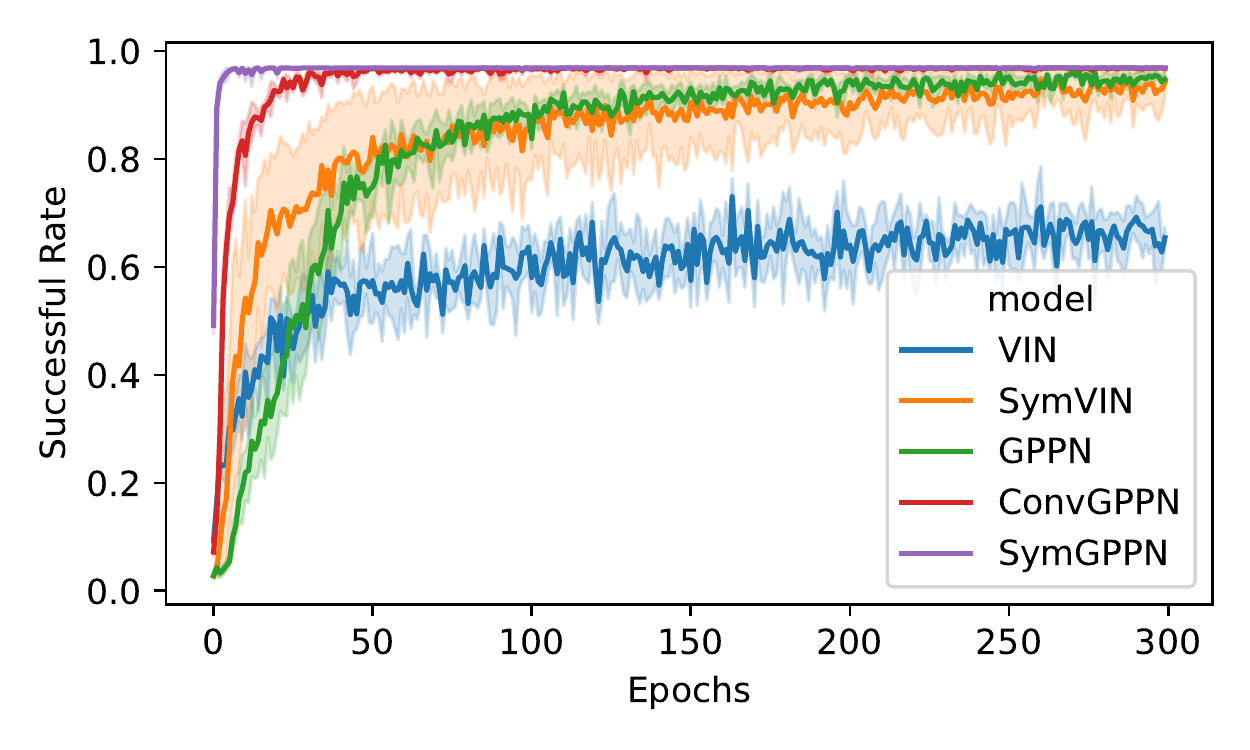}
}
\hfill
\subfigure{
\includegraphics[width=0.48\linewidth]{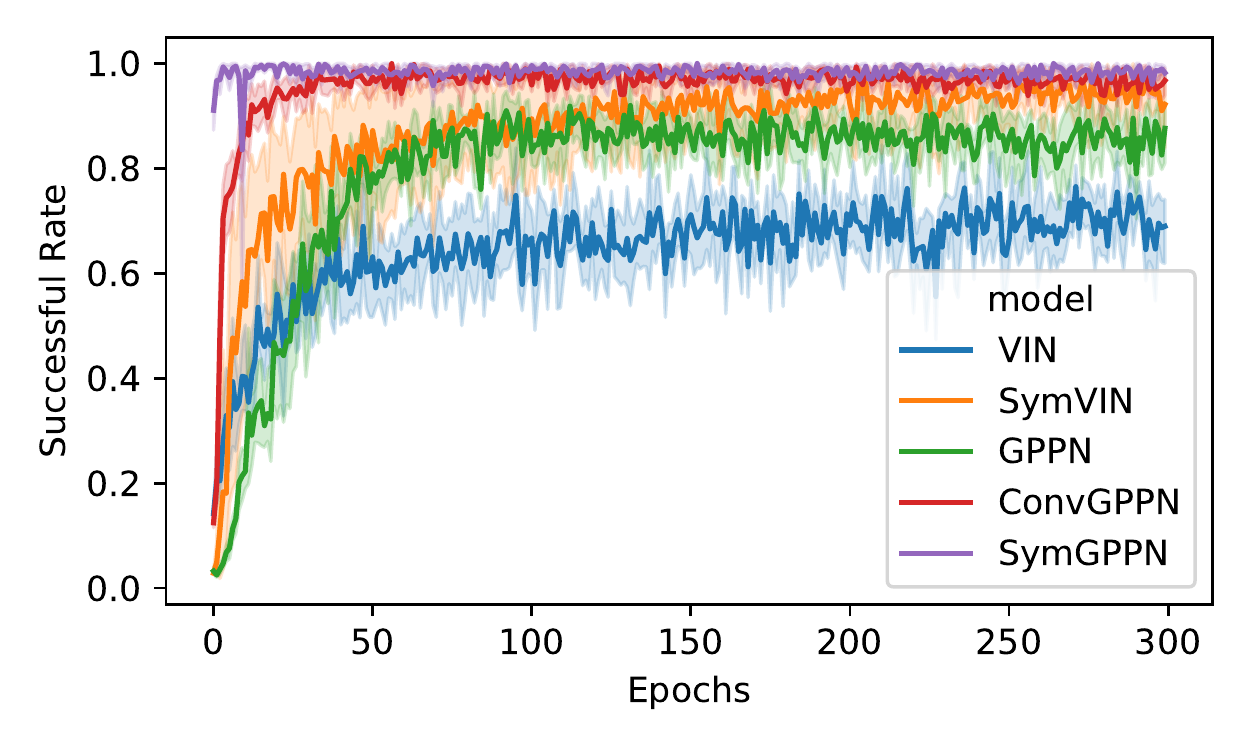}
} 
\vspace{-15pt}
\centering
\caption{
Training curves for $15 \times 15$ 2D navigation 1K data \textbf{(Left)} training and \textbf{(Right)} validation successful rate.
}
\vspace{-15pt}
\end{figure}

\begin{figure}[!ht]
\centering
\subfigure{
\includegraphics[width=0.48\linewidth]{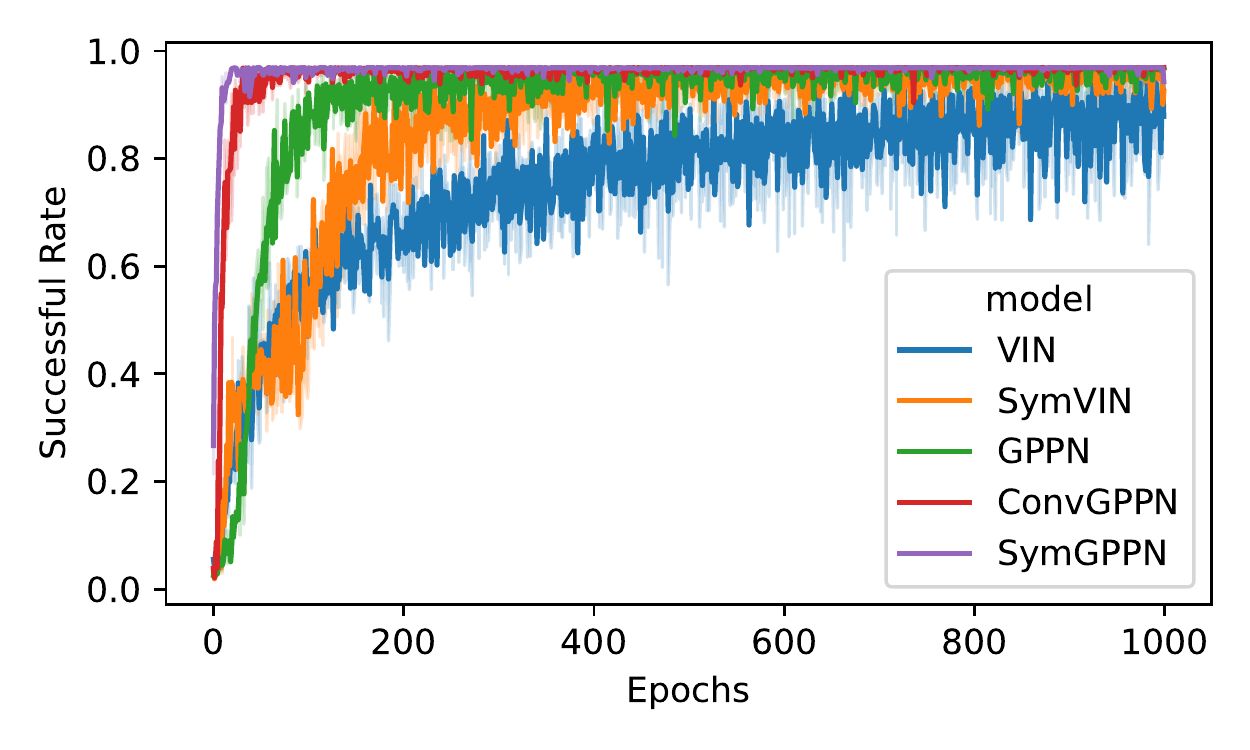}
}
\hfill
\subfigure{
\includegraphics[width=0.48\linewidth]{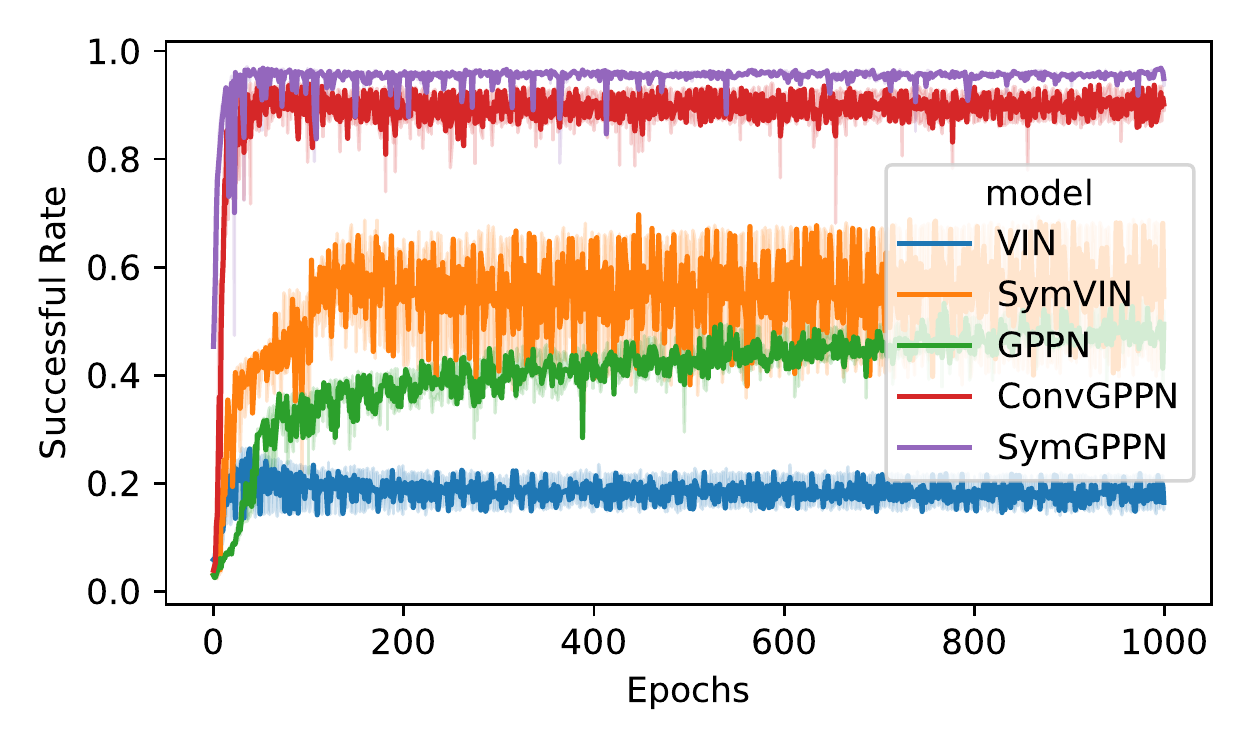}
} 
\vspace{-15pt}
\centering
\caption{
Training curves for $15 \times 15$ visual navigation 1K data \textbf{(Left)} training and \textbf{(Right)} validation successful rate.
}
\vspace{-15pt}
\label{fig:training-visual-navigation}
\end{figure}

\paragraph{Training efficiency with less data.}
Since the supervision is still dense, we experiment on training with even smaller dataset to experiment in more extreme setup.
We experiment how symmetry may affect the training efficiency of Symmetric Planners by further reducing the size of training dataset.
We compare on two environments: 2D navigation and visual navigation, with training/validation/test size of 1K/200/200, for all methods.

\paragraph{Choose of symmetry groups for navigation.}
One important benefit of partially equivariant network is that, we do not need to design the group representation of MDP action space $\rho_\mathcal{A}(g)$ for different group or action space.
Thus, we experiment several $G$-equivariant variants with different group equivariance: (discrete rotation group) $C_2,C_4,C_8,C_{16}$, and (dihedral group) $D_2,D_4,D_8$, all based on $E(2)$-steerable CNN~\citep{weiler_general_2021}.
For all intermediate layers, we use regular representations $\rho_{\mathrm{reg}}(g)$ of each group, followed by a final policy layer with non-equivariant $1\times 1$ convolution.

The results are reported in the Figure~\ref{fig:training-combine-group} (left).
We only compare VIN (denoted as "none" symmetry) against our \textsc{$\mathrm{E(2)}$-VIN} (other symmetry group option) on 2D navigation with $15 \times 15$ maps.

In general, the planners equipped with any $G$ group equivariance outperform the vanilla non-equivariant VIN, and $D_4$-equivariant steerable CNN performs the best on most map sizes.
{Additionally, since the environment has actions in 8 directions (4 diagonals), $C_8$ or $D_8$ groups seem to take advantage of that and have slightly higher accuracy on some map sizes, while $C_{16}$ is over-constrained compared to the true symmetry $G=D_4$ and be detrimental to performance.}
The non-equivariant VIN also experiences higher variance on large maps.

\paragraph{Choosing fiber representations.}
As we use steerable convolutions~\citep{weiler_general_2021} to build symmetric planners, we are free to choose the representations for feature fields, where intermediate equivariant convolutional layers will be equivariant between them $f(\rho_\mathrm{in}(g) x) = \rho_\mathrm{out}(g) f(x)$.
We found representations for some feature fields are critical to the performance: mainly $V: \mathcal{S} \to \mathbb{R}$ and $Q: \mathcal{S} \to \mathbb{R}^{|A|}$.

We use the best setting as default, and ablate every option.
As shown in Table~\ref{tab:fiber}, changing $\rho_V$ or $\rho_Q$ to trivial representation would result in much worse results.

\begin{table}[!t]
 \centering
     \caption{Fiber representations
     }
    \begin{tabular}{r|r}
    \toprule
        (Fiber representation) & SymVIN \\
      	\midrule
        Default & 98.45 \\
        Hidden: trivial to regular & 99.07 \\
        State-value $\rho_V$: regular to trivial & 63.08 \\
        Q-value $\rho_Q$: regular to trivial & 21.30 \\ %
        $\rho_Q$ and $\rho_V$: both trivial & 2.814 \\
   
        \bottomrule
    \end{tabular}
    \vspace{-15pt}
    \label{tab:fiber}
\end{table}

\paragraph{Fully vs. Partially equivariance for symmetric planners.}
One seemingly minor but critical design choice in our SymPlan networks is the choice of the final policy layer, which maps Q-values $\mathcal{S} \to \mathbb{R}^{|\mathcal{A}|}$ to policy logits $\mathcal{S} \to \mathbb{R}^{|\mathcal{A}|}$.
Fully equivariant is expected to perform better, but it has some points worth to mention.
(1) We experience unstable training at the beginning, where the loss can go up to $10^6$ in the first epoch, while we did not observe it in non-equivariant or partially equivariant counterparts.
However, this only slightly affects training.

In summary, we found even though fully equivariant version can perform slightly better in the best tuned setting, on average setting, partially equivariant version is more robust and the gap is much larger, as shown in the follow table, which an example of averaging over three choices of representations introduced in the last paragraph.
On average partially equivariant version is much better.
In our experiments, partially equivariant version also is easier to tune.

\begin{table}[!ht]
 \centering
     \caption{Fully vs. Partially equivariance
     }
    \begin{tabular}{r|r}
    \toprule
        (Equivariance) & SymVIN \\
      	\midrule
      	\textit{Partially} equivariant averaged over all representations & 91.04 \\
        \textit{Fully} equivariant averaged over all representations & 42.61 \\
       
        \bottomrule
    \end{tabular}
    \vspace{-15pt}
\end{table}

\paragraph{Generalization additional experiment for fixed $K$.}
For fixed $K$ setup in Figure~\ref{fig:res-generalization} (left), we keep number of iterations to be $K=30$ and kernel size $F=3$ for all methods.

\begin{figure}[!ht]
\centering
\subfigure{
\includegraphics[width=0.48\linewidth]{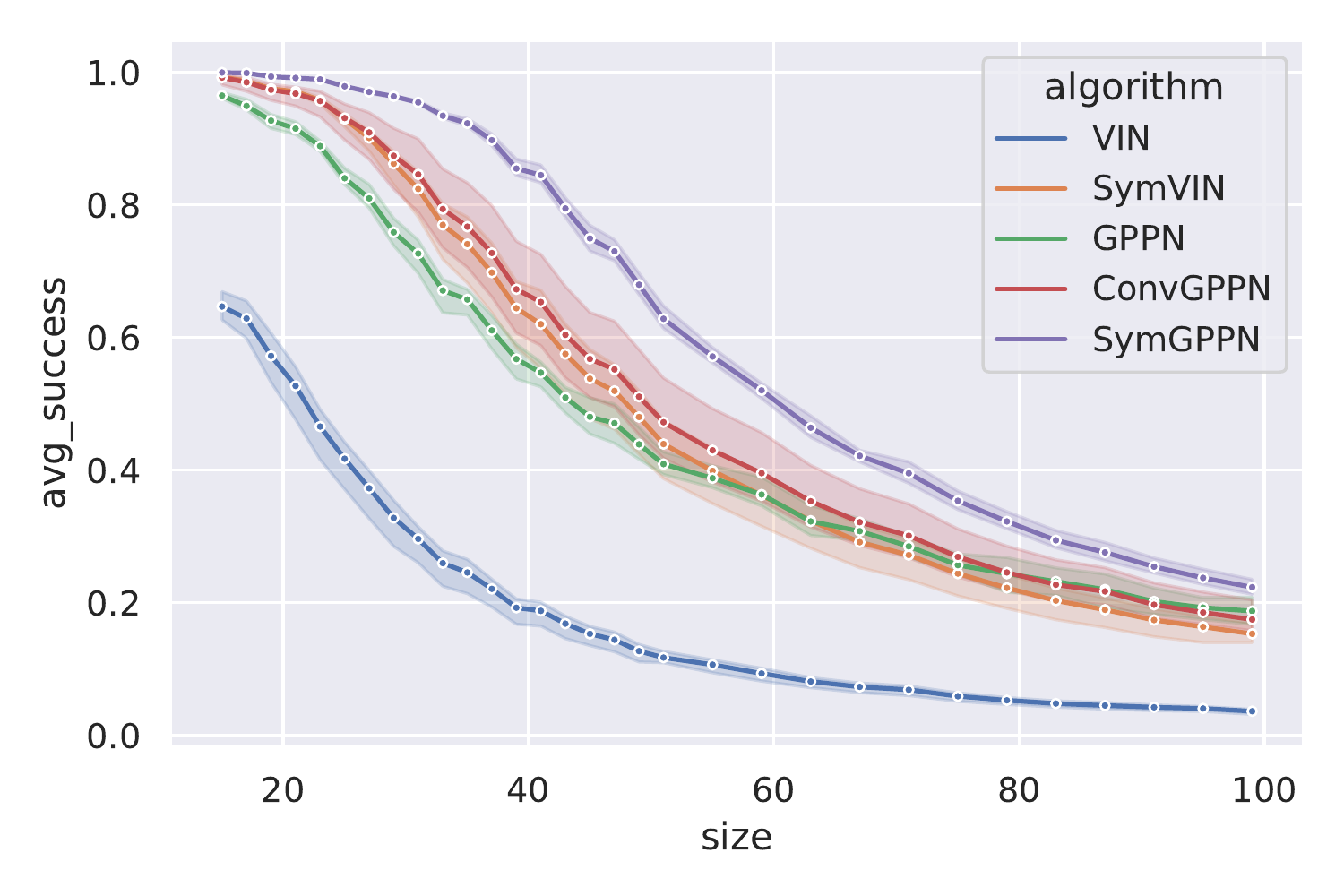}
}
\hfill
\subfigure{
\includegraphics[width=0.48\linewidth]{figures/generalization/figure-generalization-all-15x15train-15-100-varK}
}
\vspace{-15pt}
\centering
\caption{
Results for generalization on larger maps for all methods.
\textbf{(Left)} Fixed $K=30$ iterations. \textbf{(Left)} Variable $K$ iterations, where $K=\sqrt{2} \cdot M$ and $M$ is the generalization map size (x-axis).
}

\label{fig:res-generalization}
\end{figure}

For SymVIN, it far surpasses VIN for all sizes and preserves the gap throughout the evaluation.
Additionally, SymVIN has slightly higher variance across three random seeds (three separately trained models).

Among GPPN and its variants, SymGPPN significantly outperforms both GPPN and ConvGPPN.
Interestingly, ConvGPPN has sharper drop with map size than both SymGPPN and ConvGPPN and thus has increasingly larger gap with SymGPPN and finally even got surpassed by GPPN.
Across random seeds, the three trained models of ConvGPPN give unexpectedly high variance compared to GPPN and SymGPPN.

\end{document}